\newcommand{\ArchCondText}{Architecture condition}
\newtheorem{theorem}{Theorem}
\newtheorem{corollary}{Corollary}
\newtheorem{proposition}{Proposition}
\newtheorem{lemma}{Lemma}
\newtheorem{fact}{Fact}
\theoremstyle{definition}
\newtheorem{definition}{Definition}
\theoremstyle{remark}
\newtheorem*{remark}{Remark}
\newcommand\blfootnote[1]{
  \begingroup
  \renewcommand\thefootnote{}\footnote{#1}
  \addtocounter{footnote}{-1}
  \endgroup
}
\author{
Takeshi Teshima\textsuperscript{*}\\
The University of Tokyo, RIKEN\\
\texttt{teshima@ms.k.u-tokyo.ac.jp}\\
\And
Isao Ishikawa\textsuperscript{*}\\
Ehime University, RIKEN\\
\texttt{ishikawa.isao.zx@ehime-u.ac.jp}\\
\And
Koichi Tojo\\
RIKEN\\
\texttt{koichi.tojo@riken.jp}\\
\AND
Kenta Oono\\
The University of Tokyo\\
\texttt{kenta\_oono@mist.i.u-tokyo.ac.jp}\\
\And
Masahiro Ikeda\\
RIKEN\\
\texttt{masahiro.ikeda@riken.jp}\\
\And
Masashi Sugiyama\\
RIKEN, The University of Tokyo\\
\texttt{sugi@k.u-tokyo.ac.jp}
}
\newcommand{\acknowledgmentContent}{
The authors would like to thank the anonymous reviewers for the insightful discussions. We would also like to thank Dr. Taiji Suzuki, Associate Professor of the University of Tokyo, for his valuable comments and fruitful discussions on the distributional universality.
This work was supported by RIKEN Junior Research Associate Program.
TT was supported by Masason Foundation.
II and MI were supported by CREST:JPMJCR1913.
MS was supported by KAKENHI 20H04206.
}
\date{}
\newcommand{\status}[1]{}
\newcommand{\Restrict}[2]{#1\vert_{#2}}
\def \R {\mathbb{R}}
\def \Re {\mathbb{R}}
\newcommand{\Jac}[1]{D#1}
\newcommand{\vol}[1]{{\rm vol}(#1)}
\newcommand{\Aff}[3]{\Psi_{#1,#2,#3}}
\newcommand{\Na}{\mathbb{N}}
\newcommand{\ReD}{\Re^d}
\newcommand{\ReDminus}{\Re^{d-1}}
\newcommand{\DcRD}{\mathrm{Diff}^2_\mathrm{c}}
\newcommand{\Identity}{\mathrm{Id}}
\newcommand{\AutoRegressive}[3]{\AutoRegressiveFn{}_{#1,#2,#3}}
\newcommand{\AutoRegressiveFn}{h}
\newcommand{\upto}[2]{{#2_{{}\leq #1}}}
\newcommand{\from}[2]{{#2_{{}> #1}}}
\newcommand{\x}{\mbox{\boldmath $x$}}
\newcommand{\ba}{\mbox{\boldmath $a$}}
\newcommand{\CF}{CF}
\newcommand{\CFs}{\CF{}s}
\newcommand{\ACF}{ACF}
\newcommand{\ACFs}{\ACF{}s}
\newcommand{\CtwoDomainDiff}{{\mathcal{D}^2}}
\newcommand{\cptDomainDiff}{{\mathrm{Diff}^2_\mathrm{c}}}
\newcommand{\OneDimTriangularCmd}[1]{\mathcal{S}^{#1}_\mathrm{c}}
\newcommand{\CinftyOneDimTriangular}{\OneDimTriangularCmd{\infty}}
\newcommand{\ConeOneDimTriangular}{\OneDimTriangularCmd{1}}
\newcommand{\CzeroOneDimTriangular}{\OneDimTriangularCmd{0}}
\newcommand{\OneDimTriangular}{\CinftyOneDimTriangular}
\newcommand{\CtwoOneDimTriangular}{\OneDimTriangularCmd{2}}
\newcommand{\CrOneDimTriangular}{\OneDimTriangularCmd{r}}
\newcommand{\Triangular}{\mathcal{T}^\infty}
\newcommand{\LpKnorm}[1]{\left\Vert #1 \right\Vert_{p, K}}
\newcommand{\supRangenorm}[2]{\left\Vert #2 \right\Vert_{\sup, #1}}
\newcommand{\LpRangenorm}[2]{\left\Vert #2 \right\Vert_{p, #1}}
\newcommand{\inftyKnorm}[1]{\supRangenorm{K}{#1}}
\newcommand{\supKnorm}[1]{\supRangenorm{K}{#1}}
\newcommand{\Euclideannorm}[1]{\left\Vert #1 \right\Vert}
\newcommand{\opnorm}[1]{\left\Vert #1 \right\Vert_{\mathrm{op}}}
\newcommand{\ARFINNModel}{\mathcal{M}}
\newcommand{\ARFINNFlow}{\mathcal{G}}
\newcommand{\FACF}{\mathrm{ACF}}
\newcommand{\FDSF}{\mathrm{DSF}}
\newcommand{\FSoS}{\mathrm{SoS}}
\newcommand{\FLin}{\mathrm{Aff}}
\newcommand{\FGL}{\mathrm{GL}}
\newcommand{\Cinfty}{C^\infty}
\newcommand{\CcinftyRDminus}{C^\infty_c(\ReDminus)}
\newcommand{\ACFINNUniversalClass}{\mathcal{H}}
\newcommand{\FSACFcmd}[1]{#1\text{-}\mathrm{ACF}}
\newcommand{\FSACFH}{\FSACFcmd{\ACFINNUniversalClass}}
\newcommand{\INN}[1]{\mathrm{INN}_{#1}}
\newcommand{\FACFINN}{\INN{\FACF}}
\newcommand{\ARFINN}{CF-INN}
\newcommand{\ARFINNs}{\ARFINN{}s}
\newcommand{\ACFINN}{\(\FACFINN\)}
\newcommand{\FACFHINN}{\INN{\FSACFH}}
\newcommand{\ACFHINN}{\(\FACFHINN\)}
\newcommand{\INNModelGeneric}{\mathcal{M}}
\newcommand{\Indicator}[1]{\mathbf{1}_{#1}}
\newcommand{\psinstar}{\psi_n^*}
\newcommand{\psimstar}{\psi_n^*}
\title{Coupling-based Invertible Neural Networks Are Universal Diffeomorphism Approximators}
\begin{document}

\maketitle
\blfootnote{\textsuperscript{*}Equal contribution.}

\begin{abstract}
Invertible neural networks based on coupling flows (CF-INNs) have various machine learning applications such as image synthesis and representation learning. However, their desirable characteristics such as analytic invertibility come at the cost of restricting the functional forms. This poses a question on their representation power: are CF-INNs \emph{universal approximators} for invertible functions? Without a universality, there could be a well-behaved invertible transformation that the CF-INN can never approximate, hence it would render the model class unreliable. We answer this question by showing a convenient criterion: a CF-INN is universal if its layers contain affine coupling and invertible linear functions as special cases. As its corollary, we can affirmatively resolve a previously unsolved problem: whether normalizing flow models based on affine coupling can be \emph{universal distributional approximators}. In the course of proving the universality, we prove a general theorem to show the equivalence of the universality for certain diffeomorphism classes, a theoretical insight that is of interest by itself.
\end{abstract}

\section{Introduction}
Invertible neural networks based on coupling flows (\ARFINNs{}) are neural network architectures with invertibility by design \cite{PapamakariosNormalizing2019,KobyzevNormalizing2019}.
Endowed with the analytic-form invertibility and the tractability of the Jacobian, \ARFINNs{} have demonstrated their usefulness in various machine learning tasks such as
generative modeling \cite{DinhDensity2016a,KingmaGlow2018,OordParallel2018,KimFloWaveNet2019,ZhouDensity2019},
probabilistic inference \cite{pmlr-v89-bauer19a,WardImproving2019,pmlr-v70-louizos17a},
solving inverse problems \cite{ArdizzoneAnalyzing2018b}, and feature extraction and manipulation \cite{KingmaGlow2018,NalisnickHybrid2019,IzmailovSemisupervised2020,TeshimaFewshot2020}.
The attractive properties of \ARFINNs{} come at the cost of potential restrictions on the set of functions that they can approximate because they rely on carefully designed network layers.
To circumvent the potential drawback, a variety of layer designs have been proposed to construct \ARFINNs{} with high representation power, e.g.,
the affine coupling flow \cite{DinhNICE2014a,DinhDensity2016a,KingmaGlow2018,PapamakariosMasked2017a,KingmaImproved2016},
the neural autoregressive flow \cite{HuangNeural2018b,CaoBlock2019,HoFlow2018},
and the polynomial flow \cite{DBLP:conf/icml/JainiSY19},
each demonstrating enhanced empirical performance.

Despite the diversity of layer designs \cite{PapamakariosNormalizing2019,KobyzevNormalizing2019}, the theoretical understanding of the representation power of \ARFINNs{} has been limited.
Indeed, the most basic property as a function approximator, namely the \emph{universal approximation property} (or \emph{universality} for short) \cite{CybenkoApproximation1989}, has not been elucidated for \ARFINNs{}.
The universality can be crucial when \ARFINNs{} are used to learn an invertible transformation (e.g., feature extraction \cite{NalisnickHybrid2019} or independent component analysis \cite{TeshimaFewshot2020})
because, informally speaking, lack of universality implies that there exists an invertible transformation, even among well-behaved ones, that \ARFINN{} can never approximate, and it would render the model class unreliable for the task of function approximation.

To elucidate the universality of CF-INNs, we first prove a theorem to show the equivalence of the universality for certain diffeomorphism classes, which allows us to reduce the approximation of a general diffeomorphism to that of a much simpler one.
By leveraging this problem reduction, we show that \ARFINNs{} based on \emph{affine coupling flows} (ACFs; see Section~\ref{sec:preliminary}), one of the least expressive flow designs, are in fact universal approximators for a general class of diffeomorphisms.
The result can be interpreted as a convenient means to check the universality of a CF-INN: if the flow design can represent ACFs as special cases, then it is universal.

The difficulty in proving the universality of \ARFINNs{} lies in two complications. (1) Only function composition can be leveraged to make complex approximators (e.g., a linear combination is not allowed).
We overcome this complication by essentially decomposing a general diffeomorphism into much simpler ones, by using a structural theorem of differential geometry that elucidates the structure of a certain diffeomorphism group. Our equivalence theorem provides a way to take advantage of this technique implicitly.
(2) The flow layers tend to be inflexible due to the parametric restrictions.
As an extreme example, ACFs can only apply a uniform transformation along the transformed dimension, i.e., the parameter of the transformation cannot depend on the variable which undergoes the transformation. For ACFs, the reduction of the problem allows us to find an approximator with a clear outlook by approximating a step function.

\paragraph{Our contributions.}
Our contributions are summarized as follows.
\begin{enumerate}
  \item We present a theorem to show the equivalence of universal approximation properties for certain classes of functions. The result enables the reduction of the task of proving the universality for general diffeomorphisms to that for much simpler coordinate-wise ones.
  \item We leverage the result to show that some flow architectures, in particular even ACFs, can be used to construct a \ARFINN{} with the universality for approximating a fairly general class of diffeomorphisms.
  This result can be seen as a convenient criterion to check the universality of a \ARFINN: if the flow designs can reproduce ACF as a special case, it is universal.
\item As a corollary, we give an affirmative answer to a previously unsolved problem, namely the \emph{distributional universality} \cite{HuangNeural2018b,DBLP:conf/icml/JainiSY19} of ACF-based \ARFINNs{}.
\end{enumerate}
Our result is an interesting application of a deep theorem in differential geometry to investigate the representation power of a neural network architecture.

\section{Preliminary and goal}
\label{sec:preliminary}
In this section, we describe the models analyzed in this study, the notion of universality, and the goal of this paper.
We use \(\Re\) (resp. \(\Na\)) to represent the set of all real numbers (resp. positive integers). For a positive integer $n$, we define $[n]$ as the set $\{1,2,\ldots,n\}$.

\subsection{Invertible neural networks based on coupling flows}
Throughout the paper, we fix $d \in \Na$ and assume $d \geq 2$.
For a vector \(\x \in \ReD\) and \(k\in[d-1]\), we define \(\upto{k}{\x}\) as the vector \((x_1, \ldots, x_k)^\top \in \Re^k\) and \(\from{k}{\x}\) the vector \((x_{k+1}, \ldots, x_d)^\top \in \Re^{d-k}\).

\paragraph{Coupling flows.}
We define a coupling flow (\CF{}) \cite{PapamakariosNormalizing2019} $\AutoRegressive{k}{\tau}{\theta}$ by
\(\AutoRegressive{k}{\tau}{\theta}(\upto{k}{\x}, \from{k}{\x}) 
= (\upto{k}{\x}, \tau(\from{k}{\x}, \theta(\upto{k}{\x}))\),
where \(k\in[d-1]\), \(\theta\colon \Re^{k} \to \Re^l\) and \(\tau: \Re^{d-k} \times \Re^l \to \Re^{d-k}\) are maps, and \(\tau(\cdot, \theta(\bm{y}))\) is an invertible map for any $\bm{y}\in \Re^{k}$.

\paragraph{Affine coupling flows.}
One of the most standard types of \CFs{} is \emph{affine coupling flows} \cite{DinhDensity2016a, KingmaGlow2018, KingmaImproved2016, PapamakariosMasked2017a}.
We define an affine coupling flow \(\Aff{k}{s}{t}\colon \ReD \to \ReD\) by \[\Aff{k}{s}{t}(\upto{k}{\x}, \from{k}{\x}) = (\upto{k}{\x}, \from{k}{\x} \odot \exp(s(\upto{k}{\x})) + t(\upto{k}{\x})),\]
where \(k \in [d-1]\), $\odot$ is the Hadamard product, \(\exp\) is applied in an element-wise manner, and \(s,t:\mathbb{R}^{k}\to \mathbb{R}^{d-k}\) are maps typically parametrized by neural networks.

\paragraph{Single-coordinate affine coupling flow.}
Let $\ACFINNUniversalClass$ be a set of functions from $\R^{d-1}$ to \(\Re\). We define \emph{$\ACFINNUniversalClass$-single-coordinate affine coupling flows} by $\FSACFH:=\{\Aff{d-1}{s}{t}: s,t\in\ACFINNUniversalClass\}$, which is a subclass of ACFs.
It is the least expressive flow design appearing in this paper, but we show in Section~\ref{sec: main result 2} that it can form a \ARFINN{} with universality. We specify the requirements on \(\ACFINNUniversalClass\) later.

\paragraph{Invertible linear flows.}
We define the set of all affine transforms
by \(\FLin := \{\x \mapsto A\x + b: A \in \FGL, b \in \ReD\}\), where \(\FGL\) denotes the set of all regular matrices on \(\ReD\).

We consider the invertible neural network architectures constructed by composing flow layers:
\begin{definition}[\ARFINNs{}]
\label{def: INNM}
Let $\ARFINNFlow$ be a set consisting of invertible maps. We define the set of invertible neural networks based on \(\ARFINNFlow\) as
\[\INN{\ARFINNFlow} := \left\{ W_1\circ g_1\circ\cdots\circ W_n\circ g_n : \ n \in \Na, g_i\in \ARFINNFlow, W_i \in \FLin\right\}.\]
When \(\ARFINNFlow\) can represent the addition of a constant vector, we can obtain the same set of maps by replacing \(\FLin\) with \(\FGL\), which has been adopted by previous studies such as \citet{KingmaGlow2018}.
In fact, it is possible to use only 
the symmetric group $\mathfrak{S}_d$ that is the permutations of variables, instead of $\FLin$, 
when \(\ARFINNFlow\) contains \(\FSACFH\).
For details, see Appendix~\ref{sec:appendix:elementary matrix}.

\end{definition}

\subsection{Goal: the notions of universality and their relations}
Here, we clarify the notion of universality in this paper. First, we prepare some notation.
Let $p\in [1, \infty)$ and \(m, n \in \Na\). For a measurable mapping $f:\Re^m\to\Re^n$ and a subset $K\subset\Re^m$, we define
\[\LpKnorm{f} := \left(\int_K \Euclideannorm{f(x)}^p dx\right)^{1/p},\]
where $\Euclideannorm{\cdot}$ is the Euclidean norm of $\Re^n$.
We also define $\inftyKnorm{f}:=\sup_{x\in K}\Euclideannorm{f(x)}$.
\begin{definition}[\(L^p\)-/\(\sup\)-universality]
\label{def: Lp univ. approx.}
Let $\INNModelGeneric$ be a model which is a set of measurable mappings from $\Re^m$ to $\Re^n$.  
Let $p\in [1, \infty)$, and let $\mathcal{F}$ be a set of measurable mappings $f:U_f\rightarrow\Re^n$, where $U_f$ is a measurable subset of $\Re^m$ which may depend on $f$.
We say that 
$\INNModelGeneric$ is an \emph{$L^p$-universal approximator} or \emph{has the $L^p$-universal approximation property} for $\mathcal{F}$ if for any $f\in \mathcal{F}$, any $\varepsilon>0$, and any compact subset $K\subset U_f$, there exists $g\in \INNModelGeneric$ such that $\LpKnorm{f - g}<\varepsilon$.
We define the \emph{$\sup$-universality} analogously by replacing \(\LpKnorm{\cdot}\) with \(\inftyKnorm{\cdot}\).
\end{definition}
We also define the notion of distributional universality.
Distributional universality has been used as a notion of theoretical guarantee in the literature of normalizing flows, i.e., probability distribution models constructed using invertible neural networks \cite{KobyzevNormalizing2019}.
\begin{definition}[Distributional universality]
\label{def: dist. univ. approx.} 
Let $\INNModelGeneric$ be a model which is a set of measurable mappings from $\R^m$ to $\R^n$. 
We say that a model $\INNModelGeneric$ is a \emph{distributional universal approximator} or \emph{has the distributional universal approximation property} if, 
for any absolutely continuous\footnote{In this paper, we say a measure on the Euclidean space is \emph{absolutely continuous} when it is absolutely continuous with respect to the Lebesgue measure.} probability measure $\mu$ on $\R^m$ and any probability measure $\nu$ on $\R^n$, there exists a sequence $\{g_i\}_{i=1}^\infty\subset\INNModelGeneric$ such that $(g_i)_*\mu$ converges to $\nu$ in distribution as $i\rightarrow\infty$, where $(g_i)_*\mu:= \mu\circ g_i^{-1}$.
\end{definition}

If a model $\INNModelGeneric$ has the distributional universal approximation property, then it implies $\INNModelGeneric$ approximately transforms a known distribution, for example, the uniform distribution on $[0,1]^m$, into any probability measure $\mu$ on $\Re^n$, not only absolutely continuous but singular one.
There exists another convention that defines the distributional universality as a representation power for only absolutely continuous probability measures.
However, since absolutely continuous probability measures are dense in the set of all the probability measures, that convention is equivalent to ours. 
We include a proof for this fact in Lemma \ref{lem:appendix: abs aprox any} in Appendix \ref{appendix: from lp to dist}.

The different notions of universality are interrelated.
Most importantly, the \(L^p\)-universality for a certain function class implies the distributional universality (see Lemma~\ref{lem:body:distributional-universality}).
Moreover, if a model \(\INNModelGeneric\) is a \(\sup\)-universal approximator for \(\mathcal{F}\), it is also an $L^p$-universal approximator for \(\mathcal{F}\) for any $p \in [1, \infty)$.

\paragraph{Our goal}
Our goal is to elucidate the representation power of the \ARFINNs{} for some flow architectures \(\ARFINNFlow\) by proving the \(L^p\)-universality or \(\sup\)-universality of \(\INN{\ARFINNFlow}\) for a fairly large class of \emph{diffeomorphisms}, i.e., smooth invertible functions.
To prove universality, we need to construct a model \(g \in \INN{\ARFINNFlow}\) that attains the approximation error \(\varepsilon\) for given \(f\) and \(K\).

\section{Main results}
\label{sec:main-results}
In this section, we present the main results of this paper on the universality of \ARFINNs{}.
The first theorem provides a general proof technique to simplify the problem of approximating diffeomorphisms, and the second theorem builds on the first to show that the \ARFINNs{} based on the affine coupling are \(L^p\)-universal approximators.

\subsection{First main result: Equivalence of universal approximation properties}
Our first main theorem allows us to lift a universality result for a restricted set of diffeomorphisms to the universality for a fairly general class of diffeomorphisms by showing a certain equivalence of universalities.
By using the result to reduce the approximation problem, we can essentially circumvent the major complication in proving the universality of \ARFINNs{}, namely that only function composition can be leveraged to make complex approximators (e.g., a linear combination is not allowed).

First, we define the following classes of invertible functions. Our main theorem later reveals an equivalence of \(L^p\)-universality/\(\sup\)-universality for these classes.
\begin{definition}[\(C^2\)-diffeomorphisms: $\CtwoDomainDiff$]
\label{def: D}
We define $\CtwoDomainDiff$ as the set of all $C^2$-diffeomorphisms $f:U_f\rightarrow {\rm Im}(f)\subset\ReD$ 
, where $U_f \subset \ReD$ is an open set $C^2$-diffeomorphic to $\ReD$, which may depend on $f$.
\end{definition}
\begin{definition}[Triangular transformations: $\Triangular$]
\label{def: T}
We define $\Triangular$ as the set of all 
$C^\infty$-\emph{increasing triangular} mappings from \(\ReD\) to \(\ReD\). Here, a mapping $\tau=(\tau_1, \ldots, \tau_d):\ReD\to\ReD$ is increasing triangular if each \(\tau_k(\x)\) depends only on \(\upto{k}{\x}\) and is strictly increasing with respect to \(x_k\).
\end{definition}
\begin{definition}[Single-coordinate transformations: \(\CrOneDimTriangular\)]
\label{def: Ts}
We define $\CrOneDimTriangular$ as the set of all compactly-supported $C^r$-diffeomorphisms $\tau$ satisfying $\tau(\x)=(x_1, \ldots, x_{d-1}, \tau_d(\x))$, i.e., those which alter only the last coordinate. In this article, only \(r = 0, 2, \infty\) appear, and we mainly focus on \(\CinftyOneDimTriangular (\subset\Triangular)\).
Here, a bijection $\tau:\ReD \rightarrow\ReD$ is compactly supported if $\tau = \Identity$ outside some compact set.
\end{definition}
Among the above classes of invertible functions, $\CtwoDomainDiff$ is our main approximation target, and it is a fairly large class: it contains any \(C^2\)-diffeomorphism defined on the entire \(\ReD\), an open convex set, or more generally a star-shaped open set.
The class \(\Triangular\) relates to the distributional universality as we will see in Lemma~\ref{lem:body:distributional-universality}.
The class \(\OneDimTriangular\) is a much simpler class of diffeomorphisms that we use as a stepladder for showing the universality for \(\CtwoDomainDiff\).

Now we are ready to state the first main theorem. It reveals an equivalence among the universalities for \(\CtwoDomainDiff\), \(\Triangular\), and \(\OneDimTriangular\), under mild regularity conditions. We can use the theorem to lift up the universality for \(\OneDimTriangular\) to that for \(\CtwoDomainDiff\).
\begin{theorem}[Equivalence of Universality]
Let $p\in[1, \infty)$ and let $\ARFINNFlow$ be a set of invertible functions.
\begin{enumerate}[(A)]
    \item \label{main thm: A} If all elements of $\ARFINNFlow$ are piecewise $C^1$-diffeomorphisms, then the $L^p$-universal approximation properties of $\INN{\ARFINNFlow}$ for $\CtwoDomainDiff$, $\Triangular$ and $\OneDimTriangular$ are all equivalent.
    \item \label{main thm: B} If all elements of $\ARFINNFlow$ are locally bounded,
    then the $\sup$-universal approximation properties of $\INN{\ARFINNFlow}$ for $\CtwoDomainDiff$, $\Triangular$ and $\OneDimTriangular$ are all equivalent.
\end{enumerate}
\label{thm:body:diffeo-universal-equivalences}
\label{theorem:main:1}
\end{theorem}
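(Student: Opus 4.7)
The easy halves of both equivalences are immediate from the inclusions $\OneDimTriangular \subset \Triangular \subset \CtwoDomainDiff$ (an increasing triangular $C^\infty$ map is in particular a $C^\infty$-diffeomorphism onto an open subset of $\ReD$, which is trivially $C^2$-diffeomorphic to $\ReD$), so universality for the larger class automatically yields universality for the smaller. The real content is in the two nontrivial implications
\[
\text{Univ for }\OneDimTriangular
\ \Longrightarrow\
\text{Univ for }\Triangular
\ \Longrightarrow\
\text{Univ for }\CtwoDomainDiff.
\]
Both follow the same three-step pattern: decompose the target as a finite composition $f = h_N \circ \cdots \circ h_1$ of simpler pieces, approximate each $h_i$ individually via the weaker universality hypothesis, and glue the approximations using a ``composition is continuous'' lemma. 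The regularity hypotheses on $\ARFINNFlow$ enter only at this last step: local boundedness suffices for the $\sup$-case (to keep the images of compact sets in a fixed compact set), while piecewise $C^1$ gives the change-of-variables Jacobian bounds needed to propagate $L^p$ errors through subsequent factors.

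\textbf{From $\OneDimTriangular$ to $\Triangular$.} Given $\tau=(\tau_1,\dots,\tau_d)\in\Triangular$ and compact $K$, first replace $\tau$ by a compactly supported triangular map $\tilde\tau$ that agrees with $\tau$ on $K$ (multiply each $\tau_k$ by suitable smooth cutoffs and smoothly interpolate back to the identity). Then inductively decompose $\tilde\tau = T_d\circ\cdots\circ T_1$, where $T_k$ alters only the $k$-th coordinate; such a decomposition exists because strict monotonicity of each $\tilde\tau_j$ in its last argument lets one invert the earlier coordinates. Each $T_k$ is a compactly supported single-coordinate diffeomorphism, and conjugating by the permutation that swaps coordinates $k$ and $d$ (an element of $\FLin$) places it in $\OneDimTriangular$. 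Apply the hypothesized universality to each factor and close with the composition lemma.

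\textbf{From $\Triangular$ to $\CtwoDomainDiff$.} Let $f\in\CtwoDomainDiff$ with domain $U_f$ and compact $K\subset U_f$. Pulling back along the $C^2$-diffeomorphism $U_f\cong\ReD$ reduces to approximating a $C^2$-diffeomorphism of $\ReD$ on the image of $K$, and a further cutoff/extension reduces to a compactly supported $C^2$-diffeomorphism agreeing with $f$ on $K$. Here one invokes the differential-geometric structural theorem alluded to in the introduction: any such compactly supported $C^2$-diffeomorphism (automatically isotopic to the identity) can be written as a finite composition of triangular $C^2$-diffeomorphisms, possibly interspersed with elements of $\FLin$. A standard mollification upgrades each factor to a $C^\infty$ triangular map, landing in $\Triangular$. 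Applying $\Triangular$-universality to each factor and assembling via the composition lemma completes the argument.

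\textbf{Main obstacle.} The hardest step is the structural decomposition in the second implication. The naive local construction (straightening a diffeomorphism into triangular form via the implicit function theorem) only works for maps close to the identity, so some global input from diffeomorphism-group theory is required. One must simultaneously control two subtleties: (i) upgrading from $C^2$ to $C^\infty$ triangular factors to land in $\Triangular$, and (ii) ensuring the decomposition is compatible with composition-continuity, which requires each factor to be Lipschitz on the relevant compact set and the total number of factors to be finite. The composition-continuity lemma itself is routine in the $\sup$-norm but, in the $L^p$ case, requires extracting uniform Jacobian bounds on compact sets from the piecewise $C^1$ hypothesis on $\ARFINNFlow$; this must be done carefully so that the accumulated distortion from the earlier layers does not spoil later approximations.
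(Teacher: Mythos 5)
Your overall architecture (easy directions by inclusion; decompose the target into simpler factors; approximate each factor; glue with a composition-compatibility lemma, with piecewise-$C^1$/local boundedness entering only there) matches the paper's, but the load-bearing step is asserted rather than proved. In your reduction from $\Triangular$ to $\CtwoDomainDiff$ you write that ``one invokes the differential-geometric structural theorem'' to conclude that a compactly supported $C^2$-diffeomorphism can be written as a finite composition of triangular $C^2$-diffeomorphisms interspersed with elements of $\FLin$. The structural theorem in question is the simplicity of the group $\DcRD$ (Herman--Thurston--Epstein--Mather); it makes no statement whatsoever about triangular factorizations, so it cannot be cited for this decomposition. The bridge from simplicity to the factorization is precisely the core of the paper's proof and is absent from yours: one first shows that the set of finite compositions of \emph{flow endpoints} (endpoints $\phi(1)$ of additive continuous maps $\phi\colon[0,1]\to\DcRD$ with $\phi(0)=\Identity$) is a nontrivial normal subgroup, hence all of $\DcRD$ by simplicity; then the additivity and the continuity of $\phi$ in the Whitney topology let one write each endpoint as an $n$-fold composition of $\phi(1/n)$, which is nearly-$\Identity$ in the sense $\sup_x\opnorm{\Jac{g}(x)-I}<1$ for large $n$; only then does the local/implicit-function-theorem argument apply, because $\opnorm{\Jac{g}-I}<1$ forces all trailing principal minors of $\Jac{g}$ to be nonzero, which is what drives the induction producing single-coordinate factors and permutations. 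You correctly diagnose in your ``Main obstacle'' paragraph that the local triangularization only works near the identity and that ``some global input from diffeomorphism-group theory is required,'' but you never supply that input or explain how it yields nearly-$\Identity$ factors; even the parenthetical ``automatically isotopic to the identity'' is itself a nontrivial fact (connectedness of $\DcRD$) and, on its own, still does not produce the quantitative Jacobian bound needed for the triangular induction.

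Two smaller points. First, your detour through $\Triangular$ is logically unnecessary (only $\OneDimTriangular\Rightarrow\CtwoDomainDiff$ is needed, since $\OneDimTriangular\subset\Triangular\subset\CtwoDomainDiff$), and your cutoff construction there --- multiplying the components $\tau_k$ by smooth cutoffs to force compact support --- does not obviously preserve invertibility or monotonicity in the last variable; the paper avoids this by handling compact supportedness once, via a flow-based extension lemma that writes $f|_K=W\circ h|_K$ with $h\in\DcRD$ and $W\in\FLin$. Second, your treatment of the $\CtwoDomainDiff\rightsquigarrow\DcRD$ reduction and of the composition lemma (sup case by uniform continuity on compact sets, $L^p$ case by change-of-variables/Jacobian control for piecewise $C^1$-diffeomorphisms, with a linear-growth condition on the factors) is in the right spirit and agrees with the paper, but these are the routine parts; as written, the proposal does not constitute a proof of the theorem because the decomposition theorem it hinges on is exactly what remains to be established.
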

The proof is provided in Appendix~\ref{sec:appendix:universality-proof}.
For the definitions of the piecewise $C^1$-diffeomorphisms and the locally bounded maps, see Appendix~\ref{sec:appendix:piecewise diffeo}.
The regularity conditions in (\ref{main thm: A}) and (\ref{main thm: B}) assure that function composition within \(\ARFINNFlow\) is compatible with approximations (see Appendix~\ref{appendix: compatibility of approximation and composition} for details), and they are usually satisfied, e.g., continuous maps are locally bounded.

If one of the two universality properties in Theorem~\ref{thm:body:diffeo-universal-equivalences} is satisfied, the model is also a distributional universal approximator.
Let \(p \in [1, \infty)\), and we have the following.
\begin{lemma}
\label{lem:body:distributional-universality}
An \(L^p\)-universal approximator for \(\Triangular\) is a distributional universal approximator.
\end{lemma}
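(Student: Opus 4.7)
The plan is to combine a smoothing of $\nu$ with the Knothe--Rosenblatt triangular transport and then invoke the $L^p$-universality hypothesis, extracting a diagonal sequence at the end.

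First, I would reduce to the case where $\nu$ has a $C^\infty$ strictly positive density on all of $\ReD$. Convolving with a shrinking Gaussian, $\nu_k := \nu * \mathcal{N}(0, k^{-1} I)$, yields such measures with $\nu_k \to \nu$ weakly; since the weak topology on probability measures on $\ReD$ is metrizable (e.g., by the L\'evy--Prokhorov metric), a diagonal argument reduces the task to approximating one such $\nu_k$ at a time.

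Second, for such a $\nu_k$ and the fixed absolutely continuous $\mu$, the Knothe--Rosenblatt rearrangement supplies an increasing triangular Borel map $T\colon \ReD \to \ReD$ with $T_*\mu = \nu_k$. I would then smooth $T$ into an element of $\Triangular$: convolving each coordinate $T_j(x_1,\ldots,x_j)$ with an even nonnegative mollifier in only its $j$ arguments preserves both the triangular structure and monotonicity in $x_j$, and adding $\varepsilon_\ell\, x_j$ with $\varepsilon_\ell \downarrow 0$ enforces strict monotonicity. This yields $\tau_{k,\ell} \in \Triangular$ converging to $T$ in $L^p_{\mathrm{loc}}$; absolute continuity of $\mu$ upgrades Lebesgue-$L^p$ convergence on compacts to convergence in $\mu$-probability, and the portmanteau theorem then gives $(\tau_{k,\ell})_*\mu \to T_*\mu = \nu_k$ weakly as $\ell \to \infty$.

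Third, by tightness of $\mu$, for any $\delta>0$ I pick a compact $K$ with $\mu(K) > 1 - \delta$. The $L^p$-universality of $\INNModelGeneric$ for $\Triangular$ supplies $g_{k,\ell,m} \in \INNModelGeneric$ with $\LpKnorm{g_{k,\ell,m} - \tau_{k,\ell}} < 1/m$. Markov's inequality again promotes this to convergence in $\mu$-probability on $K$, and the portmanteau theorem yields weak convergence $(g_{k,\ell,m})_*\mu \to (\tau_{k,\ell})_*\mu$ as $m \to \infty$ (the slack $2\delta \Vert f \Vert_\infty$ outside $K$ is made arbitrarily small by taking $\delta \to 0$). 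A diagonal extraction across $k$, $\ell$, $m$, $\delta$ then produces $g_i \in \INNModelGeneric$ with $(g_i)_*\mu \to \nu$ weakly, which is distributional universality.

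The main obstacle I anticipate is the smoothing step: the Knothe--Rosenblatt map has only as much intrinsic regularity as the conditional CDFs of $\mu$ allow, so the mollification must be coordinate-aware and carefully preserve triangular monotonicity --- in particular each $T_j$ must be convolved only in its own first $j$ input variables, and the perturbation needs to land inside $\Triangular$ as defined (smooth, strictly increasing in each coordinate). The remaining ingredients (weak density of smooth-positive-density measures, the Markov--portmanteau passage from $L^p$-convergence to weak convergence of pushforwards, and diagonal extraction in the metrizable weak topology) are routine.
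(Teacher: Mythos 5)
Your overall architecture (smooth the target measure, build a triangular transport, approximate it on a compact set carrying most of the mass of $\mu$, then pass to weak convergence by a Markov/portmanteau argument and a diagonal extraction) is sound and parallels the paper's proof, but the step you yourself flag as the main obstacle --- mollifying the Knothe--Rosenblatt map $T$ of the pair $(\mu,\nu_k)$ into an element of $\Triangular$ --- contains a genuine gap, and it is precisely where the paper's argument diverges from yours. When $\mu$ is merely absolutely continuous, $T$ is only defined $\mu$-almost everywhere, and because $\nu_k$ has full support, $T_j(x_{\le j})$ diverges to $\pm\infty$ as $x_j$ approaches the boundary of the conditional support of $\mu$. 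Already for $d=1$, with $\mu$ uniform on $[0,1]$ and $\nu_k$ standard Gaussian, one has $T(x)=\Phi^{-1}(x)$ and $T(x)\to+\infty$ as $x\to 1^-$: there is no finite-valued increasing extension of $T$ to all of $\Re$, so the coordinate-wise convolution you propose is not even well defined on $\ReD$, and in general $T$ need not be locally $L^p$ with respect to Lebesgue measure (off the support of $\mu$ it has no canonical values at all, and its moments under $\mu$ are those of $\nu_k$, which need not exist). Consequently the claim that the mollification ``yields $\tau_{k,\ell}\in\Triangular$ converging to $T$ in $L^p_{\mathrm{loc}}$'' is unjustified; the paper's mollification tool, Lemma~\ref{smoothing lemma}, requires a last-increasing, locally $L^p$ function defined on all of $\ReD$, which the Knothe--Rosenblatt map of a general absolutely continuous $\mu$ is not.

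The paper closes exactly this hole by mollifying \emph{both} measures with a strictly positive smooth kernel: once $\mu_{t_0}$ and $\nu_{t_0}$ have $C^\infty$ everywhere-positive densities, Lemma~\ref{existence of transformation for probability measures} (the Hyv\"arinen-type construction) produces $C^\infty$ triangular diffeomorphisms onto $(0,1)^d$, whose composition is a genuine element of $\Triangular$ with $T_*\mu_{t_0}=\nu_{t_0}$, so no mollification of a transport map is ever needed; the error from smoothing $\mu$ is then absorbed into the bounded-Lipschitz estimate together with the mass outside $K$ and the $L^1$-approximation error. To repair your route you would either have to smooth $\mu$ as well (at which point you recover the paper's proof) or truncate $T$ away from the boundary of the support of $\mu$ and show the truncation perturbs the pushforward only on a set of small $\mu$-measure --- an extra idea not present in your sketch. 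The remaining ingredients of your argument (weak density of smoothed measures, the absolute-continuity/Markov/portmanteau passage from $L^p$ closeness to weak convergence of pushforwards, the $\delta\to 0$ treatment of the mass outside $K$, and the diagonal extraction) are correct and essentially match the paper's computation.
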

Since \(\sup\)-universality implies \(L^p\)-universality, Lemma~\ref{lem:body:distributional-universality} can be combined with both cases of (\ref{main thm: A}) and (\ref{main thm: B}) in Theorem~\ref{thm:body:diffeo-universal-equivalences}.
The proof is based on the existence of a triangular map connecting two absolutely continuous distributions \cite{BogachevTriangular2005}. See Appendix~\ref{appendix: from lp to dist} for details.
Note that the previous studies \cite{DBLP:conf/icml/JainiSY19,HuangNeural2018b} have discussed the distributional universality of some flow architectures essentially via showing the \(\sup\)-universality for \(\Triangular\).
Lemma~\ref{lem:body:distributional-universality} clarifies that the weaker notion of \(L^p\)-universality is sufficient for the distributional universality, which can also apply to the case (\ref{main thm: A}) in Theorem~\ref{thm:body:diffeo-universal-equivalences}.

\paragraph{Application to previously proposed \ARFINN{} architectures.}
Theorem~\ref{thm:body:diffeo-universal-equivalences} can upgrade a previously known \(\sup\)-universality for \(\Triangular\) of a \ARFINN{} architecture to that for \(\CtwoDomainDiff\).
As examples, \emph{deep sigmoidal flows} (DSF; a version of neural autoregressive flows \cite{HuangNeural2018b}) and \emph{sum-of-squares polynomial flows} (SoS; \cite{DBLP:conf/icml/JainiSY19}) can both yield \ARFINNs{} with the \(\sup\)-universal approximation property for \(\CtwoDomainDiff\).
We provide the proof in Appendix~\ref{appendix:sec:examples}.
See Table~\ref{tbl:example-architectures} for a summary of the results.
\begin{table}
  \caption{
  \ARFINN{} instances analyzed in this work (
  \emph{Model}: the considered \ARFINN{} architecture.
  \emph{Flow type}: the flow layer architecture.
  \emph{Universality (this)}: the universal approximation property that this work has shown.
  \emph{Universality (prev.)}: previously claimed universal approximation property.
  )
  Our proof techniques are easy to apply to analyze the universality of various \ARFINN{} architectures.
  }
  \label{tbl:example-architectures}
  \centering
  \begin{tabular}{lllll}
    \toprule
    Model  & Flow type     & Universality (this) &  Universality (prev.)  \\
    \midrule
    \(\INN{\FSACFH}\) & Affine coupling \cite{DinhDensity2016a, KingmaGlow2018, KingmaImproved2016, PapamakariosMasked2017a} & \(L^p\)-universal & - \\
    \(\INN{\FDSF}\) & Deep sigmoidal flow \cite{HuangNeural2018b} & \(\sup\)-universal & Distributional \cite{HuangNeural2018b}  \\
    \(\INN{\FSoS}\) & Sum-of-squares polynomial flow \cite{DBLP:conf/icml/JainiSY19} & \(\sup\)-universal & Distributional \cite{DBLP:conf/icml/JainiSY19} \\
    \bottomrule
  \end{tabular}

\end{table} See Section~\ref{sec:related-work:normalizing-flow-guarantee} for a comparison with previous theoretical analyses on normalizing flows.

\subsection{Second main result: \(L^p\)-universal approximation property of $\INN{\FSACFH}$}
\label{sec: main result 2}
Our second main theorem reveals the \(L^p\)-universality of $\INN{\FSACFH}$ for \(\CzeroOneDimTriangular\) (hence for \(\CinftyOneDimTriangular\)), which can be combined with Theorem~\ref{thm:body:diffeo-universal-equivalences} to show its \(L^p\)-universality for \(\CtwoDomainDiff\).
We define \(\CcinftyRDminus\) as the set of all compactly-supported \(\Cinfty\) maps from \(\ReDminus\) to \(\Re\).
\begin{theorem}[$L^p$-universality of \(\INN{\FSACFH}\)]
\label{prop:body:acfinn-Lp}\label{theorem:main:2}
Let \(p \in [1, \infty)\).
Assume \(\ACFINNUniversalClass\) is a \(\sup\)-universal approximator for \(\CcinftyRDminus\) and that it consists of piecewise \(C^1\)-functions.
Then, \(\INN{\FSACFH}\) is an \(L^p\)-universal approximator for \(\CzeroOneDimTriangular\).
\end{theorem}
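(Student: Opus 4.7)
The plan is to approximate any $\tau \in \CzeroOneDimTriangular$ on a given compact $K$ in $L^p$-norm by an element of $\INN{\FSACFH}$. The essential difficulty is that a single $\FSACFH$-layer $\Aff{d-1}{s}{t}$ acts only affinely on the last coordinate, $x_d \mapsto x_d \exp(s(\upto{d-1}{\x})) + t(\upto{d-1}{\x})$, with coefficients that do not depend on $x_d$; composing such layers alone keeps the map affine in $x_d$. Nonlinear-in-$x_d$ behaviour therefore has to come from interleaving ACF layers with invertible linear maps that \emph{mix} $x_d$ into the prefix coordinates, so that the conditioning input of later $s,t$ depends implicitly on the original $x_d$.

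Guided by the paper's hint about approximating a step function, my first move is to approximate $\tau$ in $L^p(K)$ by a target $\tau^{(N)}$ whose last coordinate is piecewise linear in $x_d$ with $N$ pieces, with breakpoints and slopes depending smoothly on $\upto{d-1}{\x}$; such $\tau^{(N)}$ exist and converge in $L^p$ as $N\to\infty$ because $\tau_d(\upto{d-1}{\x},\cdot)$ is continuous, strictly increasing, and agrees with the identity outside a compact set. Writing $\tau^{(N)}$ as a composition of $N$ ``elementary bends'' of the form $x_d \mapsto x_d + c(\upto{d-1}{\x}) \cdot \max(0, x_d - a(\upto{d-1}{\x}))$, it suffices to realise each elementary bend in $\INN{\FSACFH}$ up to small $L^p$-error.

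Each elementary bend I would realise by (i) an invertible linear map that adds $x_d$ into some prefix coordinate $x_j$, (ii) an $\FSACFH$-layer whose $s$ (or $t$) is a smooth approximation of the desired step (or ramp) profile in the mixed coordinate, and (iii) an unmixing linear map, followed by a compensating $\FSACFH$-layer. The required smooth profiles lie in $\CcinftyRDminus$ (after multiplication by a bump cutoff to enforce compact support), and by the hypothesised $\sup$-universality of $\ACFINNUniversalClass$ for $\CcinftyRDminus$ they are uniformly approximable by elements of $\ACFINNUniversalClass$. The deviation of the smooth profile from an ideal step contributes error only in a thin $x_d$-slab around the breakpoint $x_d=a(\upto{d-1}{\x})$, and this region has $L^p$-mass tending to zero as the smoothing tightens.

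The main obstacle I expect is controlling the side-effects of step~(iii) on the non-last coordinates: the composition mix$\circ$ACF$\circ$unmix generically leaves residual shifts of the form $x_d(1 - e^{s}) - t$ in the coordinate $x_j$, which must be cancelled so that the resulting map alters only $x_d$. I plan to handle this by pairing each bend block with an additional $\FSACFH$-layer (with a suitable permutation of coordinates to route corrections to $x_j$ rather than $x_d$, and with $t$ chosen to cancel the residual to leading order), localising any leftover perturbation to the same thin slab where the step smoothing already has support. A triangle-inequality bookkeeping then bounds the total $L^p$-error by a fixed multiple of $N$ times the per-block smoothing error, so the proof concludes by first choosing $N$ to make the piecewise-linear reduction accurate and then sharpening the per-block smoothings to drive the total error below $\varepsilon$.
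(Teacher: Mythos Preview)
Your reduction to piecewise-linear-in-$x_d$ targets and then to elementary bends $x_d \mapsto x_d + c(\upto{d-1}{\x})\max(0, x_d - a(\upto{d-1}{\x}))$ is sound, but the crucial step --- realising a single such bend by mix/ACF/unmix --- does not work as described, and this is where the real difficulty sits. After the linear mix $x_j \mapsto x_j + x_d$, the $\FSACFH$ layer conditions on $(x_1,\ldots,x_{j-1}, x_j+x_d, x_{j+1},\ldots,x_{d-1})$ and sees only the \emph{sum} $x_j+x_d$; a step profile in this variable produces a bend at $x_d$ equal to some constant minus $x_j$, not at a fixed location, let alone at $a(\upto{d-1}{\x})$ (which depends on the very $x_j$ you have just overwritten). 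Your residual bookkeeping is also off: if $s$ takes the value $\log(1+c)$ on an entire half-space (as it must for the slope to change there), the unmix residual $x_d(1-e^s)-t = -cx_d - t$ is of order $x_d$ on a set of full measure, not confined to the thin smoothing slab; and the proposed correction layer cannot cancel it, since its conditioning contains neither $x_j+x_d$ (needed to recompute $s,t$) nor the original $x_d$.

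The paper avoids exactly this obstruction by not attempting $\x$-dependent bends at all. It first uses a discontinuous staggering ACF $\psi_n^*$ (whose shift is a step function of $\upto{d-1}{\x}$, approximated in $L^p$) to reduce the general $\tau$ to a map whose last-coordinate transformation $v_n$ depends on $x_d$ \emph{alone}; this coordinate-wise-independent case is then handled by a four-map construction $\psi_3\circ\psi_2\circ\psi_1\circ\psi_0$ that uses $x_{d-1}$ as a scratch register together with a small parameter $\delta$ (a first-order Taylor trick), which works precisely because $y\mapsto\log v_n'(y)$, $y\mapsto v_n(y)-y$, and $y\mapsto y-v_n^{-1}(y)$ do not involve $x_{d-1}$ and hence lie in $\CcinftyRDminus$ after a permutation. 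The step-function approximation you allude to is deployed in the staggering reduction, not in building bends. To rescue your route you would need either an analogous reduction to the one-variable case, or a genuinely new mechanism for producing an $x_d$-nonlinearity at an $\x$-controlled location without leaving a macroscopic residue in a prefix coordinate.
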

We provide a proof in Appendix~\ref{appendix: theorem 2 proof}.
For the definition of piecewise \(C^1\)-functions, see Appendix~\ref{sec:appendix:piecewise diffeo}.
Theorem~\ref{theorem:main:2} can be combined with Theorem~\ref{theorem:main:1} to show that \(\INN{\FSACFH}\) is an \(L^p\)-universal approximator for \(\CtwoDomainDiff\).
Examples of \(\ACFINNUniversalClass\) satisfying the condition of Theorem~\ref{prop:body:acfinn-Lp} include multi-layer perceptron models with the \emph{rectifier linear unit} (ReLU) activation \cite{LeCunDeep2015} and a linear-in-parameter model with smooth universal kernels \cite{MicchelliUniversal2006a}.
The result can be interpreted as a convenient criterion to check the universality of a \ARFINN{}: if the flow architecture \(\ARFINNFlow\) contains \ACFs{} (or even just \(\FSACFH\) with sufficiently expressive \(\ACFINNUniversalClass\)) as special cases, then \(\INN{\ARFINNFlow}\) is an \(L^p\)-universal approximator for \(\CtwoDomainDiff\).

By combining Theorem~\ref{theorem:main:1}, Theorem~\ref{theorem:main:2}, and Lemma~\ref{lem:body:distributional-universality}, we can affirmatively answer a previously unsolved problem \citep[p.13]{PapamakariosNormalizing2019}: the distributional universality of \ARFINN{} based on \ACFs{}.
\begin{theorem}[Distributional universality of \(\INN{\FSACFH}\)]
Under the conditions of Theorem~\ref{theorem:main:2}, \(\INN{\FSACFH}\) is a distributional universal approximator.
\label{cor:body:acfinn-dist-universal}
\label{cor:body:afinn-distribution-universal}
\end{theorem}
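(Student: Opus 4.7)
The plan is to assemble the conclusion from the three already-established results in the natural order: Theorem~\ref{theorem:main:2} provides $L^p$-universality on the simplest class $\CzeroOneDimTriangular$, Theorem~\ref{theorem:main:1}(A) lifts this to $L^p$-universality on the triangular class $\Triangular$, and Lemma~\ref{lem:body:distributional-universality} then promotes $L^p$-universality on $\Triangular$ to the distributional universal approximation property. No new analytic content is needed; the work is in verifying the hypotheses of each step and confirming the chain of class inclusions.

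Concretely, I would first invoke Theorem~\ref{theorem:main:2} to obtain that $\INN{\FSACFH}$ is an $L^p$-universal approximator for $\CzeroOneDimTriangular$. Since every compactly-supported $C^\infty$ single-coordinate diffeomorphism is in particular a compactly-supported $C^0$ single-coordinate diffeomorphism, we have $\OneDimTriangular = \CinftyOneDimTriangular \subset \CzeroOneDimTriangular$, so $\INN{\FSACFH}$ is also $L^p$-universal for $\OneDimTriangular$. Next, I would verify the regularity hypothesis of Theorem~\ref{theorem:main:1}(A) for the flow family $\ARFINNFlow = \FSACFH$: since by assumption every $s, t \in \ACFINNUniversalClass$ is piecewise $C^1$, each $\Aff{d-1}{s}{t}$ is a piecewise $C^1$-diffeomorphism, and the affine maps in $\FLin$ used to interleave the flow layers in Definition~\ref{def: INNM} are smooth diffeomorphisms. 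Thus Theorem~\ref{theorem:main:1}(A) applies and upgrades the $L^p$-universality on $\OneDimTriangular$ to $L^p$-universality on $\Triangular$.

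Finally, Lemma~\ref{lem:body:distributional-universality} converts $L^p$-universality on $\Triangular$ into distributional universality, yielding the claim. The main (mild) obstacle is simply the hypothesis-checking for Theorem~\ref{theorem:main:1}(A), namely that the piecewise-$C^1$-diffeomorphism property is preserved under the structure of $\INN{\FSACFH}$; this is immediate from the closure of piecewise $C^1$-maps under composition with smooth diffeomorphisms, so the result really is a direct corollary of the three cited statements rather than a separate theorem requiring its own technical machinery.
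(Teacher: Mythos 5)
Your proposal is correct and follows essentially the same route as the paper, which obtains the result precisely by combining Theorem~\ref{theorem:main:2} (giving $L^p$-universality for $\CzeroOneDimTriangular$, hence for $\CinftyOneDimTriangular$), Theorem~\ref{theorem:main:1}(A) (whose piecewise-$C^1$ hypothesis holds for $\FSACFH$ since $s,t\in\ACFINNUniversalClass$ are piecewise $C^1$), and Lemma~\ref{lem:body:distributional-universality}. Your explicit hypothesis-checking is a faithful expansion of the paper's one-line combination of these three results.
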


\paragraph{Implications of Theorem~\ref{theorem:main:2} and Theorem~\ref{cor:body:acfinn-dist-universal}.}
Theorem~\ref{theorem:main:2} implies that, if \(\ARFINNFlow\) contains \(\FSACFH\) as special cases, then \(\INN{\ARFINNFlow}\) is an \(L^p\)-universal approximator for \(\CtwoDomainDiff\). In light of Theorem~\ref{cor:body:acfinn-dist-universal}, it is also a distributional universal approximator, hence we can confirm the theoretical plausibility for using it for normalizing flows. Such examples of \(\ARFINNFlow\) include the \emph{nonlinear squared flow} \cite{ZieglerLatent2019a}, \emph{Flow++} \cite{HoFlow2018}, the \emph{neural autoregressive flow} \cite{HuangNeural2018b}, and the \emph{sum-of-squares polynomial flow} \cite{DBLP:conf/icml/JainiSY19}. 
The result may not immediately apply to the typical \emph{Glow} \cite{KingmaGlow2018} models for image data that use the 1x1 invertible convolution layers and convolutional neuralnetworks for the coupling layers.
However, the Glow architecture for non-image data \cite{ArdizzoneAnalyzing2018b,TeshimaFewshot2020} can be interpreted as \(\INN{\ARFINNFlow}\) with ACF layers, hence it is both an \(L^p\)-universal approximator for \(\CtwoDomainDiff\) and a distributional universal approximator.

\section{Proof outline}
In this section, we outline the proof ideas of our main theorems to provide an intuition for the constructed approximator and derive reusable insight for future theoretical analyses.

\subsection{Proof outline for Theorem~\ref{thm:body:diffeo-universal-equivalences}}
\label{subsection: outline of the proof of theorem 1}

Here, we outline the equivalence proof of Theorem~\ref{theorem:main:1}.
For details, see Appendix~\ref{sec:appendix:universality-proof}.
Since we have $\CinftyOneDimTriangular\subset \Triangular\subset \CtwoDomainDiff$, it is sufficient to prove that the universal approximation properties for $S^\infty$ implies that for $\CtwoDomainDiff$.
Note that the proofs do not change for \(L^p\)-universality and \(\sup\)-universality.

Therefore, we focus on describing the reduction from \(\CtwoDomainDiff\) to \(\CinftyOneDimTriangular\).
Since the approximation of $\CtwoOneDimTriangular$ can be reduced to that of \(\CinftyOneDimTriangular\) by a standard mollification argument (see Appendix~\ref{sec:appendix:Dc2 to Sinfty}), we show a reduction from \(\CtwoDomainDiff\) to \(\CtwoOneDimTriangular\):
\begin{theorem}
\label{thm: equivalence S2-D2}
For any element $f\in\CtwoDomainDiff$ and compact subset $K\subset U_f$, there exist $n\in \Na$, $W_1,\dots, W_n \in \FLin$, and $\tau_1,\dots,\tau_n\in \CtwoOneDimTriangular$ such that
$f(x)=W_1\circ\tau_1\circ\cdots\circ W_n\circ\tau_n(x)$ for all $x\in K$.
\end{theorem}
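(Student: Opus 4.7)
The plan is to invoke a structural theorem for the group of compactly-supported $C^2$-diffeomorphisms of $\R^d$, namely that every orientation-preserving such diffeomorphism admits a finite factorization into diffeomorphisms each altering only a single coordinate. Granting this, the theorem reduces to two tasks: (a) convert $f|_K$, up to an affine preprocessing $A\in\FLin$, into a compactly-supported $C^2$-diffeomorphism of $\R^d$; and (b) turn each ``change only the $k$-th coordinate'' factor into an element of $\CtwoOneDimTriangular$ by conjugation with the transposition $P_k\in\FLin$ that swaps coordinates $k$ and $d$, which absorbs naturally into the $W_i$ slots of the target factorization.

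\textbf{Reduction to the compactly-supported case.} Pick an open $V$ with $K\subset V\subset \bar V\subset U_f$, fix $x_0\in K$, and set $A\colon x\mapsto f(x_0)+\Jac{f}(x_0)(x-x_0)\in\FLin$, so that $g:=A^{-1}\circ f$ is tangent to the identity at $x_0$; precompose with a reflection in $\FLin$ if $\det\Jac{f}<0$ to secure orientation preservation. Cover $K$ by finitely many small balls $B_1,\dots,B_N\subset V$ on each of which $g$ is $C^2$-close to an affine map after local re-centering. A fragmentation argument then expresses $g$ on $K$ as a finite composition $h_1\circ\cdots\circ h_N$, with each $h_j$ a near-identity $C^2$-diffeomorphism supported (up to affine adjustment) in $B_j$. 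A standard smooth cutoff extends each $h_j$ to a compactly-supported $C^2$-diffeomorphism $\hat h_j$ of $\R^d$ agreeing with $h_j$ on $B_j$; injectivity is preserved because $h_j$ is $C^2$-close to the identity.

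\textbf{Structure theorem and main obstacle.} Applying the structure theorem to each $\hat h_j$ yields a factorization $\hat h_j = \sigma_{j,1}\circ\cdots\circ\sigma_{j,m_j}$ with each $\sigma_{j,i}$ altering only its $k_{j,i}$-th coordinate. Conjugating those with $k_{j,i}\neq d$ by $P_{k_{j,i}}\in\FLin$ brings them into $\CtwoOneDimTriangular$; concatenating all such factors across the fragmentation and reinstating $A$ and the reflection produces a finite composition of the required form $W_1\circ\tau_1\circ\cdots\circ W_n\circ\tau_n$ equal to $f$ on $K$. The main obstacle is the structure theorem itself, a genuinely nontrivial input from the theory of diffeomorphism groups. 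I would prove it by combining (i) a fragmentation lemma that breaks any compactly-supported $C^2$-diffeomorphism isotopic to the identity into pieces supported in small coordinate balls, with (ii) a local straightening argument where a near-identity $C^2$-diffeomorphism on a ball is decomposed into single-coordinate shears by iteratively solving implicit equations of the form $f_k(x_1,\dots,x_{k-1},\phi,x_{k+1},\dots,x_d)=x_k$ for $\phi$. The delicate point is maintaining $C^2$-norm control through these successive implicit solutions, which is what forces the fragmentation scale to be taken small; this is also precisely what guarantees injectivity of the cutoff-extensions used in the reduction step above.
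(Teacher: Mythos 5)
Your overall skeleton (reduce to a compactly supported diffeomorphism up to an affine map, decompose into near-identity pieces, then split each near-identity piece into single-coordinate maps via implicit solving and absorb transpositions into the $W_i$) parallels the paper, and your last step is essentially the paper's inductive decomposition, which indeed works precisely because near-identity in the Jacobian sense forces the trailing principal minors of $\Jac{g}$ to be nonzero. The gap is in how you produce the near-identity, compactly supported pieces. Your ``fragmentation argument'' is doing all the real work and is not justified: (i) fragmentation into pieces supported in a prescribed cover requires as input an isotopy of $g$ to $\Identity$ inside $\DcRD$ (equivalently, connectedness of the group), which is itself a nontrivial fact; the paper sidesteps constructing such an isotopy by invoking the Herman--Thurston--Epstein--Mather simplicity theorem for $\DcRD$ to write any element as a finite composition of flow endpoints. (ii) Even granting fragmentation, a diffeomorphism supported in a tiny ball is only close to the identity in displacement, not in the $C^1$ sense: its Jacobian can be arbitrarily far from $I$ (a strong twist inside a small ball), so ``fragmentation scale small'' does not deliver the condition $\sup_x\opnorm{\Jac{g}(x)-I}<1$ that your implicit-function straightening (and the paper's Lemma on trailing principal minors) requires. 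The paper obtains this Jacobian control not from support size but from the additivity of flow endpoints: $\phi(1)=\phi(1/n)^{\circ n}$ and $\phi(1/n)\to\Identity$ in the Whitney topology, including derivatives. (iii) The specific claim that $g$ equals, on $K$, a composition of \emph{one} near-identity factor per covering ball is false in general for $g$ far from the identity; the number of factors must grow with how far $g$ is from $\Identity$, and the existence of any such factorization is exactly the hard content.

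Relatedly, your reduction to the compactly supported case leans on the same unproven decomposition (you cut off the $h_j$'s, whose existence is the issue), whereas the paper performs this reduction independently: it uses an extension theorem of Bernard et al.\ together with an Alexander-type flow construction (solving an ODE for $F(x,t)=f(tx)/t$ with a cutoff on the vector field) to produce $h\in\DcRD$ and $W\in\FLin$ with $f=W\circ h$ on $K$. If you want to avoid the simplicity theorem, a more promising elementary route is to retain the isotopy produced by that flow construction and subdivide it in time to get $C^1$-near-identity compactly supported factors; but as written, your proposal has a genuine gap at the step where near-identity, ball-supported factors are asserted to exist.
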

Behind the scenes, Theorem~\ref{thm: equivalence S2-D2} reduces \(\CtwoDomainDiff\) to \(\CtwoOneDimTriangular\) in four steps:
\begin{equation*}\begin{aligned}
\CtwoDomainDiff
\rightsquigarrow \DcRD
\rightsquigarrow \text{Flow endpoints}
\rightsquigarrow \text{nearly-$\Identity$}
\rightsquigarrow \CtwoOneDimTriangular
\end{aligned}\end{equation*}
Here, \(A \rightsquigarrow B\) (\(A\) \emph{is reduced to} \(B\)) indicates that the universality for \(A\) follows from that for \(B\), and \(\Identity\) denotes the identity map.
We explain each reduction step in the below.

\textit{From \(\CtwoDomainDiff\) to \(\DcRD\). }
We consider a special subset $\DcRD\subset \CtwoDomainDiff$, which is the group of \emph{compactly-supported} $C^2$-diffeomorphisms on \(\ReD\) whose group operation is functional composition.
Here, a bijection $f:\ReD \rightarrow\ReD$ is compactly supported if $f = \Identity$ outside some compact set.
Proposition~\ref{prop: extension lemma} below reduces the problem of the universality for $\CtwoDomainDiff$ to that for $\DcRD$.
\begin{proposition}
\label{prop: extension lemma}
For any $f\in\CtwoDomainDiff$ and any compact subset $K\subset U_f$, there exist $h\in \DcRD$, $W \in \FLin$, such that for all $\x\in K$, $f(\x)=W\circ h(\x)$.
\end{proposition}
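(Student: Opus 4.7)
The plan is to take $W\in\FLin$ to be either the identity or an affine reflection (used only to match orientation, since any $h\in\DcRD$ must satisfy $\det Dh>0$ everywhere by continuity on the connected set $\ReD$ together with $h=\Identity$ off a compact set), and to construct $h$ by extending $f$ restricted to a ball-shaped compact neighborhood of $K$ in $U_f$ to a compactly-supported $C^2$-diffeomorphism of the ambient space $\ReD$.

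First I would reduce to the orientation-preserving case: $\det Df$ has constant sign on the connected domain $U_f$, so if $f$ is orientation-reversing I replace it by $R^{-1}\circ f$ where $R$ is the affine reflection $(x_1,x_2,\ldots,x_d)\mapsto(-x_1,x_2,\ldots,x_d)$, and absorb $R$ into the final $W$. Next, using the hypothesis that $U_f$ is $C^2$-diffeomorphic to $\ReD$ via some $\phi:\ReD\to U_f$, I would build a compact thickening $K'$ of $K$ inside $U_f$ which is diffeomorphic to a closed ball: pick a closed ball $\overline{B_R}\subset\ReD$ with $\phi^{-1}(K)\subset B_R$ and set $K':=\phi(\overline{B_R})$, so that $K\subset\mathrm{int}(K')\subset K'\subset U_f$ and $\phi$ gives a $C^2$-diffeomorphism between $\overline{B_R}$ and $K'$.

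The crux is then to extend $f|_{K'}:K'\to f(K')$ to a compactly-supported $C^2$-diffeomorphism of $\ReD$. Both $K'$ and $f(K')$ are compact subsets of $\ReD$ diffeomorphic to a closed $d$-ball, and $f|_{K'}$ is an orientation-preserving $C^2$-embedding. I would apply an isotopy-extension argument: the space of orientation-preserving $C^2$-embeddings of the closed $d$-ball into $\ReD$ is path-connected (translate $K'$ near $f(K')$, linearly interpolate the corresponding parametrizations, using that the group of invertible matrices with positive determinant is connected), so there is a $C^2$-isotopy $e_t:K'\to\ReD$ with $e_0=\iota_{K'}$, $e_1=f|_{K'}$, and $\bigcup_{t\in[0,1]}e_t(K')$ contained in a fixed compact set. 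The isotopy extension theorem then produces an ambient $C^2$-isotopy $H_t:\ReD\to\ReD$, equal to $\Identity$ off some compact set, with $H_t|_{K'}=e_t$; setting $h:=H_1\in\DcRD$ and $W=\Identity$ (or $W=R$ from the orientation-reduction step) gives $f|_K=W\circ h|_K$.

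The step I expect to be the main obstacle is this isotopy extension while preserving $C^2$-regularity and compact support simultaneously. The standard proof integrates a compactly supported time-dependent vector field realising $e_t$ and one must verify that the resulting ODE flow remains a $C^2$-diffeomorphism of $\ReD$ with compact support, which involves some careful bookkeeping with a cutoff that does not break the embedding property. A more hands-on alternative I would try if the general machinery proves cumbersome is to choose $W$ to be the affine linearization of $f$ at a chosen base point of $K$ together with a scaling so that $W^{-1}\circ f$ is close to $\Identity$ on $K'$, and then glue $W^{-1}\circ f$ with $\Identity$ outside a larger compact set via a smooth bump function; this bypasses the general isotopy extension theorem but requires a quantitative estimate to ensure the glued map stays invertible.
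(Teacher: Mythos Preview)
Your main approach via isotopy extension is sound and would yield a correct proof, but it differs from the paper's route in an instructive way. The paper does not separate out orientation or invoke the isotopy extension theorem as a black box. Instead it first extends $f|_K$ to a $C^2$-diffeomorphism $F:\ReD\to\ReD$ of the whole space, citing a result of Bernard (Theorem~3.3 in \cite{BernardExpressing2018}). It then chooses $W\in\FLin$ to be the \emph{full} affine linearization of $F$ at the origin---not merely a reflection---so that $W^{-1}\circ F$ fixes $0$ with Jacobian the identity there. The compactly supported $h$ is produced by an explicit flow (Lemma~\ref{extension lemma for function on ball}): one cuts off the time-dependent vector field associated with the radial homotopy $(x,t)\mapsto f(tx)/t$ and integrates. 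This is precisely the step you flag as the main obstacle, and it is indeed the heart of the matter. Because $W$ absorbs the entire linear part $\Jac{F}(0)$, orientation is handled automatically; your separate reflection step is unnecessary. Conversely, your route trades the citation to Bernard's theorem for the (equally standard) isotopy extension theorem.

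Two points in your sketch need tightening. First, your argument for path-connectedness of orientation-preserving $C^2$-embeddings of the closed ball into $\ReD$ (``linearly interpolate the corresponding parametrizations'') fails as written: a convex combination of embeddings is generally not an embedding. The correct argument is the radial homotopy $e_t(x)=e(tx)/t$ after normalizing so that $e(0)=0$ and $\Jac{e}(0)=I$---which is exactly what the paper's Lemma~\ref{extension lemma for function on ball} carries out. Second, your ``hands-on alternative'' of gluing $W^{-1}\circ f$ to $\Identity$ via a bump is closer in spirit to the paper, but naive bump-gluing does not preserve invertibility regardless of quantitative estimates; the flow construction is what makes it work, and there is no shortcut around it.
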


\textit{From \(\DcRD\) to flow endpoints. }
In order to construct an approximation for the elements of $\CtwoDomainDiff$, we devise its subset that we call the \emph{flow endpoints}.
A flow endpoint is an element of $\DcRD$ which can be represented as $\phi(1)$ using an ``additive'' continuous map $\phi:[0,1]\rightarrow \DcRD$ with $\phi(0)=\Identity$. Here, ``additivity'' means $\phi(s)\circ\phi(t)=\phi(s+t)$ for any $s,t\in[0,1]$ with $s+t\in[0,1]$.
This additivity will be later used to decompose a flow endpoint into a composition of some mildly-behaved fragments of the flow map.
Note that we equip $\DcRD$ with the Whitney topology \citep[Proposition~1.7.(9)]{HallerGroups1995} to define the continuity of the map $\phi$.
The importance of the flow endpoints lies in the following lemma that we prove in Appendix~\ref{sec:appendix:Dc2 to nearly-Id}:
\begin{lemma}
\label{lem:body:flow-approximation}
Any element in \(\DcRD\) can be represented as a finite composition of flow endpoints.
\end{lemma}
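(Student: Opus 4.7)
The plan is to exploit the classical structural theory of the compactly-supported diffeomorphism group $\DcRD$, reducing the statement to a local fact about ball-supported diffeomorphisms. I would proceed in three stages.

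\emph{Stage 1 (Isotopy to the identity).} First I would show every $f \in \DcRD$ is connected to $\Identity$ by a continuous path in $\DcRD$ (in the Whitney topology). Since $f = \Identity$ outside a compact set, $\det Df$ equals $1$ near infinity and is continuous and nowhere-vanishing, hence $f$ is orientation-preserving. For $d \geq 2$ the space $\DcRD$ is then path-connected: an isotopy can be constructed by the Alexander trick (e.g.\ $\phi_s(x) = s^{-1} f(sx)$ for $s \in (0,1]$, suitably extended and smoothed near $s=0$) combined with a homotopy through the identity component of $\mathrm{GL}_d(\R)$ to remove the residual linear part.

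\emph{Stage 2 (Fragmentation).} Given such an isotopy $(\phi_t)_{t \in [0,1]}$ from $\Identity$ to $f$, I would subdivide $[0,1]$ by $0 = t_0 < \cdots < t_N = 1$ so finely that each $g_i := \phi_{t_i} \circ \phi_{t_{i-1}}^{-1}$ lies in a prescribed Whitney neighborhood of $\Identity$. Then $f = g_N \circ \cdots \circ g_1$. Using a partition of unity subordinate to a finite cover of $\bigcup_i \mathrm{supp}(g_i)$ by small open balls, each $g_i$ can be further rewritten as a composition of diffeomorphisms, each with support in a single coordinate ball. This is the classical fragmentation argument in diffeomorphism groups.

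\emph{Stage 3 (Ball-supported pieces are flow endpoints).} For a compactly supported $C^2$-diffeomorphism $g$ supported in a ball and sufficiently close to $\Identity$, I would construct an autonomous compactly-supported $C^2$-vector field $X$ whose time-$1$ flow equals $g$; the flow $t \mapsto \phi^t$ then furnishes an additive continuous path in $\DcRD$ with $\phi^0 = \Identity$ and $\phi^1 = g$, exhibiting $g$ as a flow endpoint. This amounts to the local surjectivity of the exponential map of the diffeomorphism group at the identity, provable at the $C^2$-level via an implicit-function / contraction-mapping argument.

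The main obstacle is Stage 3: while local logarithms for diffeomorphism groups are classical, a careful $C^2$-level argument is needed to keep the constructed vector field in the right regularity class and to verify continuity of $t \mapsto \phi^t$ in the Whitney topology. Should this step prove technically heavy, a natural fallback is to decompose $g$ within its supporting ball into even simpler building blocks such as coordinate-aligned near-translations $x \mapsto x + \eta(x) e_k$ for small scalar-valued $\eta$ with compact support; each of these can be realized explicitly as the time-$1$ map of an autonomous vector field (with the vector field chosen along a single coordinate direction) and hence is manifestly a flow endpoint.
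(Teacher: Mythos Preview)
Your route diverges sharply from the paper's, and Stage~3 contains a genuine gap. The paper does \emph{not} attempt to show that any particular near-identity diffeomorphism is itself a flow endpoint. Instead it observes that the set $H$ of finite compositions of flow endpoints is a subgroup of $\DcRD$ (closure under inversion and composition is straightforward from the definition), that $H$ is normal (conjugating a one-parameter additive family by a fixed $h\in\DcRD$ yields another such family), and that $H$ is nontrivial (one explicit rotation-type flow suffices). The simplicity of $\DcRD$ (Fact~\ref{fact: simplicity}) then forces $H=\DcRD$. No local inverse to the exponential map is ever invoked.

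Your Stage~3 asserts exactly such a local inverse: that every compactly supported $C^2$-diffeomorphism sufficiently close to $\Identity$ is the time-$1$ map of an autonomous compactly supported vector field, i.e.\ that the exponential map of the diffeomorphism group is locally surjective at the identity. This is well known to be \emph{false} for diffeomorphism groups in dimension $\geq 2$; there exist diffeomorphisms arbitrarily $C^r$-close to $\Identity$ that admit no square root in the group, hence cannot sit inside any one-parameter subgroup (see e.g.\ Milnor's remarks on infinite-dimensional Lie groups, or Grabowski's work on $\mathrm{Diff}_c(\R^n)$). A contraction-mapping/implicit-function argument therefore cannot produce the claimed $X$ in general, regardless of the regularity bookkeeping.

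Your fallback inherits the same difficulty. Even granting a decomposition of a near-identity $g$ into single-coordinate shears $x\mapsto x+\eta(x)e_k$ (itself nontrivial), each such shear is a $(d-1)$-parameter family of one-dimensional compactly supported diffeomorphisms, and you would need the generating vector field to be jointly $C^2$ in all variables. The Szekeres-type construction of a one-dimensional flow generator is delicate at tangential fixed points ($f'(a)=1$) and does not in general yield a field with the required joint regularity across the parameter directions. So the fallback, as stated, does not close the gap either; at best it would require substantial additional argument, and the known obstructions suggest it cannot succeed in full generality. The simplicity theorem is doing essential work here that your constructive outline does not replace.
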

Lemma~\ref{lem:body:flow-approximation} is essentially due to Fact~\ref{fact: simplicity}, which is the following structure theorem in differential geometry attributed to Herman, Thurston \cite{ThurstonFoliations1974}, Epstein \cite{Epsteinsimplicity1970}, and Mather \cite{MatherCommutators1974, MatherCommutators1975}:
\begin{fact}
\label{fact: simplicity}
The group $\DcRD$ is simple, i.e., any normal subgroup $H \subset \DcRD$ is either $\{\Identity\}$ or $\DcRD$.
\end{fact}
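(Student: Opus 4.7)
The simplicity of $\DcRD$ is a classical deep theorem attributed to Thurston, Epstein, and Mather, so my plan is to cite those sources for the technical work rather than reproduce the full argument, and to describe only the structure of the proof. The standard approach assumes a nontrivial normal subgroup $H \trianglelefteq \DcRD$ containing some $h_0 \neq \Identity$ and shows $H = \DcRD$ via two ingredients: (i) a fragmentation lemma that localizes the question to diffeomorphisms supported in small balls, and (ii) a perfectness / commutator argument that exploits normality of $H$ to capture every such ball-supported diffeomorphism.

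First, I would invoke the \emph{fragmentation lemma}: for any open cover $\{U_\alpha\}$ of $\ReD$, every $f \in \DcRD$ can be written as a finite composition $f = f_1 \circ \cdots \circ f_m$ with each $f_j$ supported in a single $U_{\alpha(j)}$. This is proved by choosing a smooth isotopy from $\Identity$ to $f$ and cutting it up using a partition of unity subordinate to $\{U_\alpha\}$. Consequently, to prove $H = \DcRD$ it suffices to prove that $H$ contains every diffeomorphism whose support lies in some small open ball.

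Second, using $h_0 \in H \setminus \{\Identity\}$, pick a small open ball $B$ that is displaced by $h_0$, i.e., $h_0(B) \cap B = \emptyset$. For any $g \in \DcRD$ supported in $B$, normality gives $[h_0, g] = h_0 g h_0^{-1} g^{-1} \in H$, and since $h_0 g h_0^{-1}$ is supported in $h_0(B)$, which is disjoint from $\mathrm{supp}(g) \subset B$, this commutator decomposes as a product of two diffeomorphisms with disjoint supports. Combining this construction with the perfectness of $\DcRD$ (every element is a product of commutators) and with fragmentation, one extracts $g \in H$ for every ball-supported $g$, and hence $H = \DcRD$.

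The main obstacle is the perfectness step, which for $C^r$-diffeomorphism groups of a $d$-manifold is Mather's theorem and famously requires $r \neq d+1$; this hypothesis is satisfied here because $d \geq 2$ and $r = 2$, so $r < d+1$. I would not reproduce Mather's delicate technical estimates on generating arbitrary compactly supported diffeomorphisms as commutators of time-one flows — instead, I would simply appeal to the classical Thurston–Epstein–Mather references as cited, which is the honest treatment for a paper whose real contribution lies elsewhere (in Theorems~\ref{theorem:main:1} and \ref{theorem:main:2}).
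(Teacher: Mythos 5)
Your decision to cite the Thurston--Epstein--Mather circle of results rather than reprove them matches the paper's treatment (the statement is labelled a Fact precisely because its proof is deferred to the literature), and you correctly dispose of the $r\neq d+1$ exception since $r=2$ and $d\ge 2$. However, there is one genuine gap: the classical simplicity theorems you invoke are about the \emph{identity component} of the compactly supported diffeomorphism group (equivalently, about diffeomorphisms admitting a compactly supported isotopy to the identity), whereas the Fact is stated for the full group $\DcRD$. Your sketch buries this distinction in the fragmentation step, where you ``choose a smooth isotopy from $\Identity$ to $f$'': the existence of such an isotopy is exactly the assertion that $f$ lies in the identity component, and it is not automatic. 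If the group failed to be connected, its identity component would be a proper nontrivial normal subgroup and the full group could not be simple; this is not a hypothetical worry --- on a general manifold the literature results hold only for the identity component, and even on $\ReD$ in the $C^\infty$ category the set of isotopy classes of compactly supported diffeomorphisms is identified (for $d\ge 5$) with the group of exotic $(d+1)$-spheres, which can be nontrivial. So ``fragmentation + displacement commutator + perfectness'' proves simplicity of the identity component, not, by itself, of $\DcRD$.

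The paper closes exactly this gap: it notes that \citet{MatherCommutators1975} proves simplicity of the connected component of $\Identity$ for compactly supported $C^2$-diffeomorphisms of a general manifold, and then cites the additional fact that, in the special case of $\ReD$ with the Whitney topology, this connected component is all of $\DcRD$ \citep[Example~1.15 and Corollary~3.5]{HallerGroups1995}, whence the Fact follows. Your write-up needs this bridging input (or else the statement must be weakened to the identity component); with that one sentence and citation added, your citation-plus-outline is correct and coincides with the paper's treatment.
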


\textit{From flow endpoints to nearly-$\Identity$. }
The flow endpoints in $\DcRD$ can be decomposed into "nearly-$\Identity$" elements in $\DcRD$ by leveraging its additivity property, as in the following proposition.
Let $\opnorm{\cdot}$ denote the operator norm.
\begin{proposition}
\label{prop:composition of deformations of Id}
For any $f\in \DcRD$, there exist finite elements $g_1,\dots,g_r\in\DcRD$ such that $f=g_1\circ\cdots\circ g_r$ and $\sup_{x\in\ReD} \opnorm{\Jac{g_i}(x)-I}<1$, where $\Jac{g_i}$ is the Jacobian of $g_i$.
\end{proposition}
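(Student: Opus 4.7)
The plan is to combine Lemma~\ref{lem:body:flow-approximation} with the additivity of flow endpoints. First I would invoke Lemma~\ref{lem:body:flow-approximation} to decompose $f = h_1\circ\cdots\circ h_m$ as a finite composition of flow endpoints. The conclusion is stable under concatenation: if each $h_j$ admits a decomposition into factors with $\sup_x\opnorm{\Jac{\cdot}(x)-I}<1$, then stringing these decompositions together produces one for $f$ with the same property. Hence it suffices to treat a single flow endpoint $h = \phi(1)$, where $\phi:[0,1]\to\DcRD$ is Whitney-continuous, additive ($\phi(s)\circ\phi(t)=\phi(s+t)$ whenever $s+t\in[0,1]$), and satisfies $\phi(0)=\Identity$.

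Next I would exploit additivity to subdivide the flow: for every $n\in\Na$, iterating the additivity identity gives
\[
h \;=\; \phi(1) \;=\; \underbrace{\phi(1/n)\circ\cdots\circ\phi(1/n)}_{n \text{ copies}}.
\]
Setting $g_1=\cdots=g_n := \phi(1/n)$ therefore realizes $h$ as a composition of $n$ identical elements of $\DcRD$, and the proof reduces to choosing $n$ large enough that $\sup_{x\in\ReD}\opnorm{\Jac{\phi(1/n)}(x)-I}<1$.

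This global Jacobian bound is the heart of the argument and the main obstacle, since naive pointwise continuity would only give locally uniform control. The key observation is that, for every $\varepsilon>0$, the set
\[
V_\varepsilon := \bigl\{\,g\in\DcRD : \opnorm{\Jac{g}(x)-I}<\varepsilon \text{ for all } x\in\ReD\,\bigr\}
\]
is a neighborhood of $\Identity$ in the Whitney $C^1$ topology, because Whitney-basic neighborhoods are indexed by locally finite covers of $\ReD$ together with a tolerance on each chart, and thus encode genuinely uniform derivative control on all of $\ReD$. Applying Whitney continuity of $\phi$ at $0$ to the neighborhood $V_{1/2}$ then yields $\delta>0$ such that $\phi(t)\in V_{1/2}$ for every $t\in[0,\delta]$. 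Choosing any integer $n>1/\delta$ gives $\opnorm{\Jac{\phi(1/n)}(x)-I}<1/2<1$ uniformly in $x\in\ReD$, which completes the construction.
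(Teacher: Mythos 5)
Your proposal is correct and follows essentially the same route as the paper's proof: decompose $f$ into flow endpoints via Lemma~\ref{lem:body:flow-approximation}, then use additivity to write each endpoint as an $n$-fold composition of $\phi(1/n)$ and Whitney continuity at $0$ to make $\phi(1/n)$ nearly-$\Identity$. Your explicit justification that the uniform Jacobian condition defines a Whitney-$C^1$ neighborhood of the identity merely spells out a step the paper leaves implicit.
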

Proposition~\ref{prop:composition of deformations of Id} leverages the continuity of the flows with respect to the Whitney topology of \(\DcRD\): $\phi(1/n)$ uniformly converges to the identity map both in its values and \emph{its Jacobian} when $n\rightarrow\infty$.
Thus, any flow endpoint $\phi(1)$ can be represented by an $n$-time composition of $\phi(1/n)$ each of which is close to identity (nearly-$\Identity$) when $n$ is sufficiently large.

\textit{From nearly-$\Identity$ to \(\CtwoOneDimTriangular\). }
The nearly-$\Identity$ elements, $g\in \DcRD$ in Proposition \ref{prop:composition of deformations of Id}, can be decomposed into elements of $\CtwoOneDimTriangular$ and permutation matrices:
\newcommand{\twodimmat}[4]{\begin{pmatrix}#1 & #2 \\ #3 & #4\end{pmatrix}}
\newcommand{\twodimvec}[2]{\begin{pmatrix}#1 \\ #2\end{pmatrix}}
\newcommand{\twodimargument}[2]{\mleft(\begin{matrix}#1 \\ #2\end{matrix}\mright)}
\renewcommand\thesubfigure{\roman{subfigure}}

\begin{figure}[t]

\begin{minipage}[c]{1.0\linewidth}
\begin{minipage}[c]{0.3\linewidth}
  \includegraphics[keepaspectratio, width=\linewidth]{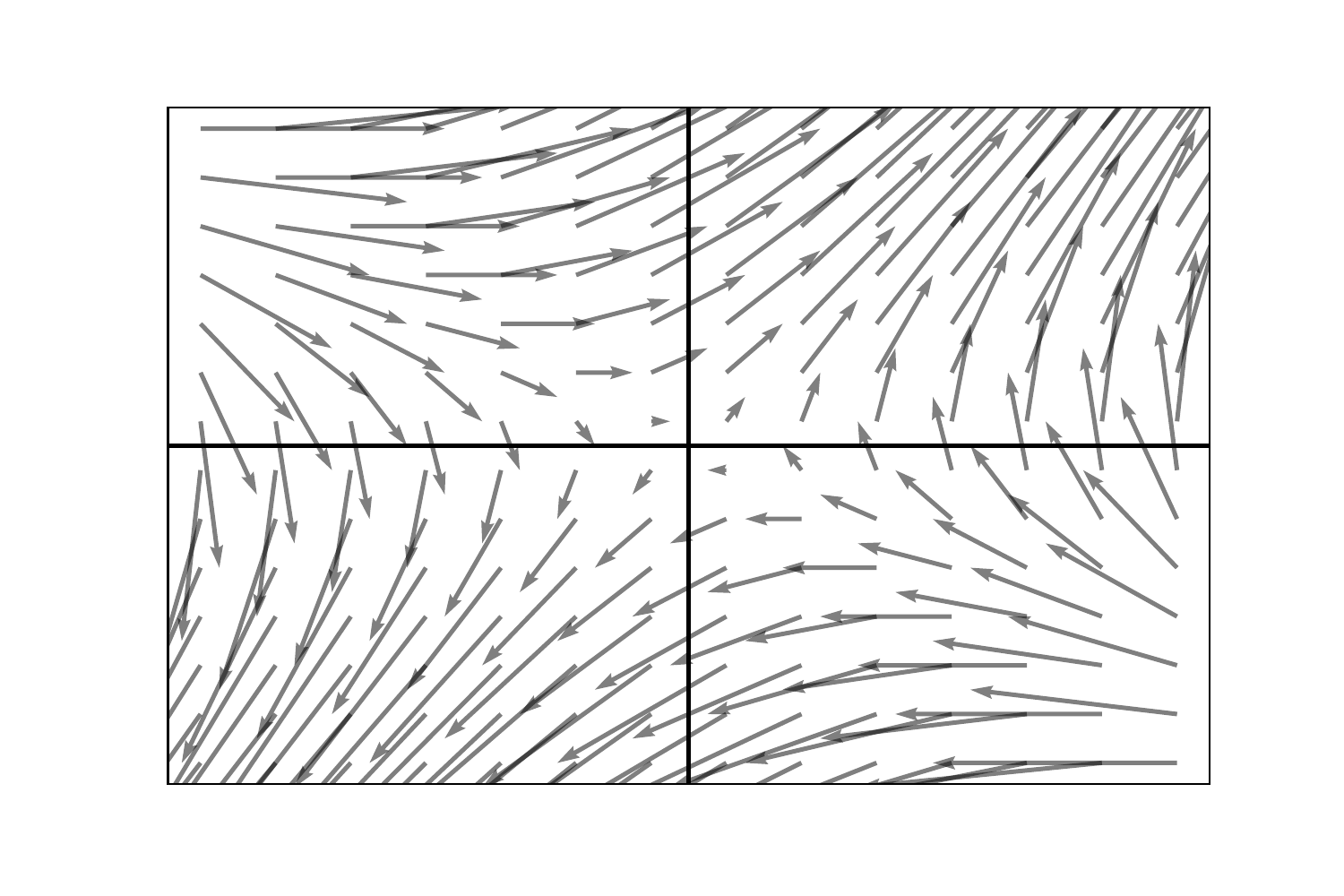}
\end{minipage}\hfill \(=\) \hfill
\begin{minipage}[c]{0.3\linewidth}
  \includegraphics[keepaspectratio, width=\linewidth]{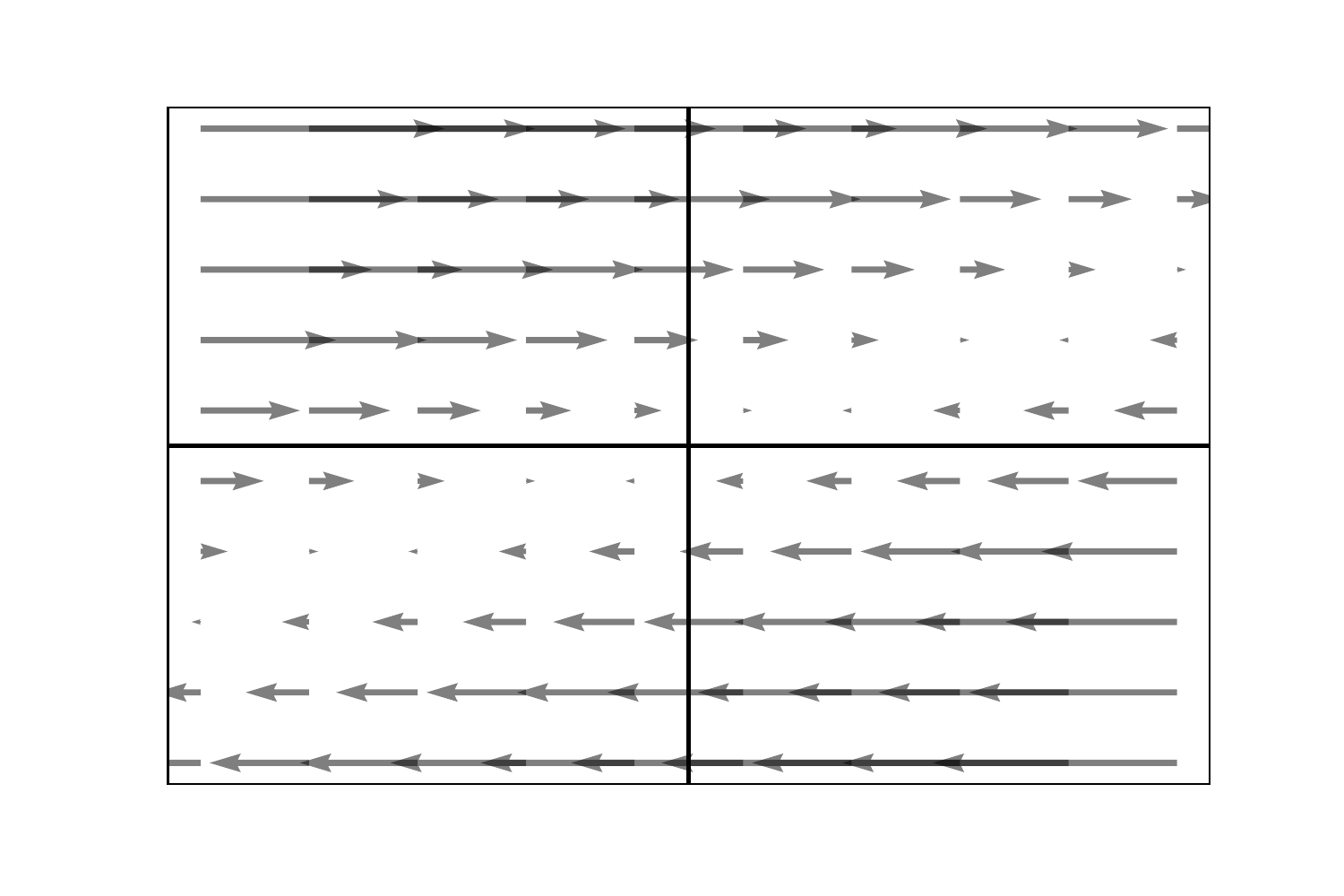}
\end{minipage}\hfill \(\circ\) \hfill
\begin{minipage}[c]{0.3\linewidth}
  \includegraphics[keepaspectratio, width=\linewidth]{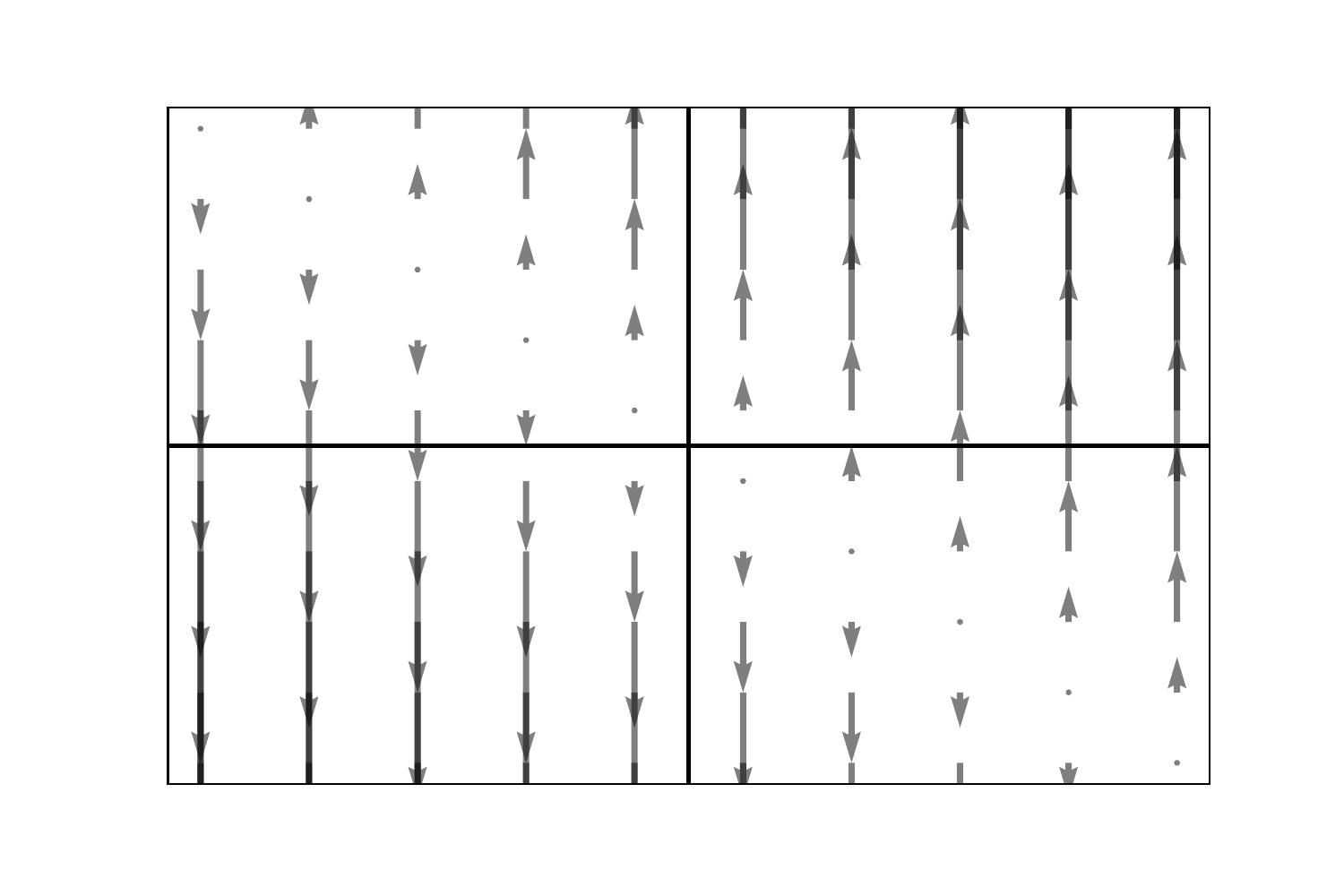}
\end{minipage}\hfill
\end{minipage}

\begin{minipage}[c]{1.0\linewidth}\small{}
\begin{minipage}[c]{0.3\linewidth}\centering{}
  \(f = \twodimmat{a}{b}{c}{d} = f_2 \circ f_1\)
\end{minipage}\hfill
\begin{minipage}[c]{0.38\linewidth}\centering{}
  \(f_2\twodimargument{x_1}{y_2} = \twodimvec{a x_1 + b (\frac{y_2 - c x_1}{d})}{y_2}\)
\end{minipage}\hfill
\begin{minipage}[c]{0.3\linewidth}\centering{}
  \(f_1\twodimargument{x_1}{x_2} = \twodimvec{x_1}{c x_1 + d x_2}\)
\end{minipage}\hfill
\end{minipage}

\caption{
  A nearly-$\Identity$ transformation \(f\) can be decomposed into coordinate-wise ones (\(f_1\) and \(f_2\): realized by \(\CtwoOneDimTriangular\) and permutations).
  The arrows indicate the transportation of the positions.
  A general nonlinear \(f\) can be analogously decomposed by Proposition~\ref{prop:triangle decomposition} when \(f\) satisfies certain conditions.
}
\label{fig:linear-flow-decomp}
\end{figure}

 \begin{proposition}
\label{prop:triangle decomposition}
For any $g\in \DcRD$ with $\sup_{x\in\ReD} \opnorm{\Jac{g}(x)-I}<1$, there exist $d$ elements $\tau_1,\dots,\tau_d\in \CtwoOneDimTriangular$ and permutation matrices $\sigma_1,\dots,\sigma_d$ such that
\[g=\sigma_1\circ\tau_1\circ\cdots\circ\sigma_d\circ\tau_d.\]
\end{proposition}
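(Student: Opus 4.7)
The plan is a coordinate-by-coordinate elimination at the level of diffeomorphisms, in the spirit of LU factorization. Setting $h^{(d)} := g$, for $k = d, d-1, \ldots, 1$ I define the ``$k$-th coordinate triangular'' map
\[
\hat{\tau}_k(\bm{x}) := (x_1, \ldots, x_{k-1}, h^{(k)}_k(\bm{x}), x_{k+1}, \ldots, x_d),
\]
which matches the $k$-th component of the current remainder, and set $h^{(k-1)} := h^{(k)} \circ \hat{\tau}_k^{-1}$. A direct computation shows that $h^{(k-1)}$ fixes the $k$-th coordinate, so by downward induction $h^{(k-1)}$ fixes all of coordinates $k, k+1, \ldots, d$. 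At $k = 1$ this forces $h^{(0)} = \Identity$, yielding $g = \hat{\tau}_1 \circ \cdots \circ \hat{\tau}_d$.

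The crux is to verify that each $\hat{\tau}_k$ is a bona fide compactly-supported $C^2$-diffeomorphism, which reduces to showing $\partial h^{(k)}_k/\partial x_k > 0$ everywhere. Let $M^{(k)}$ denote the top-left $k \times k$ block of $\Jac{h^{(k)}}$ (the remaining rows of $\Jac{h^{(k)}}$ are standard basis vectors since $h^{(k)}$ fixes the last $d-k$ coordinates). Differentiating $h^{(k)}_k(y_1, \ldots, y_{k-1}, \psi(y), y_{k+1}, \ldots, y_d) = y_k$ implicitly, where $\psi$ is the last nontrivial component of $\hat{\tau}_k^{-1}$, yields the Schur-complement identity
\[
\det\bigl((M^{(k-1)})_{S, S}\bigr) = \det\bigl((M^{(k)})_{S \cup \{k\},\, S \cup \{k\}}\bigr) \big/ M^{(k)}_{k, k}
\quad \text{for every } S \subseteq \{1, \ldots, k-1\}.
\]
The base case $M^{(d)} = \Jac{g}$ has all principal minors positive: every principal submatrix of $\Jac{g}$ equals the identity plus a perturbation of operator norm strictly less than one, so its eigenvalues lie in the open unit disk about $1$, forcing a positive determinant (complex eigenvalues of real matrices appear in conjugate pairs with positive product). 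The Schur identity then propagates ``all principal minors positive'' from $M^{(k)}$ to $M^{(k-1)}$, delivering $M^{(k)}_{k,k} = \partial h^{(k)}_k/\partial x_k > 0$ at every step. Compact support of each $\hat{\tau}_k$ and each $h^{(k)}$ is preserved by an auxiliary induction: since $g = \Identity$ off a compact set $K$, the one-coordinate map $\hat{\tau}_d$ coincides with $\Identity$ off $K$, whence $\hat{\tau}_d(K) = K$ and $h^{(d-1)} = \Identity$ off $K$; the same reasoning applies at every subsequent step.

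To recast the result in the stated form, let $\pi_k$ be the permutation that swaps coordinates $k$ and $d$ (so $\pi_d = \Identity$). Then $\tau_k := \pi_k \circ \hat{\tau}_k \circ \pi_k$ alters only the last coordinate, is strictly increasing there, and inherits compact support and $C^2$-regularity from $\hat{\tau}_k$, so $\tau_k \in \CtwoOneDimTriangular$. Substituting $\hat{\tau}_k = \pi_k \circ \tau_k \circ \pi_k$ into the product and regrouping consecutive permutation factors gives
\[
g = \pi_1 \circ \tau_1 \circ (\pi_1 \pi_2) \circ \tau_2 \circ (\pi_2 \pi_3) \circ \tau_3 \circ \cdots \circ (\pi_{d-1} \pi_d) \circ \tau_d,
\]
which is of the desired form with $\sigma_1 := \pi_1$ and $\sigma_k := \pi_{k-1} \pi_k$ for $k \geq 2$, each a permutation matrix.

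The main obstacle is the propagation-of-positivity step: the hypothesis $\opnorm{\Jac{g} - I} < 1$ is made only at the original map $g$, and one needs to verify that the intermediate maps $h^{(k)}$ retain enough positivity to keep each $\hat{\tau}_k$ a well-defined diffeomorphism. The Schur-complement identity combined with the stability of ``all principal minors positive'' under such complements is precisely the tool that makes this invariant survive; the remaining verifications (regularity, surjectivity in the active variable, and compact support) are then straightforward.
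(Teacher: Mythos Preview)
Your argument is correct and follows essentially the same route as the paper's proof, which combines Lemma~\ref{lem: operator norm to principal minors} (the operator-norm bound forces the trailing principal minors of $\Jac{g}$ to be nonzero) with Lemma~\ref{lemma:red_to_one_dim} (any compactly supported $C^r$-diffeomorphism whose Jacobian has nonvanishing trailing principal minors factors as $F_1\circ\cdots\circ F_d$ with $F_i$ altering only the $i$-th coordinate). Your inductive peeling $h^{(k-1)}=h^{(k)}\circ\hat\tau_k^{-1}$ is exactly the paper's construction $\tilde f=f\circ F^{-1}$, and your Schur-complement identity is precisely the mechanism by which the paper's minor condition propagates through the induction (the paper's write-up leaves this step somewhat implicit). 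The only substantive variation is the choice of invariant: the paper tracks the weaker statement ``all \emph{trailing} principal minors are nonzero'', whereas you track the stronger ``all principal minors of the active top-left block are positive''; both are established at the base step from $\opnorm{\Jac g - I}<1$ and both survive the Schur complement, so either suffices. Your final conjugation by transpositions to land in $\CtwoOneDimTriangular$ is the same regrouping the paper performs when passing from Lemma~\ref{lemma:red_to_one_dim} to Theorem~\ref{thm: singles and permutations}.
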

The machinery of this decomposition is illustrated in Figure~\ref{fig:linear-flow-decomp}.

\subsection{Proof outline for Theorem~\ref{theorem:main:2}}
\label{subsec: approximation via ACF}
\def \MyFigureACFINNbasesize {0.2}
\def \MyFigureACFINNbaseVdistance {0.5cm}
\def \MyFigureACFINNbaseHdistance {2cm}
\begin{figure}[t]

\begin{minipage}[c]{1.0\linewidth}
\begin{minipage}[c]{1.0\linewidth}
\centering{}
\begin{tikzpicture}
\node (1) at (0,0) {\includegraphics[keepaspectratio, width=\MyFigureACFINNbasesize\linewidth]{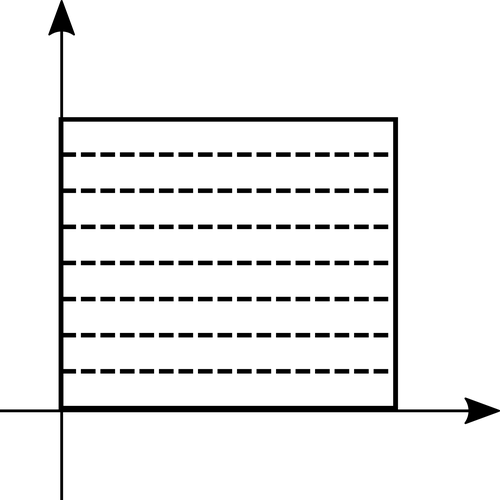}};
\node[right=\MyFigureACFINNbaseHdistance of 1] (2) {\includegraphics[keepaspectratio, width=\MyFigureACFINNbasesize\linewidth]{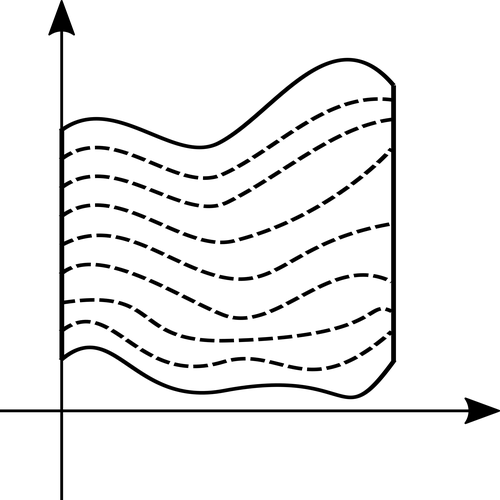}};
\node[right=\MyFigureACFINNbaseHdistance of 2] (3) {\includegraphics[keepaspectratio, width=\MyFigureACFINNbasesize\linewidth]{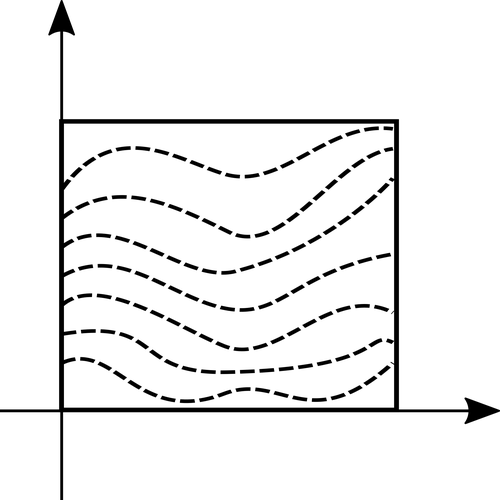}};
\node[below=\MyFigureACFINNbaseVdistance of 1] (4) {\includegraphics[keepaspectratio, width=\MyFigureACFINNbasesize\linewidth]{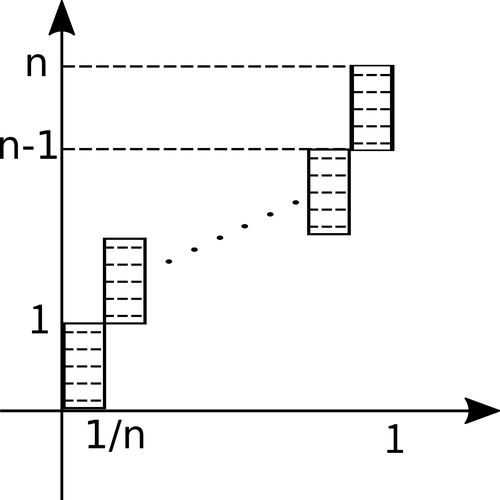}};
\node[below=\MyFigureACFINNbaseVdistance of 3] (5) {\includegraphics[keepaspectratio, width=\MyFigureACFINNbasesize\linewidth]{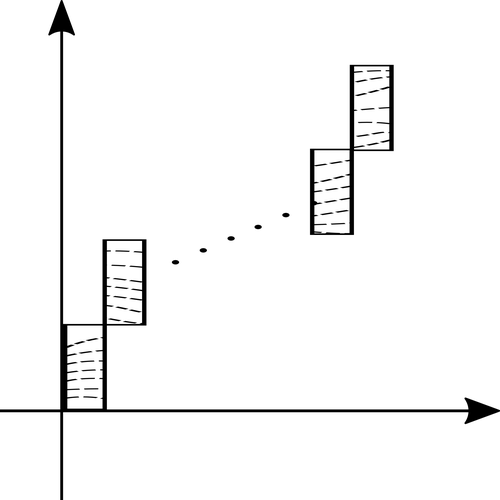}};

\draw[->,thick] (1) -- (2) node[midway, above] {\(f\)};
\draw[->,thick] (2) -- (3) node[midway, above] {\(\FACF\) (Step~1)};
\draw[->,thick, transform canvas={xshift=-1.5em}] (1) -- (4) node[midway,right] {\(\psinstar\) (Step~2)};
\draw[->,thick, transform canvas={xshift=-1.5em}] (3) -- (5) node[midway,right] {\(\psimstar\) (Step~2)};
\draw[->,thick, transform canvas={yshift=3em}] (4) -- (5)
node[midway,above] {\((x, y) \mapsto (x, v_n(y))\) \ (Step~3)}
node[midway,below, yshift=-0.05em, text width=3cm, xshift=-1.4cm]{
\begin{equation*}\begin{aligned}
&\exists g_1, g_2, g_3 \in \FACFINN:\\
&g_1  \simeq \psinstar,\  g_2 \simeq (x, v_n(y)),\ g_3\simeq (\psimstar)^{-1} \\[2pt]
&\hspace{-3pt}\Longrightarrow{}f \simeq g_3\circ g_2\circ g_1 \quad \text{(Steps~4, 5)}\\
\end{aligned}\end{equation*}
};
\end{tikzpicture}
\end{minipage}

\begin{minipage}[c]{1.0\linewidth}
\caption{Illustration of the proof technique for the \(L^p\)-universal approximation property of \ACFINN{} for \(\CzeroOneDimTriangular\).
The symbol \(\simeq\) indicates approximation to arbitrary precision.
}
\label{fig:acf-construction}
\end{minipage}
\end{minipage}

\end{figure}

 Here, we give the proof outline of Theorem~\ref{theorem:main:2}.
For details, see Appendix~\ref{appendix: theorem 2 proof}.
The main difficulty in constructing the approximator is the restricted functional form of \ACFs{}.
However, the problem reduction by Theorem~\ref{theorem:main:1} allows us to construct an approximator by approximating a step function.

For illustration, we only describe the case for $d=2$ and $K \subset [0,1]^2$. For complete proof of Theorem~\ref{theorem:main:2}, see Appendix~\ref{appendix: theorem 2 proof}.
Let $f(x, y)=(x,u(x,y))$ be the target function, where $u(\cdot,y)$ is a continuous function that is strictly increasing for each $y$ (i.e., $f \in \CzeroOneDimTriangular$). For the compact set $K \subset [0, 1]^2 \subset\Re^2$, we find $g\in\INN{\FSACFH}$ arbitrarily approximating $f$ on $K$ as follows (Figure~\ref{fig:acf-construction}).
\begin{itemize}
    \item[Step~1.] \textbf{Align the image into the square:} First, without loss of generality, we may assume that the image $f([0, 1]^2)$ is again $[0, 1]^2$.
    Indeed, we can align the image so that $u(x,1)=1$ and $u(x,0)=0$ for all $x\in[0,1]$ by using only an ACF $\Aff{1}{s}{t}$ with continuous $s$ and $t$, which can be approximated by \(\FSACFH\).
    \item[Step~2.] \textbf{Slice the squares and stagger the pieces:} We consider an imaginary ACF $\psi_n^*:=\Psi_{1,1,t_n}$ defined using a discontinuous step function $t_n:=\sum_{k=0}^nk\Indicator{[k/n,(k+1)/n)}$.
    The map $\psi_n^*$ splits $[0, 1]^2$ into pieces and staggers them so that a coordinate-wise independent transformation (e.g., $v_n$ in Step~3), which is uniform along the $x$-axis, can affect each piece separately.
    \item[Step~3.] \textbf{Express $f$ by a coordinate-wise independent transformation:}
    We construct a continuous increasing function $v_n:\R\rightarrow\R$ such that for $y\in[k,k+1)$, $v_n(y) = u(k/n, y)+k$ ($k=0,\dots,n-1$).
    A direct computation shows that $\tilde{f}_n:=(\psi_n^*)^{-1}\circ(\cdot,v_n(\cdot))\circ\psi_n^*$ arbitrarily approximates $f$ on $[0,1]^2$ if we increase $n$.
    We take a sufficiently large $n$.
    \item[Step~4.] \textbf{Approximate the coordinate-wise independent transformation \(v_n\)}:
    We find an element of $\INN{\FSACFH}$ sufficiently approximating $(\cdot,v_n(\cdot))$ on $[0,1]\times[0,n]$.
    This is realized based on a lemma that we can construct an approximator for any element of $\CzeroOneDimTriangular$ of the form $(x, y) \mapsto (x,v(y))$
    on any compact set in $\Re^2$.
    \item[Step~5.] \textbf{Approximate \(\psi_n^*\) and combine the approximated constituents to approximate $\tilde{f}_n$:}
    We can also approximate $\psi_n^*$ and its inverse by \ACFs{} based on the universality of $\ACFINNUniversalClass$.
    Finally, composing the approximated constituents gives an approximation of $f$ on $[0,1]^2$ with arbitrary precision (see Appendix \ref{appendix: compatibility of approximation and composition}).
\end{itemize}

\section{Related work and discussions\label{sec:related-work}}
In this section, we relate the contribution of this work to the literature on the representation power of invertible neural networks.

\subsection{Relation to previous theoretical analyses for normalizing flow models}\label{sec:related-work:normalizing-flow-guarantee}
The distributional universality of \emph{normalizing flows} constructed using \ARFINNs{} has been addressed in previous studies such as \cite{DBLP:conf/icml/JainiSY19,HuangNeural2018b}.
Previously proposed architectures with distributional universality include the neural autoregressive flows \cite{HuangNeural2018b} and the sum-of-squares polynomial flows \cite{DBLP:conf/icml/JainiSY19}.
Our findings elucidate the much stronger universalities of these architectures, namely the \(\sup\)-universality for \(\CtwoDomainDiff\), which enhances the reliability of these models in the tasks where function approximation rather than distribution approximation is crucial, e.g., feature extraction \cite{NalisnickHybrid2019,TeshimaFewshot2020}.

\subsection{Theoretical guarantee for other invertible neural network architectures}

\paragraph{One-dimensional case.}
In the one-dimensional case (\(d = 1\)), strict monotonicity is a necessary and sufficient condition for a function to be invertible.
In this case, there have been a few invertible neural network architectures with \(\sup\)-universality for the set of all homeomorphisms on \(\Re\), e.g., \emph{monotonic networks} \cite{SillMonotonic1998} and \emph{rational quadratic splines} \cite{DurkanNeural2019}.
These models complement \ARFINNs{} in that they provide an invertible neural network only in the one-dimensional case, whereas the latter can be defined only in the multi-dimensional case.

\paragraph{Relation to examples of functions that cannot be approximated by NODEs.}
Neural ordinary differential equations (NODEs) \citep{ChenNeural2018a,DupontAugmented2019a} can be considered as another design of invertible flow layers different from CFs.
\citet{ZhangApproximation2019a} formulated its Theorem~1 to show that NODEs are not universal approximators by presenting a function that a NODE cannot approximate.
The existence of this counterexample does not contradict our result because our approximation target \(\CtwoDomainDiff\) is different from the function class considered in \citet{ZhangApproximation2019a}: the class in \citet{ZhangApproximation2019a} can contain discontinuous maps whereas the elements of \(\CtwoDomainDiff\) are smooth and invertible.
Also, in Proposition~\ref{prop: extension lemma}, we cap an affine transformation (realizable by \(\INN{\ARFINNFlow}\)) on top of the target function to reduce the approximation of \(\CtwoDomainDiff\) to that of \(\DcRD\).
Such an affine transformation may enhance the approximation capacity by allowing a certain set of transformations, e.g., coordinate-wise sign flipping.

\subsection{The strength of the representation power of \(\INN{\FSACFH}\)}
\label{discussion of EACFINN}
In this study, we showed the \(L^p\)-universal approximation property of $\INN{\FSACFH}$. 
While the \(L^p\)-universality is likely to suffice for developing probabilistic risk bounds for machine learning tasks \cite{LinResNet2018,SuzukiAdaptivity2018} and for showing distributional universality,
whether $\INN{\FSACFH}$ is a \(\sup\)-universal approximator for \(\CtwoDomainDiff\) remains an open question.
Our conjecture is negative due to the following theoretical observation.
The \(\sup\)-universality requires a precise approximation uniformly everywhere while the \(L^p\)-universality can allow an approximation error on negligible regions.
As described in Section~\ref{subsec: approximation via ACF}, we used a smooth approximation of step functions to show the \(L^p\)-universality of $\INN{\FSACFH}$.
Intuitively, approximating the step functions and composing them can accumulate errors around the discontinuity points, so that it can retain the \(L^p\)-universality but it can affect the \(\sup\)-universality.
Since the step functions are devised to bypass the uniformity of the transformation by ACFs,
we conjecture that the difficulty is intrinsic and a \(\sup\)-universality is unlikely to hold for \(\INN{\FSACFH}\).

\section{Conclusion}
In this study, we elucidated the representation power of \ARFINNs{} by proving their \(L^p\)-universality or \(\sup\)-universality for \(\CtwoDomainDiff\).
Along the course, we invoked a structure theorem from differential geometry to establish an equivalence of the universalities for \(\CtwoDomainDiff\), \(\OneDimTriangular\), and \(\Triangular\), which itself is of theoretical interest.
Our result advances the theoretical understanding of \ARFINNs{} by formally showing that most of the \ARFINN{} architectures already yield \(L^p\)-universal approximators and that the different flow layer designs purely contribute to the efficiency of approximation, not much to the capacity of the model class.
Comparing the approximation efficiency of different layer designs is an important area in future work.
Also, the approximation efficiency for a better-behaved subset of \(\CtwoDomainDiff\) (e.g., bi-Lipschitz ones) remains as an open question for future research.

\begin{ack}
\acknowledgmentContent{}
\end{ack}

\printbibliography
 \clearpage

\begin{appendices}
\global\csname @topnum\endcsname 0
\global\csname @botnum\endcsname 0

This is the Supplementary~Material for ``Coupling-based Invertible Neural Networks Are Universal Diffeomorphism Approximators.''
Table~\ref{tbl:notation-table} summarizes the abbreviations and the symbols used in the paper.
Figure~\ref{fig:proof-flow-chart} depicts the relations among the notions of universalities appearing in this paper and how they are connected by the sections in this Supplementary~Material.

\begin{table}[tbph]
  \caption{Abbreviation and notation table}
  \label{tbl:notation-table}
  \centering
  \begin{tabular}{ll}
    \toprule
    Abbreviation/Notation & Meaning \\
    \midrule
    CF-INN & Invertible neural networks based on coupling flow\\
    IAF & Inverse autoregressive flow\\
    DSF & Deep sigmoidal flow\\
    SoS & Sum-of-squares polynomial flow\\
    MLP & Multi-layer perceptron\\
    \midrule
    CF, $\AutoRegressive{k}{\tau}{\theta}$ & Coupling flow \\
    ACF, $\Aff{k}{s}{t}$ & Affine coupling flow\\
    $\ACFINNUniversalClass$ & Set of functions from $\mathbb{R}^{d-1}$ to $\mathbb{R}$\\
    $\FSACFH, \Aff{d-1}{s}{t}$ & $\ACFINNUniversalClass$-single-coordinate affine coupling flows (\(s, t \in \ACFINNUniversalClass\))\\
    $\FLin$ & Set of invertible affine transformations\\
    $\FGL$ & Set of invertible linear transformations\\
    \midrule
    $\ARFINNFlow$ & Generic notation for a set of invertible functions\\
    $\INN{\ARFINNFlow}$ & Set of invertible neural networks based on \(\ARFINNFlow\)\\
    \midrule
    $\CtwoDomainDiff$ & Set of all $C^2$-diffeomorphisms with \(C^2\)-diffeomorphic domains\\
    $\Triangular$ & Set of all 
    $C^\infty$-increasing triangular mappings\\
    \(\CrOneDimTriangular\) &Set of all $C^r$-single-coordinate transformations\\
    $\DcRD$ & Group of compactly-supported $C^2$-diffeomorphisms (on \(\ReD\))\\
    \midrule
    $\Euclideannorm{\cdot}$ & Euclidean norm\\
    $\opnorm{\cdot}$ & Operator norm\\
    $\LpKnorm{\cdot}$ & $L^p$-norm ($p\in [1,\infty)$) on a subset $K\subset \mathbb{R}^d$\\
    $\inftyKnorm{\cdot}$ & Supremum norm on a subset $K\subset \mathbb{R}^d$\\
    \(\Indicator{A}(\cdot)\) & Indicator (characteristic) function of \(A\) \\
    \bottomrule
  \end{tabular}
\end{table}

\newcommand{\flowchartUniversal}[2]{\(#2\)-univ. for \(#1\)}

\newcommand{\SpecialCase}{S.C.}
\newcommand{\flowchartLpUniversal}[1]{\flowchartUniversal{#1}{L^p}}
\newcommand{\flowchartSupUniversal}[1]{\flowchartUniversal{#1}{\mathrm{sup}}}
\newcommand{\rightnode}[4]{\node[#1, right=\flowchartBaseDistance of #2](#3){#4}}
\newcommand{\leftnode}[4]{\node[#1, left=\flowchartBaseDistance of #2](#3){#4}}

\newcommand{\Mytextboxwidth}{1.85cm}
\def \flowchartBaseDistance {0.85}
\newcommand{\Myarrowshift}{1.5ex}
\newcommand{\MyarrowTextShift}{1.2ex}
\newcommand{\arrowtextwidth}{2.5cm}

\newcommand{\rarrow}[3]{\draw[transform canvas={yshift=\Myarrowshift},->] (#1) -- node[text width=\arrowtextwidth, above, align=center, yshift=\MyarrowTextShift]{#3} (#2)}
\newcommand{\curvedrarrow}[3]{\draw[bend left, transform canvas={yshift=\Myarrowshift},->] (#1) to node[
midway, fill=white,
]{#3} (#2);}
\newcommand{\larrow}[3]{\draw[transform canvas={yshift=-\Myarrowshift},->] (#1) -- node[text width=\arrowtextwidth, below, align=center, yshift=-\MyarrowTextShift]{#3} (#2)}
\newcommand{\architecturearrow}[3]{\draw[transform canvas={xshift=0ex},->] (#1) -- node[text width=\arrowtextwidth]{#3} (#2)}

\begin{figure}
\centering
\begin{tikzpicture}
\tikzset{Prop/.style={rectangle,  draw,  text centered, text width=\Mytextboxwidth, minimum height=1cm}};
\tikzset{Architecture/.style={rounded rectangle, draw, text centered, text width=\Mytextboxwidth, minimum height=1cm}};

\node[Prop](c2lp)at (0,0){\flowchartLpUniversal{\CtwoDomainDiff}};
\rightnode{Prop}{c2lp}{cptlp}{\flowchartLpUniversal{\cptDomainDiff}};
\rightnode{Prop}{cptlp}{t1lp}{\flowchartLpUniversal{\OneDimTriangular}};
\rightnode{Prop}{t1lp}{tlp}{\flowchartLpUniversal{\Triangular}};
\rightnode{Prop}{tlp}{weak}{Distributional universality};
\node[Prop, below=3 of tlp](tsup){\flowchartSupUniversal{\Triangular}};
\leftnode{Prop}{tsup}{t1sup}{\flowchartSupUniversal{\OneDimTriangular}};
\leftnode{Prop}{t1sup}{cptsup}{\flowchartSupUniversal{\cptDomainDiff}};
\leftnode{Prop}{cptsup}{c2sup}{\flowchartSupUniversal{\CtwoDomainDiff}};

\begin{scope}[every node/.style={font= \normalsize}]
\draw[dashed] (c2lp) -- (c2sup) node[midway, fill=white, text width=\Mytextboxwidth]{Functional universality};
\rarrow{c2lp}{cptlp}{\SpecialCase{}};
\larrow{cptlp}{c2lp}{Section~\ref{sec:appendix:reduction to cpt supp}
+ \ref{appendix: compatibility of approximation and composition}
};
\curvedrarrow{c2lp}{t1lp}{\SpecialCase{}};
\curvedrarrow{c2lp}{tlp}{\SpecialCase{}};
\larrow{t1lp}{cptlp}{Sections~\ref{sec:appendix:Dc2 to Sinfty},\ref{sec:appendix:Dc2 to nearly-Id}
+ \ref{appendix: compatibility of approximation and composition}
};
\larrow{tlp}{t1lp}{\SpecialCase{}};
\rarrow{tlp}{weak}{Section~\ref{appendix: from lp to dist}
};
\draw[transform canvas={xshift=0ex},->] (tsup) -- (tlp);
\larrow{tsup}{t1sup}{\SpecialCase{}};
\larrow{t1sup}{cptsup}{Sections~\ref{sec:appendix:Dc2 to Sinfty},\ref{sec:appendix:Dc2 to nearly-Id}
+ \ref{appendix: compatibility of approximation and composition}
};
\rarrow{c2sup}{cptsup}{\SpecialCase{}};
\curvedrarrow{c2sup}{t1sup}{\SpecialCase{}};
\curvedrarrow{c2sup}{tsup}{\SpecialCase{}};
\larrow{cptsup}{c2sup}{Section~\ref{sec:appendix:reduction to cpt supp}
+ \ref{appendix: compatibility of approximation and composition}
};
\end{scope}
\node[Architecture, below=2 of t1sup
](NAF){\(\INN{\FDSF}\), \(\INN{\FSoS}\)};
\node[Architecture, above=2 of t1lp, xshift=0ex](ACFINN){\(\INN{\FSACFH}\)};
\draw[transform canvas={xshift=0ex},->] (NAF) -- (t1sup) node[fill=white, near start]{Section~\ref{appendix:sec:examples}};
\draw[transform canvas={xshift=0ex},->] (ACFINN) -- (t1lp) node[fill=white, near start]{Section~\ref{appendix: theorem 2 proof}
};

\end{tikzpicture}
\caption{
Informal diagram of the relations among propositions and lemmas connecting them. Here, \(p \in [1, \infty)\).
\emph{S.C.} stands for ``special case'' and indicates that the notion of universality implies the other as a special case.
\emph{DSF} stands for \emph{deep sigmoidal flow}, and \emph{SoS} stands for \emph{sum-of-squares polynomial flow}.
}
\label{fig:proof-flow-chart}
\end{figure}

\section{Proof of Lemma~\ref{lem:body:distributional-universality}: From \(L^p\)-universality to distributional universality}\label{appendix: from lp to dist}

Here, we prove Lemma~\ref{lem:appendix:lp to dist}, which corresponds to Lemma~\ref{lem:body:distributional-universality} in the main text.

First, note that the larger \(p\), the stronger the notion of \(L^p\)-universality: if a model \(\INNModelGeneric\) is an \(L^p\)-universal approximator for \(\mathcal{F}\), it is also an $L^q$-universal approximator for \(\mathcal{F}\) for all $1 \leq q \leq p$.
In particular, we use this fact with \(q = 1\) in the following proof.
\begin{lemma}[Lemma~\ref{lem:body:distributional-universality} in the main text]\label{lem:appendix:lp to dist}
Let \(p \in [1, \infty)\).
Suppose \(\INNModelGeneric\) is an $L^p$-universal approximator for $\Triangular$. Then \(\INNModelGeneric\) is a distributional universal approximator.
\end{lemma}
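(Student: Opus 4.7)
The plan is to reduce distributional universality to an $L^1$-approximation problem for a smooth increasing triangular map, and then translate function-level closeness into weak convergence of pushforward measures. Concretely, I would proceed in three steps: (i) reduce to the case that the target $\nu$ is absolutely continuous, (ii) produce $\tau \in \Triangular$ with $\tau_{*}\mu$ arbitrarily close to $\nu$ in distribution, and (iii) use the $L^p$-universality hypothesis to approximate $\tau$ and pass this to a pushforward statement.

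For step (i), I would invoke Lemma~\ref{lem:appendix: abs aprox any} (weak density of absolutely continuous probability measures among all Borel probabilities) combined with a diagonal argument using a metrization of weak convergence, e.g.\ the L\'evy--Prokhorov distance. For step (ii), given absolutely continuous $\mu$ and $\nu$ on $\ReD$, the theorem of \citet{BogachevTriangular2005} produces a measurable increasing triangular $T$ with $T_{*}\mu = \nu$ (the Knothe--Rosenblatt rearrangement); because $T$ need not be $C^\infty$, I would first mollify $\nu$ to $\nu_\varepsilon$ with a strictly positive $C^\infty$ density, so the conditional CDFs appearing in the Knothe--Rosenblatt construction are $C^\infty$ with $C^\infty$ inverses. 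The resulting $\tau_\varepsilon$ then lies in $\Triangular$ with $(\tau_\varepsilon)_{*}\mu = \nu_\varepsilon$ and $\nu_\varepsilon \to \nu$ weakly as $\varepsilon\to 0$.

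For step (iii), pick a nested exhaustion $K_i \nearrow \ReD$ of compact sets with $\mu(\ReD \setminus K_i) \to 0$ (possible since any Borel probability on $\ReD$ is tight), and use $L^p$-universality (which entails $L^1$-universality on compact sets) to choose $g_i \in \INNModelGeneric$ with $\int_{K_i}\Euclideannorm{g_i - \tau_\varepsilon}\,dx \to 0$. For any bounded continuous $\varphi$ with $\|\varphi\|_\infty \le M$, the bound
\[
\Bigl|\int \varphi \circ g_i\, d\mu - \int \varphi \circ \tau_\varepsilon\, d\mu\Bigr| \le \int_{K_i}\bigl|\varphi\circ g_i - \varphi\circ \tau_\varepsilon\bigr|\,d\mu + 2M\,\mu(\ReD \setminus K_i)
\]
makes the second term vanish by construction; the first term can be controlled by extracting a subsequence along which $g_i \to \tau_\varepsilon$ Lebesgue-a.e.\ on $K_i$ and invoking dominated convergence, using that $\mu$ is absolutely continuous so that Lebesgue-a.e.\ convergence is $\mu$-a.e. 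A diagonal extraction across $\varepsilon$, $i$, and the reduction in step (i) then yields a sequence in $\INNModelGeneric$ with $(g_i)_{*}\mu \to \nu$ weakly.

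The main obstacle is step (iii): converting function-level $L^p$-closeness into weak convergence of pushforward measures is what genuinely relies on absolute continuity of $\mu$; without it, Lebesgue-a.e.\ convergence cannot be promoted to $\mu$-a.e.\ convergence and the argument breaks. A secondary technical annoyance is that Bogachev's triangular map is only measurable in general, which is why $\nu$ is pre-mollified in step (ii) rather than attempting to directly regularize $T$ while simultaneously preserving its triangular structure and coordinate-wise monotonicity.
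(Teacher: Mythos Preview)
Your overall architecture is essentially the paper's: reduce to absolutely continuous targets, mollify to get smooth densities, produce a $C^\infty$ increasing triangular transport, then use the $L^p$-hypothesis (hence $L^1$ on compacta) to approximate that transport and pass to pushforwards. There is, however, one genuine gap and one methodological difference worth noting.

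\textbf{The gap (step~(ii)).} Mollifying only $\nu$ is not enough to land the Knothe--Rosenblatt map in $\Triangular$. The construction of the triangular transport $T$ with $T_*\mu=\nu_\varepsilon$ uses the conditional CDFs of \emph{both} $\mu$ and $\nu_\varepsilon$: writing $T_k = G_k^{-1}\big(F_k(\,\cdot\mid x_{<k})\mid T_{<k}\big)$, the smoothness of $T$ requires the $F_k$ (built from $\mu$) to be $C^\infty$ just as much as it requires $G_k^{-1}$ (built from $\nu_\varepsilon$) to be. If $\mu$ is merely absolutely continuous, $F_k$ need not even be $C^1$, and then $\tau_\varepsilon\notin\Triangular$, so the $L^p$-universality hypothesis cannot be invoked. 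The paper handles this by mollifying \emph{both} $\mu$ and $\nu$ to $\mu_{t_0},\nu_{t_0}$ with positive $C^\infty$ densities, obtaining $T\in\Triangular$ with $T_*\mu_{t_0}=\nu_{t_0}$, and then absorbing the discrepancy between $g_*\mu$ and $g_*\mu_{t_0}$ via $\|\phi_{t_0}*p-p\|_{1,\ReD}$ uniformly in $g$ (bounded test functions). Your argument is easily repaired the same way, but as written it does not go through.

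\textbf{The methodological difference (step~(iii)).} To convert $L^1$-closeness of maps into weak closeness of pushforwards, the paper works directly in the bounded-Lipschitz metric: for $f\in\mathrm{BL}_1$ one has $\int_K|f\circ T-f\circ g|\,d\mu_{t_0}\le L_f\int_K\|T-g\|\,d\mu_{t_0}$, and since the mollified density $\phi_{t_0}*p$ is continuous (hence bounded on $K$), the last integral is controlled by $\|T-g\|_{1,K}$ with respect to Lebesgue measure. This yields a quantitative bound for a \emph{single} $g$, with no subsequence extraction or diagonalization. Your route via $L^1\Rightarrow$ a.e.\ subsequence $\Rightarrow$ DCT is correct but less direct: it forces a diagonal over the exhaustion $K_i$ to get Lebesgue-a.e.\ convergence globally, and a further diagonal over $\varepsilon$ and the step~(i) reduction. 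Once you mollify $\mu$ as above (which you must, per the gap), the density is automatically bounded on compacta and the paper's one-line Lipschitz estimate becomes available, so the subsequence machinery is avoidable.
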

\begin{proof}
We denote by ${\rm BL}_1$ the set of bounded Lipschitz functions $f\colon{}\R^d\rightarrow \R$ satisfying  $\Vert f\Vert_{\sup, \R^d}+L_f\le 1$, where $L_f$ denotes the Lipschitz constant of $f$.
Let $\mu, \nu$ be absolutely continuous probability measures, and 
take any $\varepsilon>0$. 
By Theorem~11.3.3 in \cite{DudleyReal2002}, it suffices to show that there exists $g\in\INNModelGeneric$ such that
\[\beta(g_*\mu, \nu):=\sup_{f\in {\rm BL}_1}\left|\int_{\R^d}f\,dg_*\mu-f\,d\nu\right|<\varepsilon.\]
Let $p,q\in L^1(\R^d)$ be the density functions of $\mu$ and $\nu$ respectively.  Let $\phi\in L^1(\R^d)$ be a positive \(C^\infty\)-function such that $\int_{\R^d} \phi(x)dx=1$ (for example, Gaussian distribution), and for $t>0$, put $\phi_t(x):=t^{-d}\phi(x/t)$.  We define  $\mu_t:=\phi_t*pdx$ and $\nu_t:=\phi_t*qdx$. Since both $\Vert \phi_{t}*p-p\Vert_{1,\R^d}$ and $\Vert \phi_{t}*q-q\Vert_{1,\R^d}$ converges to 0 as $t\rightarrow0$, there exists $t_0>0$ such that for any continuous mapping $G:\R^d\rightarrow \R^d$, 
\begin{align*}
\left|\int_{\R^d}f\,dG_*\mu_{t_0}-f\,dG_*\mu\right|&<\frac{\supRangenorm{\ReD}{f}\varepsilon}{5},\quad
\left|\int_{\R^d}f\,d\nu_{t_0}-f\,d\nu\right|<\frac{\supRangenorm{\ReD}{f}\varepsilon}{5}.
\end{align*}
By using Lemma \ref{existence of transformation for probability measures} below, there exists $T\in\Triangular$ such that $T_*\mu_{t_0}=\nu_{t_0}$.  Let $K\subset \R^d$ be a compact subset such that
\[1-\mu_{t_0}(K)<\frac{\varepsilon}{5}.\]
By the assumption, there exists  $g\in\INNModelGeneric$  such that 
\[\int_K |T(x)-g(x)|dx<\frac{\varepsilon}{5\sup_{x\in K}|\phi_{t_0}*p(x)|}.\]
Thus for any $f\in{\rm BL}_1$, we have
\begin{align*}
&\left|\int_{\R^d}f\,dg_*\mu-f\,d\nu\right|\\
&\le \left|\int_{\R^d}f\,dg_*\mu_{t_0}-f\,dg_*\mu\right| +
\left|\int_{\R^d}f\,d\nu_{t_0}-f\,d\nu\right| \\
&\hspace{12pt}+\left|\int_{\R^d\setminus K}f\circ T\,d\mu_{t_0}\right|
+\left|\int_{\R^d\setminus K}f\circ g\,d\mu_{t_0}\right|
+\int_{K}\left|f(T(x))-f(g(x))\right|\,d\mu_{t_0}(x)\\
&<\frac{\supRangenorm{\ReD}{f}\varepsilon}{5}+\frac{\supRangenorm{\ReD}{f}}{5}+\frac{\supRangenorm{\ReD}{f}\varepsilon}{5}+\frac{\supRangenorm{\ReD}{f}\varepsilon}{5}+\frac{L_f\varepsilon}{5}\\
&\le\varepsilon,
\end{align*}
where $L_f$ is the Lipschitz constant of $f$. 
Here we used $\supRangenorm{\Re^d}{f}+L_f\leq 1$. 
Therefore, we have $\beta(g_*\mu,\nu)<\varepsilon$.
\end{proof}

The following lemma is essentially due to \cite{HyvarinenNonlinear1999}.
\begin{lemma}
\label{existence of transformation for probability measures}
Let $\mu$ be a probability measure on $\R^d$ with a \(C^\infty\) density function $p$. Let $U:=\{x\in \R^d : p(x)>0\}$. Then there exists a diffeomorphism $T:U\rightarrow (0,1)^d$ such that its Jacobian is upper triangular matrix with positive diagonal, and $T_*\mu={\rm U}(0,1)^d$. Here, ${\rm U}(0,1)^d$ is the uniform distribution on $[0,1]^d$.
\end{lemma}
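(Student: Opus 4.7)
The plan is to construct $T$ via the classical Knothe--Rosenblatt rearrangement, building the $d$ components of $T$ one coordinate at a time from conditional cumulative distribution functions of $\mu$. For $k=1,\ldots,d$, let
\[
q_k(x_k,\ldots,x_d) := \int_{\R^{k-1}} p(x_1,\ldots,x_d)\,dx_1\cdots dx_{k-1}
\]
denote the marginal density of $(X_k,\ldots,X_d)$ under $\mu$ (with the convention $q_{d+1}\equiv 1$), and define
\[
T_k(x_k,\ldots,x_d) := \int_{-\infty}^{x_k} \frac{q_k(s,x_{k+1},\ldots,x_d)}{q_{k+1}(x_{k+1},\ldots,x_d)}\,ds, \qquad T := (T_1,\ldots,T_d).
\]
Because $T_k$ depends only on $x_k,\ldots,x_d$, the Jacobian of $T$ is automatically upper triangular, matching the convention of the lemma.

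Next I would verify smoothness, positivity of the diagonal, and the pushforward identity. Smoothness of $T$ on $U$ follows from $C^\infty$-smoothness of each $q_k$ (by differentiation under the integral sign) together with the positivity of the denominator $q_{k+1}(x_{k+1},\ldots,x_d)$ on the projection of $U$, since this marginal dominates the positive value $p(x)$ on a neighbourhood of $x\in U$. The diagonal entries $\partial T_k/\partial x_k = q_k/q_{k+1}$ are strictly positive on $U$, so $T$ is a local diffeomorphism whose Jacobian determinant telescopes to $|\det JT(x)| = q_1(x)/q_{d+1} = p(x)$. The change of variables formula then yields pushforward density $(p/|\det JT|)\circ T^{-1}\equiv 1$ on the image, which is exactly the uniform distribution ${\rm U}(0,1)^d$.

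The main obstacle is to upgrade this local diffeomorphism onto a global bijection from $U$ onto the full cube $(0,1)^d$. Fixing $(x_{k+1},\ldots,x_d)$ in the projection of $U$ and viewing $T_k$ as a one-variable function of $x_k$, it is strictly increasing on the open slice $\{x_k : q_k(x_k,x_{k+1},\ldots,x_d)>0\}$ with total integral one; whenever this slice is connected, its image is exactly $(0,1)$, and a downward induction on $k$ from $d$ to $1$ yields the desired diffeomorphism $T\colon U\to(0,1)^d$. This connectedness is automatic in the sole application of the lemma in the present paper (the proof of Lemma~\ref{lem:appendix:lp to dist}): there $p$ is a mollification $\phi_{t_0}*q$ by a strictly positive Gaussian kernel, so $U=\R^d$ and every conditional slice is $\R$ itself. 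In greater generality one must restrict to densities whose support possesses this ``triangular convexity'' property, as is implicit in \cite{HyvarinenNonlinear1999}.
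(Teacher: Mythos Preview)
Your proof is correct and follows the same Knothe--Rosenblatt construction as the paper, the only cosmetic difference being that you marginalize out the first $k-1$ coordinates (so $T_k$ depends on $x_k,\ldots,x_d$, yielding an upper-triangular Jacobian in the standard convention) whereas the paper marginalizes out the last $d-i$ coordinates (so their $T_i$ depends on $x_1,\ldots,x_i$). Your explicit treatment of the global-bijectivity issue---pointing out that slice-connectedness of the support is implicitly needed and is satisfied in the sole application because the density there is a Gaussian mollification with $U=\R^d$---is in fact more careful than the paper's own proof, which simply asserts that $T$ is a diffeomorphism onto $(0,1)^d$ without comment.
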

\begin{proof}
Let $q_i(x_1,\dots,x_i):=\int_{\R^{d-i}}p(x_1,\dots,x_{i+1},\dots,x_d)\,dx_{i+1}\dots dx_d$. Then we define $T:U\rightarrow (0,1)^d$ by
\[T(x_1,\dots,x_d):=\left(\int_{-\infty}^{x_i}\frac{q_i(x_1,\dots,x_{i-1},y)}{q_{i-1}(x_1,\dots,x_{i-1})}dy\right)_i.\]
Then we see that $T$ is a diffeomorphism and its Jacobian is upper triangular with positive diagonal elements. Moreover, by a direct computation, we have $T_*d\mu=U(0,1)$.
\end{proof}

We include a proof for the statement that that any probability measure on $\Re^m$ is arbitrarily approximated by an absolutely continuous probability measure in the weak convergence topology:
\begin{lemma}
\label{lem:appendix: abs aprox any}
Let $\mu$ be an arbitrary probability measure of $\Re^m$.
Then there exists a sequence $\{\mu_n\}_{n=1}^\infty$ such that $\mu_n$ weakly converges to $\mu$.
\end{lemma}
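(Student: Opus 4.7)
The plan is to use a standard mollification argument: convolve the arbitrary probability measure $\mu$ with a smooth approximation to the identity to obtain a sequence of absolutely continuous probability measures converging weakly to $\mu$.

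Concretely, I would fix a smooth positive density $\phi\in C^\infty(\Re^m)$ with $\int_{\Re^m}\phi(x)\,dx=1$ (for instance a standard Gaussian), and for each $n\in\Na$ set $\phi_n(x):=n^m\phi(nx)$. Then define $\mu_n$ by its density $p_n(x):=\int_{\Re^m}\phi_n(x-y)\,d\mu(y)$, i.e., $\mu_n=\phi_n*\mu$. A direct check using Fubini shows that $p_n\ge 0$, $\int p_n(x)\,dx=1$, and $p_n\in C^\infty$, so $\mu_n$ is an absolutely continuous probability measure on $\Re^m$.

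To establish weak convergence $\mu_n\Rightarrow\mu$, I would take an arbitrary bounded continuous function $f\colon\Re^m\to\Re$ and use Fubini to rewrite
\[
\int_{\Re^m} f\,d\mu_n=\int_{\Re^m}\int_{\Re^m} f(x)\phi_n(x-y)\,dx\,d\mu(y)=\int_{\Re^m} f_n(y)\,d\mu(y),
\]
where $f_n(y):=\int_{\Re^m} f(y+z/n)\phi(z)\,dz$ via the substitution $z=n(x-y)$. Since $f$ is continuous and bounded, the integrand $f(y+z/n)\phi(z)$ is dominated by $\supRangenorm{\ReD}{f}\phi(z)\in L^1(\Re^m)$ and converges pointwise to $f(y)\phi(z)$ as $n\to\infty$, so dominated convergence gives $f_n(y)\to f(y)$ pointwise. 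Applying dominated convergence once more with the bound $|f_n(y)|\le\supRangenorm{\ReD}{f}$ against the finite measure $\mu$ yields $\int f\,d\mu_n=\int f_n\,d\mu\to\int f\,d\mu$, which is exactly weak convergence by the portmanteau characterization.

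The argument is essentially routine; the only mild subtlety is keeping the two applications of dominated convergence straight (one over $z$ against Lebesgue measure weighted by $\phi$, one over $y$ against $\mu$), and noting that no tightness assumption on $\mu$ is needed because $\mu$ is already a probability measure on $\Re^m$ and the test functions $f$ are globally bounded. No obstacle is expected, and no results beyond Fubini and dominated convergence are required.
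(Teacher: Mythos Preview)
Your proposal is correct and matches the paper's own proof essentially line for line: both mollify $\mu$ by convolving with $\phi_t(x)=t^{-m}\phi(x/t)$ (your $t=1/n$), rewrite $\int f\,d\mu_n$ via Fubini and the substitution $z=n(x-y)$, and conclude weak convergence using dominated convergence. The only cosmetic difference is that the paper bundles the two dominated-convergence steps into a single application, whereas you spell them out separately.
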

\begin{proof}
Let $\phi$ be a positive $C^\infty$ function such that $\int_{\Re^m} \phi(x) dx=1$.
For $t>0$, put $\phi_t(x):=t^{-m}\phi(x/t)$.
We define
\begin{align*}
   w_t(x)=\int_{\Re^m} \phi_t(x-y)d\mu(y).
\end{align*}
We prove the absolutely continuous measure $w_tdx$ weakly converges to $\mu$ as $t\rightarrow 0$.
In fact, for any bounded continuous function $f$, we have
\begin{align*}
    \left|\int_{\Re^m} fw_t dx - \int fd\mu \right| 
    &= \left| \int \int_{\Re^m} \left(f(y+tx)-f(y)\right)\phi(x) dxd\mu(y) \right|\\
    &\le \int \int_{\Re^m} |f(y+tx)-f(y)|\phi(x)dx d\mu(y).
\end{align*}
Since $f$ is bounded and $\phi$ is absolutely integrable, by the dominated convergence theorem, as $t\rightarrow 0$, we have
\[\int_{\Re^m} fw_t dx \rightarrow  \int fd\mu,\]
namely, $w_tdx$ weakly converges to $\mu$.
\end{proof} 
\section{Proof of Theorem~\ref{thm:body:diffeo-universal-equivalences}: Equivalence of universality properties}
\label{sec:appendix:universality-proof}
In this section, we provide the proof details of Theorem~\ref{theorem:main:1} in the main text.
Section~\ref{sec:appendix:reduction to cpt supp} explains the reduction from \(\CtwoDomainDiff\) to \(\DcRD\), and
Section~\ref{sec:appendix:Dc2 to Sinfty} explains the reduction from \(\DcRD\) to \(\CinftyOneDimTriangular\) and permutations of variables.

Here, we formally repost the proof of Theorem~\ref{theorem:main:1} which has been essentially completed in Section~\ref{subsection: outline of the proof of theorem 1}.
\begin{proof}[Proof of Theorem~\ref{theorem:main:1}]
Since we have $\CinftyOneDimTriangular\subset \Triangular\subset \CtwoDomainDiff$, it is sufficient to prove that the universal approximation properties for $S^\infty$ imply those for $\CtwoDomainDiff$.
Therefore, we focus on describing the reduction from \(\CtwoDomainDiff\) to \(\CinftyOneDimTriangular\).
First, by combining Lemma~\ref{smoothing lemma} with the \(L^p\)-universality (in the case \ref{main thm: A}) or the \(\sup\)-universality (in the case \ref{main thm: B}) of $\INN{\ARFINNFlow}$ for \(\CinftyOneDimTriangular\), we obtain the \(L^p\)-universal (resp. \(\sup\)-universal) approximation property for $\CtwoOneDimTriangular$.
Now, in light of Lemma~\ref{red to comp. supp. diff} and Theorem~\ref{thm: singles and permutations}, we obtain the assertion of Theorem~\ref{thm: equivalence S2-D2} in the main text, i.e., for any $f\in\CtwoDomainDiff$ and compact subset $K\subset U_f$, there exist $W_1,\dots, W_r \in \FLin$ and $\tau_1,\dots,\tau_r\in \CtwoOneDimTriangular$  and $b\in\ReD$ such that $f(x)=W_1\circ\tau_1\circ\cdots\circ W_r\circ\tau_r(x)$ for all $x\in K$.
Given this decomposition, we combine the \(L^p\)-universality (in the case \ref{main thm: A}) or the \(\sup\)-universality (in the case \ref{main thm: B}) of $\INN{\ARFINNFlow}$ for $\CtwoOneDimTriangular$ with Proposition~\ref{prop: compatibility of approximation} to obtain the assertion of Theorem~\ref{theorem:main:1}.
\end{proof}

\subsection{From \(\CtwoDomainDiff\) to \(\DcRD\)
}
\label{sec:appendix:reduction to cpt supp}
In this section, we describe how the approximation of $\CtwoDomainDiff$ is reduced to that of $\DcRD$ when we are only concerned with its approximation on a compact set.

\begin{lemma}
\label{lem1}
\label{red to comp. supp. diff}
Let $f \colon U \to \ReD$ be an element of $\CtwoDomainDiff$, and let $K\subset U$ be a compact set. Then, there exists $h \in \DcRD$ and an affine transform $W \in \FLin$ such that \[\Restrict{W\circ h}{K}=\Restrict{f}{K}.\]
\end{lemma}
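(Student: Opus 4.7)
The strategy is to reduce the problem to extending \(W^{-1} \circ f\big|_K\) to a compactly supported \(C^2\)-diffeomorphism of \(\mathbb{R}^d\) for a suitable affine \(W\). First I fix a point \(x_0\) in the interior of \(K\) and take
\[W(y) := f(x_0) + Df(x_0)(y - x_0),\]
which is affine and invertible because \(Df(x_0)\) is invertible. Setting \(g := W^{-1} \circ f\), the map \(g: U_f \to W^{-1}(\mathrm{Im}(f))\) is a \(C^2\)-diffeomorphism with \(g(x_0) = x_0\) and \(Dg(x_0) = I\). Since \(\det Dg\) is continuous, nowhere vanishing, and positive at \(x_0\) on the connected set \(U_f\), \(g\) is orientation-preserving throughout \(U_f\).

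Next I pick an appropriate neighborhood of \(K\). Using the hypothesis that \(U_f\) is \(C^2\)-diffeomorphic to \(\mathbb{R}^d\), fix a \(C^2\)-diffeomorphism \(\Phi: U_f \to \mathbb{R}^d\). For \(R\) large enough that \(\Phi(K) \subset B(0, R)\), set \(V := \Phi^{-1}(B(0, R))\). Then \(K \subset V\), the closure \(\bar V = \Phi^{-1}(\bar B(0,R))\) is compact in \(U_f\), and \(V\) is itself \(C^2\)-diffeomorphic to the open ball \(B(0, R)\), hence to \(\mathbb{R}^d\).

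The key step is an isotopy argument: I claim that \(g\big|_V\) is smoothly isotopic to the inclusion \(\iota: V \hookrightarrow \mathbb{R}^d\) through a family \(\{\phi_t\}_{t \in [0,1]}\) of \(C^2\)-embeddings of \(V\) into \(\mathbb{R}^d\), with \(\phi_0 = \iota\), \(\phi_1 = g\big|_V\), and all \(\phi_t(\bar V)\) contained in a common compact set. This holds because the space of orientation-preserving \(C^2\)-embeddings of an open set \(C^2\)-diffeomorphic to \(\mathbb{R}^d\) is path-connected, a standard consequence of the connectivity of the orientation-preserving diffeomorphism group of \(\mathbb{R}^d\). Granted such an isotopy, the smooth isotopy extension theorem (applied over the relatively compact \(\bar V\)) produces an ambient isotopy \(\{H_t\}\) of \(\mathbb{R}^d\) through compactly supported \(C^2\)-diffeomorphisms with \(H_0 = \mathrm{Id}\) and \(H_t\big|_{\bar V} = \phi_t\). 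Setting \(h := H_1 \in \mathrm{Diff}^2_c\), we obtain \(h\big|_K = g\big|_K = W^{-1} \circ f\big|_K\), and therefore \(W \circ h\big|_K = f\big|_K\).

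\textbf{Main obstacle.} The substantive content lies in the isotopy construction: the topological fact that an orientation-preserving \(C^2\)-embedding of an open set \(C^2\)-diffeomorphic to \(\mathbb{R}^d\) is isotopic through embeddings to the inclusion. The affine normalization in the first step is precisely what secures the orientation-preserving hypothesis (any other choice of \(W\) in the correct orientation class would do). The remaining ingredients -- choice of a ball-like neighborhood via the hypothesis \(U_f \cong \mathbb{R}^d\), and the smooth isotopy extension theorem -- are routine differential-topological tools once the isotopy is in hand.
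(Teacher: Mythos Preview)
Your approach is correct and reaches the goal by a somewhat different route from the paper. Both arguments share the affine normalization step: choose $W$ so that $g=W^{-1}\circ f$ fixes a point with derivative $I$, hence is orientation-preserving. From there the paper (i) invokes Theorem~3.3 of \cite{BernardExpressing2018} to extend $f|_K$ to a global $C^2$-diffeomorphism $F\colon\mathbb{R}^d\to\mathbb{R}^d$, and then (ii) applies an explicit flow construction---Lemma~\ref{extension lemma for function on ball}, essentially the scaling homotopy $F(x,t)=f(tx)/t$ integrated as an ODE---to obtain the compactly supported $h$. You instead bypass the globalization and go straight to the isotopy extension theorem, using path-connectedness of $\mathrm{Emb}^+(V,\mathbb{R}^d)$ to supply the isotopy. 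This trades the citation of Bernard's extension theorem for a citation of isotopy extension; note, though, that the path-connectedness you invoke is most cleanly established by exactly the same scaling trick that underlies the paper's flow lemma, not by ``connectedness of $\mathrm{Diff}^+(\mathbb{R}^d)$'' as you write (an embedding $V\hookrightarrow\mathbb{R}^d$ need not be onto, so that group does not act transitively in the obvious way).

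Two small points to tighten. First, $K$ may have empty interior, so pick $x_0\in U_f$ rather than $\operatorname{int}K$; nothing else changes. Second, to apply isotopy extension over the compact $\bar V$ you need the isotopy to be defined and $C^2$ on $\bar V$, jointly $C^1$ in $t$, with bounded track. The abstract path in the embedding space does not give this for free; the clean fix is to take a slightly larger ball-like $V'\supset\bar V$, build the explicit scaling isotopy $\phi_t(x)=x_0+t^{-1}\bigl(g(x_0+t\Psi(x))-x_0\bigr)$ (conjugating through a diffeomorphism $\Psi$ to a star-shaped set if needed), and then restrict to $\bar V$. With that in place your argument goes through.
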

\begin{proof}[Proof of Lemma~\ref{lem1}]
We denote the injections of $U$ and $f(U)$ into $\mathbb{R}^d$ by $\iota_1\colon  U\hookrightarrow \mathbb{R}^d$ and $\iota_2\colon  f(U) \hookrightarrow \mathbb{R}^d$, respectively.
Since $U$ is $C^2$-diffeomorphic to $\mathbb{R}^d$ and $f$ is $C^2$-diffeomorphic, $f(U)$ is also $C^2$-diffeomorphic to $\mathbb{R}^d$.
By applying Theorem 3.3 in \cite{BernardExpressing2018} to $\iota_1 \circ \Restrict{f^{-1}}{f(U)}\colon  f(U) \to \mathbb{R}^d$ and the injection $\iota_2$, we can obtain diffeomorphisms $F_1\colon f(U)\rightarrow \mathbb{R}^d$ and $F_2\colon  f(U) \rightarrow \mathbb{R}^d$ such that $\Restrict{F_1}{f(K)} = \Restrict{f^{-1}}{f(K)}$ and $\Restrict{F_2}{f(K)} = \Identity_{f(K)}$, where $\Identity_{f(K)}$ denotes the identity map on ${f(K)}$.
Let $F:=F_2\circ F_1^{-1}\colon  \mathbb{R}^d \to \mathbb{R}^d$.
By definition, we have $\Restrict{F}{K} = \Restrict{f}{K}$.

Take a sufficiently large open ball $B$ centered at 0 such that $K\subset B$. 
Let $W\in\FLin{}$ such that $W(x)=\Jac{F}^{-1}(0)(x-F(0))$. Then by Lemma \ref{extension lemma for function on ball} below, 
we conclude that there exists a compactly supported diffeomorphism $h\colon \mathbb{R}^d\rightarrow\mathbb{R}^d$ such that $\Restrict{W\circ h }{K}=\Restrict{F}{K} = \Restrict{f}{K}$.
\end{proof}

\begin{lemma}
\label{extension lemma for function on ball}
Let $B_r\subset\ReD$ be an open ball of radius $r$ with origin $0$, and let $f:B_r\rightarrow f(B_r)\subset \ReD$ be a $C^2$-diffeomorphism onto its image such that $f(0)=0$ and $\Jac{f}(0)=I$.
Let $\varepsilon\in (0,r/2)$.                                      Then there exists $h\in \DcRD$ such that $f(x)=h(x)$ for any $x\in B_{r-\varepsilon}$.
\end{lemma}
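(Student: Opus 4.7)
The plan is to realize $h$ as the time-one map of a compactly supported time-dependent vector field on $\ReD$, obtained by cutting off a natural isotopy from $\Identity$ to $f$ that the pinning conditions allow us to write down explicitly. Specifically, the pinning conditions $f(0)=0$ and $\Jac{f}(0)=I$ let me define the isotopy $F_t(x) := f(tx)/t$ for $t\in(0,1]$ and $F_0(x) := x$. A Taylor expansion of $f$ at the origin shows that this family extends $C^1$ in $t$ down to $t=0$, that each $F_t$ is a $C^2$-diffeomorphism of $B_r$ onto its image (fixing the origin with Jacobian $I$ there), and that $F_0 = \Identity$ while $F_1 = f$.

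Next, I would pass to the Eulerian velocity field $X_t(y) := \partial_t F_t(F_t^{-1}(y))$ defined on $F_t(B_r)$. The key observation is that the trajectories $\{F_t(x) : t\in[0,1]\}$ emanating from $x\in \overline{B_{r-\varepsilon}}$ stay inside the compact set $\Omega := \bigcup_{t\in[0,1]} F_t(\overline{B_{r-\varepsilon}}) \subset \ReD$. I would then choose a smooth cutoff $\chi\colon \ReD\to[0,1]$ that is identically $1$ on an open neighborhood of $\overline{\Omega}$ and has compact support contained in an open set on which $X_t$ is defined for all $t\in[0,1]$; then $\tilde X_t := \chi \cdot X_t$, extended by zero outside its natural domain, is a compactly supported time-dependent vector field on $\ReD$. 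Take $h$ to be the time-one map of the flow generated by $\tilde X_t$; by standard ODE theory this is a compactly supported diffeomorphism of $\ReD$. Since trajectories starting in $\overline{B_{r-\varepsilon}}$ remain inside $\Omega$ (where $\chi\equiv 1$), the cut-off flow agrees with $F_t$ on these points, yielding $h|_{B_{r-\varepsilon}} = F_1|_{B_{r-\varepsilon}} = f|_{B_{r-\varepsilon}}$, as required.

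The main obstacle I anticipate is verifying the $C^2$ regularity of $h$ demanded by $\DcRD$: when $f$ is merely $C^2$, the expression for $\partial_t F_t$ involves $\Jac{f}$ and is only $C^1$ in $x$, so $X_t$ is only $C^1$ in $y$ and the cut-off flow is a priori only $C^1$ globally, even though it coincides with the $C^2$ map $f$ on $\overline{B_{r-\varepsilon}}$. To resolve this, I would first replace $f$ outside a slightly smaller ball by a $C^\infty$-mollified diffeomorphism that still agrees with $f$ on $\overline{B_{r-\varepsilon}}$, using a partition-of-unity argument with mollifier localized in the collar $B_r\setminus\overline{B_{r-\varepsilon}}$, and then apply the flow construction to this smoothed map. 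This preserves the matching condition and upgrades the global regularity, ensuring $h\in\DcRD$.
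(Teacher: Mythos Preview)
Your approach is essentially the paper's: both construct the scaling isotopy $F_t(x)=f(tx)/t$ (which extends to $t=0$ thanks to $f(0)=0$, $\Jac f(0)=I$), form the time-dependent Eulerian velocity field, multiply by a spatial cutoff that is identically one on the tube swept out by $\overline{B_{r-\varepsilon}}$, and declare $h$ to be the time-one map of the resulting ODE. The paper's version differs only cosmetically (it writes the cutoff on $\ReD\times I_\delta$ and invokes a uniform Lipschitz bound to justify existence/uniqueness), so at the level of the core argument you have reproduced the paper's proof.

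You are right to flag the $C^2$-regularity issue, and in fact the paper's proof glosses over it in exactly the place you worry about: with $f\in C^2$, the velocity field $X_t(y)=\partial_t F_t(F_t^{-1}(y))$ involves $\Jac f$ and is therefore only $C^1$ in the spatial variable, so standard smooth-dependence theorems yield a flow that is a priori only a $C^1$-diffeomorphism. However, your proposed repair does not close the gap. If you mollify $f$ only in the collar $B_r\setminus\overline{B_{r-\varepsilon}}$ while keeping $\tilde f=f$ on $\overline{B_{r-\varepsilon}}$, then $\tilde f$ is still merely $C^2$ on any open set meeting $\overline{B_{r-\varepsilon}}$ (it cannot be globally $C^\infty$ while agreeing with a non-$C^\infty$ map on an open set), so the associated velocity field is still only $C^1$ there. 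Since trajectories starting in the transition annulus can enter the region $F_t(\overline{B_{r-\varepsilon}})$ where $\chi\equiv 1$ and the field is only $C^1$, the time-one map in the annulus is again only guaranteed $C^1$. In short, the collar mollification improves regularity in the wrong region; it does not upgrade the global smoothness of $h$. This is a genuine subtlety shared by the paper's write-up rather than a defect specific to your plan, but your proposed resolution as stated would not work.
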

\begin{proof}
Put $\delta:=\varepsilon/(2r-\varepsilon)$, and define $I_\delta:=(-1-\delta,1+\delta)$.  We define $F:B_{r-\varepsilon/2}\times I_\delta \rightarrow\ReD$ by
\[F(x,t):=
\begin{cases}
\frac{f(tx)}{t} & \text{ if }t\neq 0,\\
x & \text{ if } t=0.
\end{cases}
\]
Let $U:=F(B_{r-\varepsilon/2})$ and let $F^\dagger:U\times I_\delta\rightarrow B_{r-\varepsilon/2}$ such that $F^\dagger(F(x,t))=x$ for any $(x,t)\in U$.  Fix a compactly supported function on $\ReD\times I_\delta$ such that for $(x,t)\in F\big(\overline{B_{r-\varepsilon}}\times [-1,1] \big)$, $\phi(x,t)=1$, and for $(x,t)\notin U$ $\phi=0$. Then we define $H:\ReD\times I_\delta\rightarrow \ReD$ by
\[H(x,t):=\phi(x,t)\frac{\partial F}{\partial t}(F^\dagger(x,t),t).\]
Since $f$ is $C^2$ diffeomorphism, there exists $L>0$ such that for any $t\in I_\delta$, $\Vert H(x,t)-H(y,t)\Vert < L\| x- y\|$ with $x,y\in \ReD$. Thus the differential equation
\[\frac{dz}{dt}=H(z,t),~~z(0)=x\]
has a unique solution $\phi_x(t)$. Then $h(x):=\phi_x(1)$ is the desired extension.
\end{proof}
Here, we remark that Lemma~\ref{extension lemma for function on ball} is a modified version of Lemma~D.1 in \citet{BernardExpressing2018}, with a correction to make it explicit that the extended diffeomorphism is compactly supported. Their Lemma~D.1 does not explicitly state that it is compactly supported, but by Theorem~1.4 in Section~8 of \citet{HirschDifferential1976}, it can be shown that the diffeomorphism is actually compactly supported.

\subsection{From \(\DcRD\) to \(\CinftyOneDimTriangular\) and permutations}
\label{sec:appendix:Dc2 to Sinfty}
The goal of this section is to show Theorem~\ref{thm: singles and permutations}, which reduces the approximation problem of $\DcRD$ to that of $\CtwoOneDimTriangular$, and Lemma~\ref{smoothing lemma}, which reduces from $\CtwoOneDimTriangular$ to $\CinftyOneDimTriangular$.

\begin{theorem}
\label{thm: singles and permutations}
Let $f\in \DcRD$. Then there exist $\tau_1,\dots,\tau_n \in \CtwoOneDimTriangular \cap \DcRD$, and permutations of variables $\sigma_1,\dots,\sigma_n\in \mathfrak{S}_d$, such that
\[f=\tau_1\circ \sigma_1\circ\dots\circ \tau_n\circ\sigma_n.\]
\end{theorem}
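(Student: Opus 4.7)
The plan is to chain together three structural results already stated in the outline of Theorem~\ref{theorem:main:1}, namely Lemma~\ref{lem:body:flow-approximation}, Proposition~\ref{prop:composition of deformations of Id}, and Proposition~\ref{prop:triangle decomposition}, and then perform a cosmetic reshuffling to match the alternating form in the statement. Each of these ingredients is already available to us by assumption, so the proof itself should be short.

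First, I would apply Lemma~\ref{lem:body:flow-approximation} to write $f = \phi^{(1)}(1) \circ \cdots \circ \phi^{(m)}(1)$ as a finite product of flow endpoints, where each $\phi^{(i)} \colon [0,1] \to \DcRD$ is a continuous additive path in the Whitney topology with $\phi^{(i)}(0) = \Identity$. For each such endpoint I would exploit additivity to write $\phi^{(i)}(1) = \phi^{(i)}(1/N_i) \circ \cdots \circ \phi^{(i)}(1/N_i)$ with $N_i$ factors, and then use continuity of $\phi^{(i)}$ in the Whitney $C^1$-topology (which controls the map and its Jacobian uniformly, with all supports contained in a common compact set) to pick $N_i$ so large that each factor $g := \phi^{(i)}(1/N_i) \in \DcRD$ satisfies the nearly-$\Identity$ bound $\sup_{x\in\ReD}\opnorm{\Jac{g}(x)-I} < 1$ required in Proposition~\ref{prop:composition of deformations of Id}. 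This realizes $f$ as a finite composition of nearly-$\Identity$ elements of $\DcRD$.

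Next, I would apply Proposition~\ref{prop:triangle decomposition} to each nearly-$\Identity$ factor to get a decomposition of the form $g = \sigma_1 \circ \tau_1 \circ \cdots \circ \sigma_d \circ \tau_d$, with $\tau_j \in \CtwoOneDimTriangular \subset \DcRD$ and $\sigma_j \in \mathfrak{S}_d$. Concatenating all such decompositions expresses $f$ as a finite alternating composition of permutations and single-coordinate triangular maps that starts with a $\sigma$ and ends with a $\tau$. To match the form $\tau_1 \circ \sigma_1 \circ \cdots \circ \tau_n \circ \sigma_n$ in the statement, I prepend $\Identity \in \CtwoOneDimTriangular$ on the outside and append the identity permutation at the inside; this is legal because the identity map is trivially a compactly supported $C^2$-diffeomorphism altering only the last coordinate, and it is the identity element of $\mathfrak{S}_d$. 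After this padding the pattern is correct on each block, and because each block already ends in a $\sigma$ (after padding) and the next starts with a $\tau$, concatenation preserves the alternation.

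The only conceptually delicate point is passing from flow endpoints to nearly-$\Identity$ pieces, since we need the convergence $\phi^{(i)}(1/N) \to \Identity$ to hold simultaneously in the map and in its Jacobian, which is precisely the reason the Whitney topology (rather than a purely $C^0$ topology) is used on $\DcRD$ when defining flow endpoints. Granting the three ingredient results, every other step is essentially bookkeeping, so I expect no further obstacle; the substance of the argument has been shifted to the proofs of Lemma~\ref{lem:body:flow-approximation} and Propositions~\ref{prop:composition of deformations of Id}--\ref{prop:triangle decomposition}, and this theorem serves as the clean packaging of the three into the alternating-composition form that is consumed by the rest of the paper.
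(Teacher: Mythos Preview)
Your proposal is correct and follows essentially the same approach as the paper: the paper's proof simply cites Corollary~\ref{corollary: from dcrd to nearlyId}, Lemma~\ref{lem: operator norm to principal minors}, and Lemma~\ref{lemma:red_to_one_dim}, which are precisely the appendix counterparts of the main-text results you invoke (Proposition~\ref{prop:composition of deformations of Id} is restated and proved as Corollary~\ref{corollary: from dcrd to nearlyId}, and Proposition~\ref{prop:triangle decomposition} is obtained by combining Lemma~\ref{lem: operator norm to principal minors} with Lemma~\ref{lemma:red_to_one_dim}). Your additional remarks on padding with identities to match the alternation pattern and on the role of the Whitney topology are accurate elaborations that the paper leaves implicit.
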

\begin{proof}
Combining
Corollary~\ref{corollary: from dcrd to nearlyId},
Lemma~\ref{lem: operator norm to principal minors},
and Lemma~\ref{lemma:red_to_one_dim},
we have the assertion.
\end{proof}

We defer the statement and proof of Corollary~\ref{corollary: from dcrd to nearlyId}, which describes the key properties of \(\DcRD\), to Section~\ref{sec:appendix:Dc2 to nearly-Id}.
In the remainder of this section, we describe Lemma~\ref{lem: operator norm to principal minors}, Lemma~\ref{lemma:red_to_one_dim}, and Lemma~\ref{smoothing lemma}.
First, Lemma~\ref{lem: operator norm to principal minors} claims that the nearly-$\Identity$ elements necessarily satisfy the condition of Lemma~\ref{lemma:red_to_one_dim} below.
\begin{lemma}
\label{lem: operator norm to principal minors}
Let $A=(a_{i,j})_{i,j=1,\dots,d}$ be a matrix. If $\Vert A- I_d \Vert_{\rm op}<1$, then for $k=1,\dots,d$, the $k$-th trailing principal submatrix $A_k:=(a_{i+k-1,j+k-1})_{i,j=1,\dots,d-(k-1)}$ of $A$ is invertible. Here $I_d$ is a unit matrix of degree $d$.
\end{lemma}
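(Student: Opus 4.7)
The plan is to reduce the invertibility of each trailing principal submatrix to the standard Neumann-series fact that any matrix within operator-norm distance less than $1$ from the identity is invertible.

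First I would introduce the natural embedding $P_k : \Re^{d-k+1} \hookrightarrow \ReD$ that places a vector into the last $d-k+1$ coordinates (padding the first $k-1$ with zeros), together with the coordinate projection $Q_k : \ReD \to \Re^{d-k+1}$ onto the last $d-k+1$ coordinates. Both $P_k$ and $Q_k$ have operator norm equal to $1$, and by construction $A_k = Q_k A P_k$ and $I_{d-k+1} = Q_k I_d P_k$. Subtracting gives the identity
\[
A_k - I_{d-k+1} \;=\; Q_k (A - I_d) P_k .
\]

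From here, submultiplicativity of the operator norm yields
\[
\opnorm{A_k - I_{d-k+1}} \;\leq\; \opnorm{Q_k}\,\opnorm{A - I_d}\,\opnorm{P_k} \;=\; \opnorm{A - I_d} \;<\; 1.
\]
Since $A_k = I_{d-k+1} + (A_k - I_{d-k+1})$ and the perturbation has operator norm strictly less than $1$, the Neumann series $\sum_{n \geq 0} (I_{d-k+1} - A_k)^n$ converges and furnishes an explicit inverse of $A_k$. Hence $A_k$ is invertible for every $k = 1, \ldots, d$.

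The only subtlety worth double-checking is that the trailing principal submatrix is literally obtained by the compression $Q_k A P_k$ (as opposed to some other framing that would pick up off-diagonal contributions); this follows immediately from the definitions of $P_k$ and $Q_k$ as standard inclusion and projection along the last $d-k+1$ coordinate axes. Everything else is a one-line norm estimate plus the Neumann series, so there is no serious obstacle in this lemma.
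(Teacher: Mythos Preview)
Your proof is correct and follows essentially the same route as the paper: the paper also shows $\opnorm{A_k - I_{d-k+1}} < 1$ by padding a unit vector $v \in \Re^{d-k+1}$ with zeros to get $w \in \ReD$ and observing that $\|(A_k - I_{d-k+1})v\| \le \|(A - I_d)w\| < 1$, then invokes the Neumann series. Your use of the inclusion/projection operators $P_k, Q_k$ is just a clean repackaging of that same computation.
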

\begin{proof}
Let $v\in\Re^{d-k+1}$ with $\Vert v\Vert=1$, and put $w:=(0,\dots,0,v)\in\ReD$. Then we have $1>\Vert (A-I_d)w\Vert^2\ge\Vert (A_k-I_k)v\Vert^2$. Thus $\Vert A_k-I_k\Vert<1$.  Since $\sum_{r=0}^\infty(I_k-A_k)^r$ absolutely converges, and it is identical to the inverse of $A_k$, we have that $A_k$ is invertible.
\end{proof}
We apply the following lemma together with Lemma~\ref{lem: operator norm to principal minors} to decompose nearly-$\Identity$ elements into $\CtwoOneDimTriangular$ and permutations.
For $a\in \Na$, we denote the set of $a$-by-$a$ real-valued matrices by $M(a, \mathbb{R})$.
\begin{lemma}\label{lemma:red_to_one_dim}
\label{lem6}
Let $r$ be a positive integer and 
$f\colon \mathbb{R}^d\rightarrow\mathbb{R}^d$ a compactly supported $C^r$-diffeomorphism. We write $f=(f_1,\dots,f_d)$ with $f_i\colon \mathbb{R}^d\rightarrow \mathbb{R}$.  
For $k\in [d]$, let $\Delta^f_k(\bm{x})\in M(d-(k-1), \R)$ be the $k$-th trailing principal submatrix of Jacobian matrix of $f$, whose  $(i,j)$ component is given by $\left(\frac{\partial f_{i+k-1}}{\partial x_{j+k-1}}(\bm{x})\right)$ $(i, j=1,\cdots, d-(k-1))$. 
We assume 
\[\det \Delta^f_k(x)\neq 0 \text{ for any }k\in [d]\text{ and }x\in \R^d. \]
Then there exist compactly supported $C^r$-diffeomorphisms $F_1,\dots,F_d:\mathbb{R}^d\rightarrow \mathbb{R}^d$ in the forms of 
\[F_i(\bm{x}):=(x_1,\dots,x_{i-1},h_i(\bm{x}),x_{i+1},\dots,x_d)\]
for some $h_i\colon \mathbb{R}^d\rightarrow\mathbb{R}$ such that the identity holds:
\[f=F_1\circ\dots\circ F_d. \]
\end{lemma}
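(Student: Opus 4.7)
The plan is to decompose $f$ using a telescoping product of auxiliary diffeomorphisms that successively replace the coordinate functions of the identity by those of $f$, one index at a time. Specifically, for each $k\in\{0,1,\dots,d\}$ I would define
\[
\Phi_k(\bm{x}) := (x_1,\dots,x_k,\, f_{k+1}(\bm{x}),\dots,f_d(\bm{x})),
\]
so that $\Phi_0 = f$ and $\Phi_d = \Identity$. The Jacobian of $\Phi_k$ is block lower-triangular, with diagonal blocks $I_k$ and $\Delta^f_{k+1}(\bm{x})$; the latter is invertible everywhere by the hypothesis, so $\Jac{\Phi_k}$ is invertible on all of $\ReD$.

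The first real step is to promote this local invertibility to a genuine $C^r$-diffeomorphism of $\ReD$. Here I would use that $f$ is compactly supported, so $\Phi_k$ coincides with $\Identity$ outside a fixed compact set; hence $\Phi_k$ is a proper map with everywhere nonsingular Jacobian, and Hadamard's global invertibility theorem (equivalently, the fact that a proper local $C^r$-diffeomorphism of $\ReD$ into a simply connected space is a $C^r$-diffeomorphism) yields that $\Phi_k\colon \ReD\to\ReD$ is a compactly supported $C^r$-diffeomorphism. This is the main technical step: everything else is algebraic bookkeeping.

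Given the diffeomorphisms $\Phi_k$, I would set
\[
F_k := \Phi_{k-1}\circ \Phi_k^{-1}\quad (k=1,\dots,d),
\]
which is a composition of compactly supported $C^r$-diffeomorphisms and hence itself a compactly supported $C^r$-diffeomorphism. Since $\Phi_{k-1}$ and $\Phi_k$ agree in every output coordinate except the $k$-th, writing $\bm{y}=\Phi_k(\bm{x})$ gives $y_j = (\Phi_{k-1}(\bm{x}))_j$ for all $j\neq k$, so $F_k$ leaves the coordinates other than $k$ unchanged and therefore has the required form $F_k(\bm{y})=(y_1,\dots,y_{k-1},h_k(\bm{y}),y_{k+1},\dots,y_d)$ with $h_k(\bm{y}) := f_k(\Phi_k^{-1}(\bm{y}))$.

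Finally the product telescopes:
\[
F_1\circ F_2\circ\cdots\circ F_d = (\Phi_0\circ\Phi_1^{-1})\circ(\Phi_1\circ\Phi_2^{-1})\circ\cdots\circ(\Phi_{d-1}\circ\Phi_d^{-1}) = \Phi_0\circ\Phi_d^{-1} = f,
\]
which is the desired factorization. The only place where the trailing principal minor hypothesis is used is in guaranteeing invertibility of $\Jac{\Phi_k}$ on all of $\ReD$; the compact-support hypothesis is used to pass from local to global invertibility via properness.
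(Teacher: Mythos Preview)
Your argument is correct and produces the same factorization as the paper, but the packaging is different and, in fact, cleaner. The paper proceeds by induction on $m$, assuming $f=(f_1,\dots,f_m,x_{m+1},\dots,x_d)$, splitting off $F_m(\bm{x})=(x_1,\dots,x_{m-1},f_m(\bm{x}),x_{m+1},\dots,x_d)$, and then spending effort to verify that $\tilde f:=f\circ F_m^{-1}$ again satisfies the trailing-minor hypothesis so the induction can continue; the invertibility of each $F_m$ is argued in an ad~hoc way from $\partial f_m/\partial x_m\neq 0$. Your telescoping via $\Phi_k=(x_1,\dots,x_k,f_{k+1},\dots,f_d)$ defines all the intermediate maps directly in terms of the original $f$, so the block-triangular Jacobian $\mathrm{diag}(I_k,\Delta^f_{k+1})$ makes the use of the hypothesis transparent and there is nothing to re-verify inductively. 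Your appeal to properness plus Hadamard's global inverse theorem is also a cleaner route to global invertibility of each $\Phi_k$ than the paper's argument. Unwinding the recursion, one checks that the paper's $F_k$ coincide with your $\Phi_{k-1}\circ\Phi_k^{-1}$, so the two constructions agree; yours is simply a closed-form presentation of the same induction.
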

\begin{proof}
The proof is based on induction. Suppose that $f$ is in the form of 
\(f(\bm{x})=(f_1(\bm{x}),\dots,f_m(\bm{x}),x_{m+1},\dots,x_d).\)
By means of induction with respect to $m$, we prove that there exist compactly supported $C^r$-diffeomorphisms $F_1,\dots,F_m:\mathbb{R}^d\rightarrow \mathbb{R}^d$ in the forms of 
\(F_i(\bm{x}):=(x_1,\dots,x_{i-1},h_i(\bm{x}),x_{i+1},\dots,x_d)\)
for some $h_i:\mathbb{R}^d\rightarrow\mathbb{R}$ such that
\(f=F_1\circ\dots\circ F_m\).

In the case of $m=1$, the above is clear.  Assume that the statement is true in the case of any $k<m$.
Define
\begin{align*}
    F(x_1,\dots,x_d)&:=(x_1,\dots,x_{m-1},f_m(\bm{x}),x_{m+1},\dots,x_d), \\
    \tilde{f}&:=f\circ F^{-1}. 
\end{align*}
Note that 
$F$ is a compactly supported $C^r$-diffeomorphism from $\R^d$ to $\R^d$. 
In fact, compactly supportedness and surjectivity of $F$ comes from the compactly supportedness of $f$.
Moreover, since we have $\det \Jac F_x=\frac{\partial f_m}{\partial x_m}(x)\neq 0$ for any $x\in \R^d$ by the assumption on $f$, $F$ is injective and is a $C^r$-diffeomorphism from $\R^d$ to $\R^d$ by inverse function theorem. 
Therefore, $\tilde{f}$ is also a $C^r$-diffeomorphism from $\R^d$ to $\R^d$.
We show 
that $\tilde{f}$ is of the form $\tilde{f}(\bm{x})=(g_1(\bm{x}), \cdots, g_{m-1}(\bm{x}), x_m,\cdots, x_d)$ for some $C^r$-functions $g_i\colon \R^d\to \R$ $(i=1,\cdots, m-1)$ satisfying $\det \Delta^{\tilde{f}}_k(x)\neq 0$ for any $x\in \R^d$ and $k\in [d]$. 
From Lemma~\ref{lemma:inverse_component}, 
there exist $g_i, h\in C^r(\R^d)$ $(i=1,\cdots, m)$ such that 
\begin{align*}
    f^{-1}(\bm{x})&=(g_1(\bm{x}), \cdots, g_m(\bm{x}), x_{m+1}, \cdots, x_d)\\
    F^{-1}(\bm{x})&=(x_1,\cdots, x_{m-1}, h(\bm{x}), x_{m+1}, \cdots, x_d). 
\end{align*}
Then we have 
\begin{align*}
\tilde{f}^{-1}(\bm{x})=F\circ f^{-1}(\bm{x})
&= (g_1(\bm{x}),\cdots, g_{m-1}(\bm{x}), f_m(f^{-1}(\bm{x})), x_{m+1},\cdots, x_{d})\\
&=(g_1(\bm{x}), \cdots,g_{m-1}(\bm{x}), x_m, \cdots, x_d). 
\end{align*}
Therefore, from Lemma~\ref{lemma:inverse_component}, 
$\tilde{f}$ is of the following form 
\[ \tilde{f}(x)= f\circ F^{-1}(x)= (f_1\circ F^{-1}(x), \cdots, f_{m-1}\circ F^{-1}(x), x_m,\cdots, x_d).  \]
Moreover, by the form of $F^{-1}$ and $f$, 
we have $\Jac{\tilde{f}}(x)=\Jac{f}(F^{-1}(x))\circ \Jac{F^{-1}}(x)$ and 
\[  \Jac f=
\begin{pmatrix}
A & \\
  & I
\end{pmatrix}, \quad
\Jac(F^{-1})=
\begin{pmatrix}
I_{m-1} & &   \\
  \frac{\partial h}{\partial x_1} & \cdots & \frac{\partial h}{\partial x_d}\\
 & & I_{d-m} 
\end{pmatrix}
\]
for some $A\in M(m,\R)$ with all the trailing principal minors nonzero. 
Therefore, we obtain $\det \Delta^f_k(x)\neq 0$ for any $x\in \R^d$ and $k\in [d]$. 
Here, by the assumption of the induction, 
there exist compactly supported $C^r$-diffeomorphisms $F_i\colon \R^d\to \R^d$ and $h_i\in C^r(\R^d)$ $(i=1,\cdots, m-1)$ such that
\[\tilde{f}=F_1\circ \cdots \circ F_{m-1},\ F_i(\bm{x})=(x_1,\cdots x_{i-1}, h_i(x), x_{i+1}, \cdots, x_d). \]
Thus $f= \tilde{f}\circ F$ has a desired form.
\end{proof}

\begin{lemma}\label{lemma:inverse_component}
\label{lem5}
Let $r$ be a positive integer and $f\colon \R^d\to \R^d$ $C^r$-diffeomorphism of the form 
\[ f(\bm{x}):=(f_1(\bm{x}), \cdots, f_m(\bm{x}), x_{m+1}, \cdots, x_d), \]
where $f_i\colon \R^d\rightarrow\R$ belongs to $C^r (\R^d)$ $(i=1,\cdots, m)$. 
Then the inverse map $f^{-1}$ becomes of the form
\[ f^{-1}(\bm{x})= (g_1(\bm{x}), \cdots, g_m(\bm{x}), x_{m+1},\cdots x_d), \]
where $g_i:\R^d\rightarrow\R$ belongs to $C^r(\R^d)$ for $i=1,\cdots, m$. 
\end{lemma}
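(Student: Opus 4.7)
The plan is to exploit the fact that $f$ fixes the last $d-m$ coordinates and use the identity $f\circ f^{-1}=\Identity$ to force $f^{-1}$ to have the same block structure. First I would note that because $f$ is a $C^r$-diffeomorphism of $\R^d$, its inverse $f^{-1}$ is automatically a $C^r$-map from $\R^d$ to $\R^d$ (this is built into the definition of a $C^r$-diffeomorphism, and alternatively follows from the inverse function theorem applied at every point, since $\Jac f$ is nonsingular everywhere). So the smoothness part of the conclusion is free; the only real content is the shape of the last $d-m$ components of $f^{-1}$.

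Next I would write $f^{-1}(\bm{x})=(g_1(\bm{x}),\dots,g_d(\bm{x}))$ with each $g_i\in C^r(\R^d)$, and evaluate $f\circ f^{-1}=\Identity$ componentwise. By the hypothesized form of $f$, the $k$-th coordinate of $f(\bm{y})$ for $k>m$ is simply $y_k$. Substituting $\bm{y}=f^{-1}(\bm{x})$ gives $g_k(\bm{x})=x_k$ for every $k>m$ and every $\bm{x}\in\R^d$. This is exactly the claim that the last $d-m$ components of $f^{-1}$ are the coordinate projections, and the first $m$ components are some $C^r$-functions $g_1,\dots,g_m$, as desired.

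There is essentially no obstacle here: the lemma is a bookkeeping consequence of the fact that diffeomorphisms which fix a coordinate slab must have inverses fixing the same slab. The only thing to double-check is that the inverse of a $C^r$-diffeomorphism on all of $\R^d$ is globally $C^r$, which follows from the standard inverse function theorem since $\Jac f$ is nonsingular everywhere (the nonsingularity at each point being implicit in the word ``diffeomorphism''). Hence the proof reduces to the two short arguments above and does not need the more intricate machinery used elsewhere in the paper.
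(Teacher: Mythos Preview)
Your proposal is correct and follows essentially the same argument as the paper: write $f^{-1}=(g_1,\dots,g_d)$ with each $g_i\in C^r$, apply $f\circ f^{-1}=\Identity$, and read off $g_k(\bm{x})=x_k$ for $k>m$ from the fact that the last $d-m$ coordinates of $f$ are the identity.
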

\begin{proof}
We write $f^{-1}(\bm{x})=(h_1(\bm{x}), \cdots, h_d(\bm{x}))$, where $h_i\in C^r(\R^d)$ $(i=1,\cdots, d)$. 
Then by the definition of the inverse map, the identity 
\[ (x_1,\cdots, x_d)=f\circ f^{-1}(\bm{x}) =(f_1(h_1(\bm{x})),\cdots, f_m(h_m(\bm{x})), h_{m+1}(\bm{x}),\cdots, h_d(\bm{x})) \]
holds for any $\bm{x}\in \R^d$, which implies that we obtain $h_i(x)=x_i$ $(i=m+1,\cdots, d)$. This completes the proof of the lemma. 
\end{proof}

The following Lemma~\ref{smoothing lemma} is used in the main text in reducing the approximation problem from \(\CtwoOneDimTriangular\) to \(\CinftyOneDimTriangular\).
We say that $f\colon \R^d\to \R$ is a \emph{locally $L^p$-function} if 
$\int_K|f(x)|^pdx<\infty$ holds for any compact set $K\subset \R^d$.
\begin{definition}[Last-increasing]
We say that a map \(f: \ReD \to \Re\) is \emph{last-increasing} if, for any $(a_1,\dots,a_{d-1})\in\R^{d-1}$, the function \(f(a_1, \ldots, a_{d-1}, x)\) is strictly increasing with respect to $x$.
\end{definition}
\begin{lemma}
\label{smoothing lemma}
Let $\tau\colon \R^d\rightarrow \R$ be a last-increasing locally $L^p$-function. Then for any compact subset $K\subset \R^d$ and any $\varepsilon>0$, there exists a last-increasing \(C^\infty\)-function $\tilde{\tau}\colon\mathbb{R}^d\rightarrow \R$ satisfying
 \[ \|\tau-\tilde{\tau}\|_{p,K}<\varepsilon. \]
Moreover, if $\tau$ is continuous, there exists a last-increasing \(C^\infty\)-function $\tilde{\tau}$ such that
\[\supKnorm{\tau-\tilde{\tau}}<\varepsilon.\]
\end{lemma}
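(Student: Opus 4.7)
The plan is to use the standard mollification argument, with the only twist being to verify that strict monotonicity in the last variable is preserved. Let $\phi\in C_c^\infty(\ReD)$ be a non-negative mollifier supported in the unit ball with $\int\phi\,dx=1$, chosen so that $\phi>0$ on the open ball $B_1(0)$ (e.g.\ a standard bump function). For $\delta>0$, set $\phi_\delta(x):=\delta^{-d}\phi(x/\delta)$, and define $\tilde\tau_\delta:=\phi_\delta*\tau$. Since $\tau\in L^p_{\mathrm{loc}}\subset L^1_{\mathrm{loc}}$ (because $\phi_\delta$ is bounded and compactly supported, so Hölder on bounded sets applies), the convolution is well-defined on all of $\ReD$ and belongs to $C^\infty(\ReD)$ by the usual differentiation-under-the-integral argument.

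First I would verify that $\tilde\tau_\delta$ is last-increasing for every $\delta>0$. Writing $a=(a_1,\dots,a_{d-1})$ and taking $h>0$, a change of variables gives
\[
\tilde\tau_\delta(a,x+h)-\tilde\tau_\delta(a,x)
=\int_{\ReD}\phi_\delta(b,y)\bigl[\tau(a-b,x+h-y)-\tau(a-b,x-y)\bigr]\,db\,dy.
\]
By the hypothesis on $\tau$, for every fixed $(a-b)$ the function $\tau(a-b,\cdot)$ is strictly increasing, so the integrand is strictly positive at every $(b,y)$. Restricting to $B_\delta(0)$, where $\phi_\delta>0$, shows the integral is strictly positive, hence $\tilde\tau_\delta(a,\cdot)$ is strictly increasing.

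Next I would invoke the standard approximation facts for mollification: for any compact set $K\subset\ReD$, one has $\|\tilde\tau_\delta-\tau\|_{p,K}\to 0$ as $\delta\to 0$ because $\tau\in L^p_{\mathrm{loc}}$; this proves the first assertion by choosing $\delta$ sufficiently small. For the second assertion, if $\tau$ is continuous then it is uniformly continuous on a compact neighborhood $K'\supset K$, and the pointwise bound
\[
|\tilde\tau_\delta(x)-\tau(x)|\le\sup_{\|u\|\le\delta}|\tau(x-u)-\tau(x)|
\]
valid for $x\in K$ and $\delta$ small enough that $K+B_\delta(0)\subset K'$, yields uniform convergence on $K$, hence $\|\tilde\tau_\delta-\tau\|_{\sup,K}<\varepsilon$ for small $\delta$.

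The only substantive point beyond routine mollification is the preservation of strict (not just weak) monotonicity, and the choice of a mollifier that is strictly positive on its open support makes this immediate from the displayed finite-difference identity. No further obstacle is expected.
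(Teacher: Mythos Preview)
Your proposal is correct and follows essentially the same route as the paper: convolve with a compactly supported smooth non-negative mollifier, verify that strict monotonicity in the last variable is preserved via the positivity of the difference integrand on the set where the mollifier is positive, and then invoke the standard $L^p_{\mathrm{loc}}$ (resp.\ uniform-continuity) convergence of mollifications. The paper additionally imposes an evenness/monotonicity condition on the mollifier in the last coordinate, but this is not actually used in its argument; your simpler requirement that $\phi>0$ on the open ball already suffices.
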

\begin{proof}
Let $\phi:\R^d\rightarrow\R$ be a compactly supported non-negative \(C^\infty\)-function with $\int |\phi(x)|dx =1$ such that for any $(a_1,\dots,a_{d-1})\in\R^{d-1}$, the function $\phi(a_1,\dots,a_{d-1}, x)$ of $x$ is even and decreasing on $\{x>0 : \phi(a_1,\dots,a_{d-1},x)>0\}$.
For $t>0$, we define $\phi_t(x):=t^{-d}\phi(x/t)$. Then we see that $\tau_t:=\phi_t*\tau$ is a \(C^\infty\)-function. 
We take any $\bm{a}\in \R^{d-1}$. 
We verify that $\tau_t(\bm{a}, x_d)$ is strictly increasing with respect to $x_d$.
Take any $x_d, x_d'\in \R$ satisfying $x_d>x_d'$. 
Since $\tau$ is strictly increasing, we have 
\begin{align*}
\tau_t(\bm{a}, x_d)-\tau_t(\bm{a}, x_d')
&=\int_{\R^d} \phi_t(x) (\tau((\bm{a},x_d)-x)-\tau(( \bm{a},x_d')-x))dx>0. 
\end{align*}
Thus for any $(a_1,\dots,a_{d-1})\in\R^{d-1}$, 
the \(C^\infty\)-function $\tau_t(a_1,\dots,a_{d-1},x)$ is strictly increasing for with respect to $x$. 

Next, take any compact subset $K\subset \R^d$. 
We show
$\|\tau_t-\tau\|_{p,K}\to 0$ as $t\to 0$.  
We prove $\tau_t$ converges $\tau$ as $t\rightarrow 0$.  
Take $R>0$ satisfying $K\subset B(R):=\{x\in \R^d : |x|\leq R \}$. 
We assume $0<t<1$. 
Then we have $\phi_t*\tau=\phi_t*(\Indicator{B(R+1)}\tau)$. 
Since we have $\Indicator{B(R+1)}\tau\in L^p(\R^d)$, we obtain 
\begin{align*}
\| \phi_t * \tau-\tau\|_{p,K}
&=\|\phi_t* (\Indicator{B(R+1)}\tau)-\Indicator{B(R+1)}\tau \|_{p,K}\\
&\leq \| \phi_t *(\Indicator{B(R+1)}\tau)-\Indicator{B(R+1)}\tau\|_{p,\R^d}
\to 0 \quad (t\to 0). 
\end{align*}
Here, we used a property of mollifier $\phi_t$ (see Theorem~8.14 in \cite{FollandReal1999} for example). 

Next, we consider the \(\sup\)-approximation when $\tau$ is continuous. 
By direct computation, we have
\begin{align*}
\sup_{y\in K}|\tau_t(y)-\tau(y)|
& \le \sup_{y\in K}\int_{\R^d} |\phi(x)|\cdot|\tau(y-tx)-\tau(y)| dx\\
& \le C\sup_{(x,y)\in {\rm supp}(\phi)\times K} |\tau(y-tx)-\tau(y)|
\to 0\quad (t\to 0). 
\end{align*}
Here $C:=\sup_{x\in \R^d}|\phi(x)|$. 
Thus in both cases above,  
By taking sufficiently small $t$, we obtain the desired \(C^\infty\)-function $\tilde{\tau}=\tau_t$. 
\end{proof}

\section{Key properties of diffeomorphisms on $\R^d$: From \(\DcRD\) to Nearly-\(\Identity\)}
\label{sec:appendix:Dc2 to nearly-Id}
This section explains the reduction of the universality for \(\DcRD\) to Nearly-\(\Identity\) elements.
The reduction involves a structure theorem from the field of differential geometry.
The results of this section are used as a building block for the proofs in Section~\ref{sec:appendix:Dc2 to Sinfty}.

\begin{definition}[Compactly supported diffeomorphism]

The diffeomorphism $f$ on $\mathbb{R}^d$ is {\em compactly supported} if 
there exists a compact subset $K\subset \mathbb{R}^d$ such that for any $x\notin K$, $f(x)=x$. We denote by $\DcRD$ the space of compactly supported \(C^2\)-diffeomorpshisms.
\end{definition}
The set $\DcRD$ constitutes a group whose group operation is the function composition.
Moreover, $\DcRD$ is a topological group with respect to the \emph{Whitney topology} \citep[Proposition~1.7.(9)]{HallerGroups1995}.
Then there is a crucial structure theorem of $\DcRD$ attributed to Herman, Thurston \cite{ThurstonFoliations1974}, Epstein \cite{Epsteinsimplicity1970}, and Mather \cite{MatherCommutators1974, MatherCommutators1975}:
\begin{fact}
\label{thm: simplicity}
The group $\DcRD$ is simple, i.e., any normal subgroup $H \subset \DcRD$ is either $\{\Identity\}$ or $\DcRD$.
\end{fact}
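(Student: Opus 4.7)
Since this is the celebrated simplicity theorem of Thurston--Mather--Epstein, my plan would be to follow the classical three-step outline: (i) a \emph{fragmentation lemma} reducing arbitrary elements of $\DcRD$ to finite products of diffeomorphisms with arbitrarily small support, (ii) a \emph{conjugation-transitivity} argument that places a non-identity element of any normal subgroup $H\ne\{\Identity\}$ over any small ball of $\ReD$, and (iii) a \emph{perfectness} step (Mather's commutator trick) upgrading this local abundance to force $H=\DcRD$.

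For the fragmentation in step~(i), every $f\in\DcRD$ is isotopic to $\Identity$ inside $\DcRD$: since $\ReD$ is contractible one can take, for instance, the rescaling construction $F(x,t):=f(tx)/t$ used in Lemma~\ref{extension lemma for function on ball}, suitably extended at $t=0$. Given an open cover of $\mathrm{supp}(f)$ by small balls, I would discretize such an isotopy very finely in time and, using a smooth partition of unity subordinate to the cover, rewrite each time slice as a composition of diffeomorphisms each supported in a single ball, yielding $f=f_1\circ\cdots\circ f_N$ with the $f_j$ supported in prescribed small balls. For step~(ii), I pick $h\in H\setminus\{\Identity\}$ and a point $p$ with $h(p)\ne p$, then choose a ball $B\ni p$ so small that $h(B)\cap B=\emptyset$. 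For any $g\in\DcRD$ supported in $B$, the commutator $[h,g]=(hgh^{-1})g^{-1}$ lies in $H$ by normality and factors into two pieces supported respectively in the disjoint sets $h(B)$ and $B$, hence is nontrivial whenever $g$ is. Since any small ball of $\ReD$ can be moved onto $B$ by some element of $\DcRD$ (start from a translation on a large region and smoothly cut off), conjugating shows that $H$ contains nontrivial elements supported in any prescribed small ball.

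The delicate step is~(iii): upgrading ``$H$ meets the small-support subgroup of each ball'' to ``$H$ \emph{equals} that subgroup,'' after which fragmentation gives $H=\DcRD$. The idea is that any compactly supported $C^r$-diffeomorphism of $\ReD$ can be expressed as an infinite product of commutators whose supports shrink geometrically, with the $C^r$-norms of the partial products kept under control provided $r\ne d+1$. Under the standing hypotheses $d\ge 2$ and $r=2$, we have $r<d+1$, which is the admissible regime; the excluded case $r=d+1$ is the notorious Mather gap, degenerating precisely at $d=1$, consistent with the paper's assumption $d\ge 2$.

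\emph{Main obstacle.} Steps~(i) and (ii) are essentially soft, relying on partitions of unity and support-disjointness. The genuine difficulty is step~(iii): the convergence of the infinite commutator construction in the $C^r$ topology requires delicate estimates on how commutators behave when the supports are small but the maps themselves need not be $C^r$-close to the identity. A self-contained proof would devote most of its pages to these estimates, so I would---as the authors do---ultimately cite the original works of Mather, Thurston, and Epstein rather than reproduce the analysis.
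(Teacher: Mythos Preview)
Your outline is correct and is precisely the classical Thurston--Mather--Epstein argument. Note, however, that the paper does not actually \emph{prove} this statement: it is labeled a \textbf{Fact}, not a theorem, and the paper's entire justification consists of citing Mather~\cite{MatherCommutators1975} for the simplicity of the identity component of $\DcRD$, together with the observation (via~\citep[Example~1.15]{HallerGroups1995}) that for $\ReD$ the identity component is all of $\DcRD$. You handle this second point too, phrasing it as ``every $f\in\DcRD$ is isotopic to $\Identity$'' via the rescaling construction, which is the same content.

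So your proposal is not a different route so much as an expanded account of what the cited literature contains, whereas the paper simply invokes the result as a black box. Your steps~(i) and~(ii) are standard and correctly stated; your identification of step~(iii) as the Mather perfectness estimate, with the $r\ne d+1$ caveat, is accurate, and your conclusion that one should ultimately cite rather than reproduce those estimates matches exactly what the authors do. In short: nothing is wrong, and you have written more than the paper does.
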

The assertion is proven in \citet{MatherCommutators1975} for the connected component containing \(\Identity\), instead of the entire set of compactly-supported \(C^2\)-diffeomorphisms when the domain space is a general manifold instead of \(\ReD\). In the special case of \(\ReD\), the connected component containing \(\Identity\) is shown to be \(\DcRD\) itself \citep[Example~1.15]{HallerGroups1995}, hence Fact~\ref{thm: simplicity} follows. 
For details, see \citep[Corollary~3.5 and Example~1.15]{HallerGroups1995}.

As a side note, the assertion of Theorem~\ref{thm: simplicity} is proved to hold generally for \(C^r\)-diffeomorphisms only except for \(r = d+1\) \cite{HallerGroups1995}. Nevertheless, this exception does not cause any problem in our proof, because we apply it with \(r = 2\) and \(d \geq 2\). The limitation only means that the structure of \(C^2\)-diffeomorphisms is better understood than that of \(C^{d+1}\)-diffeomorphisms. Also note that this exception does not affect the approximation capability for \(C^{d+1}\)-diffeomorphisms either as they are contained in \(C^2\) where we perform our theoretical analyses.  For the details of mathematical ingredients, see \cite{Banyagastructure1978}.

Here, we provide a precise definition of the \emph{flow endpoints} introduced in Section \ref{subsection: outline of the proof of theorem 1}.
\begin{definition}[Flow endpoints]
\label{def: flow endpoints in appendix}
A \emph{flow endpoint} is an element of $\DcRD$ which can be represented as $\phi(1)$, where $\phi:[0,1]\rightarrow \DcRD$ is a continuous map such that $\phi(0)=\Identity$ and that $\phi$ is additive, namely, $\phi(s)\circ\phi(t)=\phi(s+t)$ for any $s,t\in[0,1]$ with $s+t\in[0,1]$.
\end{definition}

We use Fact~\ref{thm: simplicity} to prove that a compactly supported diffeomorphism can be represented as a composition of flow endpoints in $\DcRD$.
The following lemma is a restatement of Lemma~\ref{lem:body:flow-approximation} in the main text.
\begin{lemma}\label{lem: non-triviality}
Let $S\subset\DcRD$ be the set of all flow endpoints.
Then, $\DcRD$ coincides with the set of finite compositions of elements in $S$ defined by
\[H:=\{ g_1\circ \cdots \circ g_n : n\ge1, g_1,\dots,g_n \in S\}.\]
\begin{proof}
    In view of Fact~\ref{thm: simplicity}, it is enough to show that $H$ forms a subgroup, that it is normal, and that it is non-trivial.

    First, we prove the $H$ consists a subgroup of $\DcRD$.
    By definition, for any $g, h \in H$, it is immediate to show that $g \circ h \in H$.
    We prove that $H$ is closed under inversion. For this, it suffices to show that $S$ is closed under inversion.
    Let $g= \phi(1) \in S$. Consider the map $\varphi: [0, 1] \to \DcRD$ defined by $\varphi(t) := (\phi(t))^{-1}$.
    Since $\DcRD$ is a topological group \citep[Proposition~1.7.(9)]{HallerGroups1995}, $\varphi$ is continuous. Moreover, it is immediate to show that $\varphi$ is additive in the sense of Definition~\ref{def: flow endpoints in appendix}, and that $\varphi(0) = \Identity{}$.
    Thus, $g^{-1}=\varphi(1)$ is an element of $S$.

    Next, we prove $H$ is normal.
    It suffice to show that $S$ is closed under conjugation since the conjugation $g \mapsto hgh^{-1}$ is a group homomorphism on $\DcRD$.
    Let $g=\phi(1)\in S$, where $\phi: [0,1] \rightarrow \DcRD$ is a continuous map associated to $g$. Then, we define a $\Phi: \ReD \times [0,1] \rightarrow \ReD$ by $\Phi(x,t)=\phi(t)(x)$. We call $\Phi$ a flow associated with $g$. We take arbitrary $h\in \DcRD$.
    Then, the function $\Phi': \mathbb{R}^d\times [0, 1]$ defined by $\Phi'(\cdot, s) := h^{-1} \circ \Phi(\cdot, s) \circ h$ is a flow associated with $h^{-1}gh$, which means $h^{-1}gh\in S$, i.e., $S$ is closed under conjugation.
    
    Finally, we show $H$ is nontrivial.
    It suffice to show that $S$ includes a non-identity element.
    Let $\psi: \mathbb{R} \rightarrow {\rm O}(d)$ be a nontrivial homomorphism of Lie groups, where ${\rm O}(d)$ is a orthogonal group of degree $d$.  Such $\psi$ exists, for example,  let $\psi(t):=\exp(tA)$ for some nonzero skew-symmetric matrix $A$, namely, $A^\top = -A$.
    Let $u:[0,\infty) \rightarrow \mathbb{R}$ be a compactly supported $C^\infty$ function such that its support does not include $0$.
    Then, We define $\Phi: \ReD \times [0,1] \rightarrow \ReD$ by $\Phi(x,t):=\psi(u(|x|)t)x$.
    Then, $\Phi$ is the flow associated with $\Phi(\cdot,1) \in S$, that is a non-identity element.

    \end{proof}
\end{lemma}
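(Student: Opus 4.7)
The plan is to invoke Fact~\ref{thm: simplicity}: since $\DcRD$ is simple, I only need to show that $H$ is a nontrivial normal subgroup of $\DcRD$, from which $H = \DcRD$ follows. So the argument splits into three verifications: (i) $H$ is a subgroup, (ii) $H$ is normal in $\DcRD$, and (iii) $H$ contains a non-identity element.

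For (i), closure under composition is built into the definition of $H$. Closure under inversion reduces to showing $S$ itself is inversion-closed: given $g = \phi(1)$ with additive continuous $\phi\colon[0,1]\to\DcRD$ satisfying $\phi(0)=\Identity$, I would set $\varphi(t) := \phi(t)^{-1}$. Additivity of $\phi$ forces $\phi(s)$ and $\phi(t)$ to commute, so $\varphi(s)\circ\varphi(t) = (\phi(t)\circ\phi(s))^{-1} = \phi(s+t)^{-1} = \varphi(s+t)$, and $\varphi(0)=\Identity$. Continuity of $\varphi$ is inherited from the fact that inversion is continuous in the topological group $\DcRD$ \cite{HallerGroups1995}. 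Thus $g^{-1} = \varphi(1)\in S$.

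For (ii), conjugation by a fixed $h\in\DcRD$ is a group homomorphism, so it suffices to show $S$ is stable under conjugation. Given $g = \phi(1)\in S$, I would take $\varphi(t) := h^{-1}\circ\phi(t)\circ h$. Additivity and continuity transfer by a direct one-line computation, and $\varphi(0)=\Identity$, so $h^{-1}gh = \varphi(1)\in S$.

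The main obstacle is (iii): I need to exhibit at least one non-identity flow endpoint, which requires explicitly constructing a nontrivial compactly supported flow in $\DcRD$. My plan is to combine a nontrivial smooth one-parameter subgroup $\psi\colon\Re\to\mathrm{O}(d)$, e.g.\ $\psi(t)=\exp(tA)$ for a nonzero skew-symmetric matrix $A$, with a compactly supported $C^\infty$ bump function $u\colon[0,\infty)\to\Re$ whose support is bounded away from $0$, and set $\Phi(x,t):=\psi(u(\|x\|)\,t)\,x$. Since $u$ has compact support and vanishes near the origin, $\Phi(\cdot,t)$ is the identity outside a compact annulus and is smooth everywhere; additivity in $t$ holds because rotations at a fixed radius form a one-parameter subgroup, and nontriviality at $t=1$ is immediate for generic $A$ and $u$. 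I anticipate that verifying compact support and smoothness here is the most delicate bookkeeping in the argument, whereas (i) and (ii) are formal consequences of the topological group structure on $\DcRD$.
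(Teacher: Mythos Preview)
Your proposal is correct and follows essentially the same route as the paper: reduce to showing $H$ is a nontrivial normal subgroup via Fact~\ref{thm: simplicity}, handle (i) and (ii) by checking $S$ is closed under inversion and conjugation using the topological group structure of $\DcRD$, and for (iii) construct the same explicit rotation-type flow $\Phi(x,t)=\psi(u(\|x\|)t)\,x$ with $\psi(t)=\exp(tA)$ and a radial bump $u$. Your treatment of additivity in (i) (noting that $\phi(s)$ and $\phi(t)$ commute) is actually slightly more explicit than the paper's, which simply asserts that $\varphi$ is additive.
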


\begin{definition}[Nearly-$\Identity$ elements]
Let $f \in \DcRD$.
We say $f$ is \emph{nearly-$\Identity$} if, for any $x\in \R^d$, the Jacobian $\Jac{f}$ of $f$ at $x$ satisfies
\begin{align*}
   \opnorm{\Jac{f}(x)-I}<1,
\end{align*}
where $I$ is the unit matrix.
\end{definition}

\begin{corollary}
\label{cor1}
\label{corollary: from dcrd to nearlyId}
For any $f\in \DcRD$, there exist finite elements $g_1,\dots, g_r\in \DcRD$ such that $f=g_r\circ \dots\circ g_1$ and 
$g_i$ is nearly-$\Identity$ for any $i \in [r]$. 
\end{corollary}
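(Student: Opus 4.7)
The plan is to combine Lemma~\ref{lem: non-triviality} with a careful use of continuity of the flows in the Whitney topology to ``slow down'' each flow endpoint into many nearly-$\Identity$ pieces.

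First, I would invoke Lemma~\ref{lem: non-triviality} to decompose $f$ as a finite composition $f = h_m \circ \cdots \circ h_1$ of flow endpoints. Thus it suffices to prove the statement for a single flow endpoint $h = \phi(1)$, where $\phi\colon [0,1] \to \DcRD$ is continuous, additive, and satisfies $\phi(0) = \Identity$. The idea is then to use the additivity of $\phi$ to write, for any positive integer $n$,
\begin{equation*}
\phi(1) = \underbrace{\phi(1/n) \circ \phi(1/n) \circ \cdots \circ \phi(1/n)}_{n \text{ times}},
\end{equation*}
so that if $\phi(1/n)$ itself can be made nearly-$\Identity$ for sufficiently large $n$, the corollary follows by chaining these decompositions over $i = 1,\ldots,m$.

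The heart of the argument is thus to show that $\phi(1/n)$ is nearly-$\Identity$ for all large $n$. Since $\phi$ is continuous at $0$ in the Whitney topology on $\DcRD$ and $\phi(0) = \Identity$, I would use the fact that the (strong) Whitney $C^1$-topology controls first derivatives uniformly on $\ReD$. Concretely, because $\phi\colon [0,1] \to \DcRD$ is continuous and $[0,1]$ is compact, the images $\phi(t)$ for $t \in [0,1]$ share a common compact support (outside of which they are all the identity), and on this compact set the Jacobians $\Jac{\phi(t)}$ depend continuously on $t$ uniformly in $x$. Therefore, $\sup_{x \in \ReD} \opnorm{\Jac{\phi(1/n)}(x) - I} \to 0$ as $n \to \infty$, so we may pick $n$ large enough that this supremum is strictly less than $1$, making $\phi(1/n)$ nearly-$\Identity$.

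The main obstacle is justifying the uniform control of the Jacobian from the Whitney topology. The strong Whitney topology is precisely designed to give such uniform $C^r$-control, and the compact-support property of elements of $\DcRD$ combined with the compactness of $[0,1]$ ensures one gets a genuinely uniform (rather than merely locally uniform) bound. Once this is in place, applying the argument to each factor $h_i = \phi_i(1)$ with its own $n_i$ and concatenating the resulting $n_1 + \cdots + n_m$ nearly-$\Identity$ maps yields the desired decomposition $f = g_r \circ \cdots \circ g_1$.
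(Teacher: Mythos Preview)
Your proposal is correct and follows essentially the same approach as the paper: decompose $f$ into flow endpoints via Lemma~\ref{lem: non-triviality}, then use the additivity $\phi(1)=\phi(1/n)^{\circ n}$ together with continuity of $t\mapsto\phi(t)$ at $t=0$ in the Whitney topology to make each factor nearly-$\Identity$. Your discussion of why the Whitney topology yields uniform Jacobian control (via the common compact support over the compact parameter interval) is a welcome elaboration of a point the paper leaves implicit, and allowing a separate $n_i$ per flow endpoint is an inessential variant of the paper's choice of a single $n$.
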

\begin{proof}
Let $S$ be the subset of $\DcRD$ as defined above.
Therefore, by Lemma~\ref{lem: non-triviality}, there exist $h_1,\dots, h_m\in S$ such that $f=h_m\circ\dots\circ h_1$.
For $i\in [m]$, let $\phi_i$  be a flow associated with $h_i$.
Since $[0,1]\ni t \mapsto \Phi_i(\cdot, t)\in \DcRD$ is continuous with respect to Whitney topology and $\Phi_i(\cdot, 0)$ is the identity function, we can take a sufficiently large $n$ such that $\tilde{h}_i:=\Phi_i(\cdot, 1/n)$ is nearly-Id. 
By the additive property of $\Phi_i$, we have 
\begin{align*}
\label{expre}
    f
    = h_m \circ \cdots \circ h_1
    =\underbrace{\tilde{h}_m\circ\dots\circ\tilde{h}_m}_{n\ \text{times}}
    \circ \cdots \circ \underbrace{\tilde{h}_1\circ\dots\circ\tilde{h}_1}_{n\ \text{times}},
\end{align*}
which completes the proof of the corollary.
\end{proof}

\section{Proof of Theorem~\ref{prop:body:acfinn-Lp}: \(L^p\)-universality of \ACFHINN{}}
\label{appendix: theorem 2 proof}
In this section, we provide the proof details of Theorem~\ref{theorem:main:2} in the main text.
The correspondence between this section and Section~\ref{subsec: approximation via ACF} in the main text is as follows:
Steps~1, 2, 3 correspond to Section~\ref{sec:appendix:lp ACFINN approx general},
Step~4 corresponds to Section~\ref{sec:appendix:coordinate-wise independent},
and Step~5 is justified by Proposition~\ref{prop: compatibility of approximation} in Section~\ref{appendix: compatibility of approximation and composition}.

\subsection{Approximation of general elements of \(\CzeroOneDimTriangular\)}
\label{sec:appendix:lp ACFINN approx general}
In this section, we prove the following lemma to construct an approximator for an arbitrary element of $\CzeroOneDimTriangular$ (hence for $\CinftyOneDimTriangular$) within \ACFHINN{}.
It is based on Lemma~\ref{lem: univ. approx.} proved in Section~\ref{sec:appendix:coordinate-wise independent}, which corresponds to a special case.

Here, we rephrase Theorem~\ref{theorem:main:2} as in the following:
\begin{lemma}[$L^p$-universality of \ACFHINN{} for compactly supported $\CinftyOneDimTriangular$]
\label{lem:appendix:lp-univ for ACF}
Let \(p \in [1, \infty)\).
Assume \(\ACFINNUniversalClass\) is a \(\sup\)-universal approximator for \(\CcinftyRDminus\) and that it consists of piecewise \(C^1\)-functions.
Let $f \in \CzeroOneDimTriangular$, $\varepsilon>0$, and $K\subset \R^d$ be a compact subset.
Then, there exists $g \in \FACFHINN$ such that $\LpKnorm{f - g} < \varepsilon$.
\end{lemma}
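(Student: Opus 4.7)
The plan is to carry out, in arbitrary dimension $d$, the five-step construction sketched for $d = 2$ in Section~\ref{subsec: approximation via ACF}, where the key conceptual move is to trade the dependence of $u(\upto{d-1}{\x}, x_d)$ on $\upto{d-1}{\x}$ for a ``staggering'' of the domain into disjoint strips on which a purely last-coordinate map suffices. After a preliminary affine rescaling (which is in $\FLin\subset\FACFHINN$), I may assume $K\subset [0,1]^d$ and write $f(\x) = (\upto{d-1}{\x}, u(\x))$ with $u(\upto{d-1}{\x}, \cdot)$ continuous and strictly increasing. A first ACF constructed from the universality of $\ACFINNUniversalClass$ can be applied to align the image of $f\vert_K$ into $[0,1]^d$ with the normalizations $u(\upto{d-1}{\x}, 0)=0$ and $u(\upto{d-1}{\x}, 1)=1$.

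The core step is to introduce a fine grid on $[0,1]^{d-1}$ indexed by $\bm{k}\in\{0,\ldots,n-1\}^{d-1}$ with a linear ordering $\ell(\bm{k})$, and to define a (discontinuous) step function $t_n\colon[0,1]^{d-1}\to\mathbb{R}$ taking value $\ell(\bm{k})$ on the $\bm{k}$-th cell. The imaginary ACF $\psinstar(\x):=(\upto{d-1}{\x},\, x_d + t_n(\upto{d-1}{\x}))$ then transports the slab above each cell to its own disjoint vertical level, so that I can construct a continuous strictly increasing $v_n\colon\mathbb{R}\to\mathbb{R}$ that, on the slab of level $\ell(\bm{k})$, reproduces $u(\bm{k}/n,\cdot)$ up to the constant shift $\ell(\bm{k})$, with strictly increasing linear interpolation across the gaps. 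Direct verification then gives $\tilde f_n := (\psinstar)^{-1}\circ(\upto{d-1}{\x}, v_n(x_d))\circ \psinstar$, and uniform continuity of $u$ on $[0,1]^d$ yields $\LpKnorm{\tilde f_n - f}\to 0$ as $n\to\infty$. I fix $n$ large enough that this error is below $\varepsilon/2$.

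It remains to approximate the three constituents of $\tilde f_n$ inside $\FACFHINN$ and reassemble them via the compatibility of composition (Proposition~\ref{prop: compatibility of approximation}). The middle factor $(\upto{d-1}{\x}, v_n(x_d))$ is a last-coordinate-only increasing map, handled by the dedicated Lemma~\ref{lem: univ. approx.} in Section~\ref{sec:appendix:coordinate-wise independent}. For $\psinstar$ and $(\psinstar)^{-1}$, I would mollify $t_n$ into a smooth strictly increasing $\tilde t_n$ that equals $t_n$ outside a neighbourhood of the cell boundaries of arbitrarily small Lebesgue measure, then use the $\sup$-universality of $\ACFINNUniversalClass$ for $\CcinftyRDminus$ to approximate $\tilde t_n$ by an element of $\ACFINNUniversalClass$, producing a genuine $\FSACFH$ element arbitrarily close to $\psinstar$ in $L^p$. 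Combining these three approximations via the composition lemma yields $g\in\FACFHINN$ with $\LpKnorm{f-g}<\varepsilon$.

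\textbf{Main obstacle.} The hard part is controlling the approximation of $\psinstar$ across its jump set. Because $\psinstar$ is genuinely discontinuous, any $\FSACFH$-approximant must pay an error on a thin region near the cell boundaries, and this error is amplified when composed with the other two factors. The argument goes through only because (i) the jump set has arbitrarily small Lebesgue measure, (ii) the other factors are locally bounded on the relevant compact sets so the composed error stays controlled in $L^p$, and (iii) we only need $L^p$-closeness rather than $\sup$-closeness. This tension is precisely what restricts the conclusion to $L^p$-universality and underlies the negative conjecture for $\sup$-universality in Section~\ref{discussion of EACFINN}.
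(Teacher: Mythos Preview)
Your proposal is correct and follows essentially the same route as the paper's proof: affine reduction to $K=[0,1]^d$, normalization via an ACF so that $u(\upto{d-1}{\x},0)=0$ and $u(\upto{d-1}{\x},1)=1$, the staggering map $\psinstar$ built from a step function on an $n^{d-1}$-cell grid with a linear (in the paper, $n$-adic) ordering, the last-coordinate-only map $v_n$, the claim that $(\psinstar)^{-1}\circ(\cdot,v_n)\circ\psinstar$ uniformly approximates $f$, the appeal to Lemma~\ref{lem: univ. approx.} for the middle factor, $L^p$-approximation of $\psinstar$ and its inverse via $\ACFINNUniversalClass$, and assembly through Proposition~\ref{prop: compatibility of approximation}. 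Your diagnosis of the obstacle is also the paper's: the jump set of $\psinstar$ forces the argument to live in $L^p$ rather than $\sup$. (Minor wording slip: $t_n$ is a function on $[0,1]^{d-1}$, so ``strictly increasing'' is not the right descriptor for its mollification; you just need a smooth compactly supported approximant agreeing with $t_n$ off a set of small measure.)
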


\begin{proof}
Since we can take $a>0$, $b\in \R$ satisfying $aK+b \subset [0,1]^d$,
it is enough to prove the assertion for the case $K=[0,1]^d$.

Next, we show that we can assume that
for any $(\bm{x},y)\in \R^d$, $u(\bm{x},0)=0$ and $u(\bm{x},1)=1$ for any $\bm{x}\in \R^{d-1}$. 
Since $u(\bm{x}, \cdot)$ is a diffeomorphism, we have $u(\bm{x}, 0) \not = u(\bm{x}, 1)$ for any $x\in \mathbb{R}$.
By the continuity of $f$, either of $u(\bm{x}, 0) > u(\bm{x}, 1)$ for all $\bm{x}\in [0, 1]^{d-1}$ or $u(\bm{x}, 0) < u(\bm{x}, 1)$ for all $x\in [0, 1]^{d-1}$ holds.
Without loss of generality, we assume the latter case holds (if the former one holds, we just switch $u(\bm{x},0)$ and $u(\bm{x},1)$).
 We define $s(\bm{x}) = -\log(u(\bm{x}, 1) - u(\bm{x}, 0))$ and $t(\bm{x}) = -u(\bm{x}, 0)(u(\bm{x}, 1)-u(\bm{x}, 0))^{-1}$.
By a direct computation, we have
\[
    \Psi_{d-1, s, t} \circ f(\bm{x}, y) = \left(\bm{x}, \frac{u(\bm{x}, y) - u(\bm{x}, 0)}{u(\bm{x}, 1) - u(\bm{x}, 0)}\right) =: (\bm{x}, u_0(\bm{x}, y)).
\]
In particular, $\Psi_{s,t} \circ f(\bm{x}, 0) = (\bm{x}, 0)$ and $\Psi_{s,t}\circ s (\bm{x}, 1) = (\bm{x}, 1)$ hold.
, and the map $y \mapsto u_0(\bm{x}, y)$ is a diffeomorphism for each $\bm{x}$. 
Thus if we prove the existence of an approximator for $\Psi_{s,t} \circ f$, by Proposition \ref{prop: compatibility of approximation}, we can arbitrarily approximate $f$ itself.

For $\underline{k}:=(k_1,\dots,k_{d-1})\in\mathbb{Z}^{d-1}$ and $n\in \mathbb{N}$, we define $(\underline{k})_n := \sum_{i=1}^d k_i n^{i-1}\in \{0, \ldots, n^d-1\}$, that is, $\underline{k}$ is the $n$-adic expansion of $(\underline{k})_n$.
For any $n\in \mathbb{N}$, define the following discontinuous \ACF{}:
$\psi_n \colon [0,1]^d\to [0,1]^{d-1}\times [0,n^d]$ by 
\[\psi_n(\bm{x},y):=\left(\bm{x}, y+\sum_{k_1,\cdots, k_{d-1}=0}^{n-1} (\underline{k})_n 1_{\Delta^n_{\underline{k} + 1}}(\bm{x}) \right),\]
where $\underline{k} := (k_1, \ldots, k_d)$ and $\underline{k} + 1 := (k_1+1, \ldots, k_d+1)$.
We take an increasing function $v_n\colon \R\to \R$ that is 
smooth outside finite points such that 
\[
v_n(z):=
\begin{cases}
u\left(\frac{k_1}{n},\cdots, \frac{k_{d-1}}{n}, z-(\underline{k})_n\right)+(\underline{k})_n & \text{ if }z\in [(\underline{k})_n, (\underline{k})_n + 1) \\
z &\text{ if }z\notin[0,n^d).
\end{cases}
\]
We consider maps $h_n$ on $[0,1]^{d-1}\times [0,n^d]$ 
and $f_n: [0,1]^d\to [0,1]^d$ defined by 
\begin{align*} 
h_n(\bm{x},z)&:=(\bm{x},v_n(z)),\\
f_n&:=\psi_n^{-1}\circ h_n\circ \psi_n. 
\end{align*}
Then we have the following claim.\\
\textbf{Claim.} For all $k_1, \cdots, k_{d-1}=0,\cdots, n-1$, we have
\begin{equation*}
    f_n(\bm{x},y)=\left(\bm{x},u\left(\frac{k_1}{n},\ldots, \frac{k_{d-1}}{n}, y\right)\right)
\end{equation*}
on $\prod_{i=1}^{d-1}[\frac{k_i}{n},\frac{k_i+1}{n})\times [0,1)$.

In fact, we have
\begin{align*}
f_n(\bm{x},y)&=\psi_n^{-1}\circ h_n\circ \psi_n(\bm{x},y)\\
&=\psi_n^{-1}\circ h_n(\bm{x},y+(\underline{k})_n)\\
&=\psi_n^{-1}(\bm{x},v_n(y+(\underline{k})_n))\\
&=\psi_n^{-1}\left(\bm{x},u\left(\frac{k_1}{n},\ldots, \frac{k_{d-1}}{n},y\right)+(\underline{k})_n\right)\\
&=\left(\bm{x},u\left(\frac{k_1}{n},\ldots, \frac{k_{d-1}}{n}, y\right)\right). 
\end{align*}
Therefore, the claim above has been proved. 
Hence we see that $\supKnorm{f-f_n}\rightarrow 0$ as $n\rightarrow\infty$.
By Lemma~\ref{lem: univ. approx.} below and the universal approximation property of \(\ACFINNUniversalClass\), for any compact subset $K$ and $\varepsilon > 0$, there exist $g_1, g_2, g_3\in \FACFHINN$ such that $\LpKnorm{g_1-\psi_n^{-1}}<\varepsilon$, $\LpKnorm{g_2-h_n}<\varepsilon$, and $\LpKnorm{g_3-\psi_n}<\varepsilon$. Thus by Proposition \ref{prop: compatibility of approximation}, for any compact $K$ and $\varepsilon>0$, there exists $g\in \FACFHINN$ such that $\LpKnorm{g-f}<\varepsilon$.
\end{proof}

\subsection{Special case: Approximation of coordinate-wise independent transformation}
\label{sec:appendix:coordinate-wise independent}
In this section, we show the lemma claiming that special cases of single-coordinate transformations, namely coordinate-wise independent transformations, can be approximated by the elements of \ACFHINN{} given sufficient representational power of $\ACFINNUniversalClass$.

\begin{lemma}
\label{lem: univ. approx.}
Let \(p \in [1, \infty)\).
Assume \(\ACFINNUniversalClass\) is a \(\sup\)-universal approximator for \(\CcinftyRDminus\) and that it consists of piecewise \(C^1\)-functions.
Let $u:\mathbb{R}\rightarrow\mathbb{R}$ be a continuous increasing function.
Let $f:\mathbb{R}^d\rightarrow\mathbb{R}^d;(\bm{x}, y)\mapsto(\bm{x}, u(y))$ where $\bm{x}\in \R^{d-1}$ and $y\in\R$.
For any compact subset $K\subset\mathbb{R}^d$ and $\varepsilon>0$, there exists $g\in \FACFHINN$ such that $\LpKnorm{f - g} < \varepsilon$.
\end{lemma}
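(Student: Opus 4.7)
The plan is to approximate $f(\bm{x},y)=(\bm{x},u(y))$ in $L^p(K)$ by compositions of $\FSACFH$-ACFs and linear maps from $\FLin$. The core obstacle is that an ACF $\Aff{d-1}{s}{t}$ transforms $y$ only affinely in $y$, with coefficients that depend on $\bm{x}$, whereas we need a fully nonlinear transformation of $y$ that is independent of $\bm{x}$. The idea is to use permutation matrices in $\FLin$ to swap $y$ into an $\bm{x}$-slot so that a subsequent ACF can effectively ``see'' $y$, then undo the swap after acting.

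By Lemma~\ref{smoothing lemma}, we first reduce to the case where $u$ is $C^\infty$ with $u'>0$ on an open interval containing the $y$-projection of $K$, so that $u^{-1}$ and $\log u'$ are smooth there. We then focus on the last two coordinates $(x_{d-1},y)$ and leave $x_1,\ldots,x_{d-2}$ unchanged throughout: this is possible because $\mathcal{H}$ is $\sup$-universal for $\CcinftyRDminus$, so the coupling functions $s,t$ can be made to effectively depend on only one or two of the $d-1$ arguments by multiplying with a $C^\infty_c$-cutoff equal to $1$ on the relevant compact in the other variables.

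For a large parameter $M>0$, consider the following four exact maps acting on $(x_{d-1},y)$, each realized by an ACF in $\FSACFH$, possibly sandwiched between permutation matrices in $\FLin$:
\begin{enumerate}
    \item[(i)] $(x_{d-1},y)\mapsto (x_{d-1}+My,\,y)$ with $s=0$, $t(y)=My$;
    \item[(ii)] $(z,y)\mapsto (z,\,y+u(z/M)-z/M)$ with $s=0$, $t(z)=u(z/M)-z/M$;
    \item[(iii)] $(z,y')\mapsto (z-Mu^{-1}(y'),\,y')$ with $s=0$, $t(y')=-Mu^{-1}(y')$;
    \item[(iv)] $(x',y')\mapsto (x'\cdot u'(u^{-1}(y')),\,y')$ with $s(y')=\log u'(u^{-1}(y'))$, $t=0$.
\end{enumerate}
A direct Taylor expansion in $1/M$ shows that the composition of (i)--(iv) maps $(x_{d-1},y)$ to $(x_{d-1}+O(1/M),\,u(y)+O(1/M))$ uniformly on $K$, so for $M$ sufficiently large the $\sup$-error (and hence the $L^p$-error) on $K$ is below $\varepsilon/2$. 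Each of the four $s,t$ functions is smooth on the relevant compact; extending by a $C^\infty_c$-cutoff and using the $\sup$-universality of $\mathcal{H}$ for $\CcinftyRDminus$, each step can be approximated to arbitrary precision by an element of $\FSACFH$ (with the attendant permutations in $\FLin$). Composing these per-step approximations via Proposition~\ref{prop: compatibility of approximation} produces the desired element of $\FACFHINN$.

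The main obstacle is step (iv): without this multiplicative rescaling, the shift-only composition of (i)--(iii) alone leaves the first coordinate converging to $x_{d-1}/u'(y)$ rather than $x_{d-1}$ (as the Taylor expansion reveals). Step (iv) exactly cancels the residual factor $1/u'(y)$, and its realization crucially depends on $u'>0$ (guaranteed by the smoothing step) so that $\log u'\circ u^{-1}$ is a well-defined $C^\infty$ function on the image of $u$, which $\mathcal{H}$ can approximate uniformly on the relevant compact.
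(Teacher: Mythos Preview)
Your proposal is correct and follows essentially the same strategy as the paper: build a four-map ``commutator-like'' composition depending on a small parameter, show via first-order Taylor expansion that it converges uniformly on $K$ to $(\bm{x},u(y))$, then approximate each factor using the $\sup$-universality of $\ACFINNUniversalClass$ and assemble via Proposition~\ref{prop: compatibility of approximation}. The paper uses $\delta\to 0$ with maps $\psi_0,\psi_1,\psi_2,\psi_3$ where the multiplicative correction by $u'(y)$ comes \emph{first} ($\psi_0$) and the sole linear step is $\psi_2$; you use $M\to\infty$ with the linear step first (your (i)) and the multiplicative correction by $u'(u^{-1}(y'))$ last (your (iv)). These are cosmetic reorderings of the same mechanism, and your Taylor computation is correct: after (i)--(iii) alone one obtains $(x_{d-1}/u'(y)+O(1/M),\,u(y)+O(1/M))$, which is exactly why the scaling step is indispensable---a point you identify precisely.

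One bookkeeping difference worth noting: the paper first reduces $u$ to a compactly supported $C^\infty$-diffeomorphism on all of $\R$ (via Lemma~\ref{red to comp. supp. diff}), so that $\log u'$, $u(y)-y$, and $y-u^{-1}(y)$ lie in $\CcinftyRDminus$ automatically, and $u^{-1}$ is globally defined. You instead keep $u'>0$ only on an interval and handle compact support via cutoffs; this is fine, but you should be explicit that for $M$ large the intermediate images (e.g.\ the range of $y'$ entering $u^{-1}$ in step~(iii)) stay inside the region where your extended/cut-off $u$ and $u^{-1}$ are valid, and that the resulting exact maps (i)--(iv) are linearly increasing piecewise $C^1$-diffeomorphisms so that Proposition~\ref{prop: compatibility of approximation} applies. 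With those routine checks, your argument goes through.
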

\begin{proof}
We may assume without loss of generality, in light of Lemma~\ref{smoothing lemma}, that $u$ is a $C^\infty$-diffeomorphism on $\R$ and that the inequality $u'(y)>0$ holds for any $y\in \mathbb{R}$.
Furthermore, we may assume that $u$ is compactly supported (i.e., $u(y) = y$ outside a compact subset of $\Re$) without loss of generality because we can take a compactly supported diffeomorphism $\tilde u$ and $a, b \in \Re$ ($a \neq 0$) such that $a \tilde{u} + b = u$ on any compact set containing $K$ by Lemma~\ref{red to comp. supp. diff}, and the scaling $a$ and the offset $b$ can be realized by the elements of \ACFHINN{}.

Fix $\delta \in (0,1)$.  We define the following functions:
\begin{align*}
\psi_0(\x,y):&=(\upto{d-2}{\x}, u'(y) x_{d-1}, y)\\
&= (\upto{d-2}{\x}, \exp(\log u'(y)) x_{d-1}, y),\\
\psi_1(\x,y):&=\left(\upto{d-2}{\x}, x_{d-1} + \delta^{-1}(u(y) - y), y\right),\\
\psi_2(\x,y):&=(\upto{d-2}{\x}, x_{d-1}, y+\delta x_{d-1}),\\
\psi_3(\x,y):&=\left(\upto{d-2}{\x}, x_{d-1}-\delta^{-1}(y-u^{-1}(y)), y\right),
\end{align*}
where we denote $\bm{x}=(x_1,\dots,x_{d-1})\in\R^{d-1}$.
First, we show that $\supKnorm{f - \psi_3\circ\psi_2\circ\psi_1\circ\psi_0} \to 0$ as $\delta \to 0$.
By a direct computation, we have
\begin{align*}
    \psi_3\circ\psi_2\circ\psi_1(\x,y)
    &= \psi_3\circ\psi_2(\upto{d-2}{\x}, x_{d-1} +\delta^{-1}(u(y) - y), y)\\
    &= \psi_3(\upto{d-2}{\x}, x_{d-1}+\delta^{-1}(u(y) - y), y+\delta(x_{d-1}+\delta^{-1}(u(y)-y)))\\
    &= \psi_3(\upto{d-2}{\x}, x_{d-1}+\delta^{-1}(u(y)-y),\delta x_{d-1} + u(y))\\
    &= (\upto{d-2}{\x}, x_{d-1}-\delta^{-1}(\delta x_{d-1}+u(y)-u^{-1}(\delta x_{d-1} + u(y))), \delta x_{d-1} + u(y))\\
    &= (\upto{d-2}{\x}, \delta^{-1}u^{-1}(\delta x_{d-2} + u(y)) - \delta^{-1}y, u(y) + \delta x_{d-1}),
\end{align*}
where $\x=(x_1,\dots,x_{d-1})\in\R^{d-1}$.
Since $u\in C^\infty([-r, r])$ where $r = \max_{(\x, y) \in K} |y|$, by applying Taylor's theorem, there exists a function $R(\x, y; \delta)$ and $C = C([-r, r], u) > 0$ such that
\[u^{-1}(u(y)+\delta x)=y+u'(y)^{-1}\delta x + R(\x,y; \delta)(\delta x)^2 \quad\text{ and }\quad \sup_{\delta\in (0,1)}|R(\x, y; \delta)| \le C\]
for all $(\x, y)\in K$.
Therefore, we have 
\[\psi_3\circ\psi_2\circ\psi_1\circ\psi_0(\x, y)=(\bm{x},u(y))+\delta(R(\x,u'(y)x_{d-1};\delta)\upto{d-1}{\x},u'(y)x_{d-1}).\]
For any compact subset $K$, the last term uniformly converges to 0 as $\delta\rightarrow 0$ on $K$.

Assume $\delta$ is taken to be small enough.
Now, we approximate $\psi_3 \circ \cdots \circ \psi_0$ by the elements of \ACFHINN{}.
Since $u$ is a compactly-supported $C^\infty$-diffeomorphism on $\Re$, the functions $(\upto{d-2}{\x}, y) \mapsto \log u'(y)$, $(\upto{d-2}{\x}, y) \mapsto u(y) - y$, and $(\upto{d-2}{\x}, y) \mapsto y - u^{-1}(y)$, each appearing in $\psi_0$, $\psi_1$, $\psi_3$, respectively, belong to \(\CcinftyRDminus\).
On the other hand, $\psi_2$ can be realized by $\FGL \subset \FLin$.
Therefore, combining the above with the fact that $\ACFINNUniversalClass$ is a \(\sup\)-universal approximator for \(\CcinftyRDminus\), we have that for any compact subset $K'\subset\ReD$ and any $\varepsilon > 0$, there exist $\phi_0$, $\ldots, \phi_3\in\FACFHINN$ such that $\supRangenorm{K'}{\psi_i - \phi_i} < \varepsilon$.
In particular, we can find $\phi_0, \ldots, \phi_3\in\FACFHINN$ such that $\LpRangenorm{K'}{\psi_i - \phi_i} < \varepsilon$.

Now, recall that $\ACFINNUniversalClass$ consists of piecewise $C^1$-functions as well as $\psi_i$ ($i =0, \ldots, 3$).
Moreover, $\psi_0, \psi_1, \psi_3$ are compactly supported while $\psi_2 \in \FGL$, hence they are Lipschitz continuous outside a bounded open subset.
Therefore, by Proposition~\ref{prop: compatibility of approximation}, we have the assertion of the lemma.

\end{proof}

\section{Locally bounded maps and piecewise diffeomorphisms}
\label{sec:appendix:piecewise diffeo}
In this section, we provide the notions of locally bounded maps and piecewise $C^1$-maps. These notions are used to state the regularity conditions on the CF layers in Theorem~\ref{theorem:main:1} and to prove the results in Section~\ref{appendix: compatibility of approximation and composition}.

\subsection{Definition of locally bounded maps}
Here, we provide the definition of locally bounded maps.
It is a very mild condition that is satisfied in most cases of practical interest, e.g., by continuous maps.

\begin{definition}[Locally bounded maps]
Let $f$ be a map from $\Re^m$ to $\Re^n$.
We say $f$ is \emph{locally bounded} if for each point $\x \in \Re^m$, there exists a neighborhood $U$ of $\x$ such that $f$ is bounded on $U$.
\end{definition}
As a special case, continuous maps are locally bounded; take an open ball $U$ centered at $\x$ and take a compact set containing $U$ to see that $f$ is bounded on $U$.

\subsection{Definition and properties of piecewise $C^1$-mappings}
In this section, we give the definition of piecewise $C^1$-mappings and their properties.
Examples of piecewise $C^1$-diffeomorphisms appearing in the paper include the \(\FSACFH\) with \(\ACFINNUniversalClass\) being MLPs with ReLU activation.  
\begin{definition}[piecewise $C^1$-mappings]\label{def:piecewiseC1}
Let $f:\Re^m\rightarrow\Re^n$ be a measurable map.  We say $f$ is a \emph{piecewise $C^1$-mapping} if there exists a mutually disjoint family of (at most countable)
open subsets $\{V_i\}_{i\in I}$ such that
\begin{itemize}
    \item ${\rm vol}(\ReD\setminus U_f)=0$,
    \item for any $i\in I$, there exists an open subset $W_i$  containing the closure $\overline{V_i}$ of $V_i$, and  $C^1$-mapping $\tilde{f}_i:W_i\rightarrow\ReD$ such that $\tilde{f}_i|_{V_i}=f|_{V_i}$, and
    \item for any compact subset $K$, $\#\{i\in I : V_i\cap K\neq \emptyset\}<\infty$.
\end{itemize}
where we denote $U_f:=\bigsqcup_{i\in I} V_i$, and \(\#(\cdot)\) denotes the cardinality of a set. 
\end{definition}
We remark that piecewise $C^1$-mappings are essentially locally bounded in the sense that for any compact set $K\subset\ReD$, ${\rm ess.sup}_{K}\Vert f\Vert=\supRangenorm{K\cap U_f}{f} <\infty$.
Then we define a piecewise $C^1$-diffeomorphisms:

\begin{definition}[piecewise $C^1$-diffeomorphisms]
Let $f:\ReD\rightarrow\ReD$ be a piecewise $C^1$-mapping.  We say $f$ is a \emph{piecewise $C^1$-diffeomorphism} if 
\begin{enumerate}
    \item the image of nullset via $f$ is also a nullset,
    \item $f|_{U_f}$ is injective, and for $i\in I$, $\tilde{f}_i$ is a $C^1$-diffeomorphism from $W_i$ onto $\tilde{f}_i(W_i)$,
    \item ${\rm vol}\left(\ReD\setminus f(U_f)\right)=0$, and
    \item for any compact subset $K$, $\#\{i\in I : f(V_i)\cap K\neq \emptyset\}<\infty$.
\end{enumerate}
\end{definition}
We summarize the basic properties of piecewise $C^1$-diffeomorphisms in the proposition below:
\begin{proposition}
\label{prop: basic properties}
Let $f$ and $g$ be piecewise $C^1$-diffeomorphisms. Then, we have the following.
\begin{enumerate}
    \item There exists a piecewise $C^1$-diffeomorphism $f^\dagger$ such that $f( f^\dagger(x))=x$ for $x\in U_{f^\dagger}$ and $f^\dagger(f(y))=y$ for $y\in U_f$\label{existence of inverse}.
    \item For any $h\in L^1$, we have $\int h(x) dx =\int h(f(x))|Df(x)|dx$, where $|Df(x)|$ is the absolute value of the determinat of the Jacobian matrix of $f$ at $x$.  \label{change of var}
    \item For any compact subset $K$, $f^{-1}(K)\cap U_f$ is a bounded subset.\label{compfinvbdd}
    \item For any nullset $F$, then $f^{-1}(F)$ is also a nullset.  \label{finvzero}
    \item For any measurable set $E$ and any compact set $K$, $f^{-1}(E\cap K)$ has a finite volume.\label{finvfin}
    \item The composition $f\circ g$ is also a piecewise $C^1$-diffeomorphism. \label{composition}
\end{enumerate}
\end{proposition}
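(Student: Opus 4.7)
The plan is to prove the six assertions in turn by working with the local chart data $\{(V_i,W_i,\tilde{f}_i)\}_{i\in I}$ (and analogously $\{(V'_j,W'_j,\tilde{g}_j)\}_{j\in J}$ for $g$) supplied by Definition~\ref{def:piecewiseC1}. For (\ref{existence of inverse}), I would prescribe the chart data of $f^\dagger$ to be $(\tilde{f}_i(V_i),\tilde{f}_i(W_i),(\tilde{f}_i)^{-1})$ on the image side: disjointness of $\{\tilde{f}_i(V_i)\}$ is the injectivity of $f|_{U_f}$, openness is the inverse-function theorem, the full-measure cover is condition~3 in the definition of a piecewise $C^1$-diffeomorphism, and the local-finiteness condition is condition~4. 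For (\ref{change of var}), I would apply the standard change-of-variables formula on each $V_i$ (where $f$ coincides with the $C^1$-diffeomorphism $\tilde{f}_i$) to get $\int_{f(V_i)}h\,dy=\int_{V_i}h\circ f\,|Df|\,dx$, then sum over $i$; the complements $\mathbb{R}^d\setminus U_f$ and $\mathbb{R}^d\setminus f(U_f)$ are null and contribute nothing.

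For (\ref{compfinvbdd})--(\ref{finvfin}), the unifying idea is to identify $f^{-1}(K)\cap U_f$ with $f^\dagger(K\cap f(U_f))$ via (\ref{existence of inverse}), then exploit the fact that, by condition~4 for $f$, only finitely many image pieces $f(V_i)$ meet $K$. On each such piece, $(\tilde{f}_i)^{-1}$ continuously sends $K\cap f(V_i)$ into $W_i$; covering $K\cap f(V_i)$ by a compact subset of $\tilde{f}_i(W_i)$ shows the image is bounded, establishing (\ref{compfinvbdd}). Assertion (\ref{finvzero}) follows because each $(\tilde{f}_i)^{-1}$ is locally Lipschitz and hence preserves Lebesgue null sets, and there are only countably many pieces. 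Assertion (\ref{finvfin}) is then immediate: $f^{-1}(E\cap K)$ is contained in $f^{-1}(K)$, which is bounded modulo a null correction and hence has finite Lebesgue measure.

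For (\ref{composition}), the natural partition for $f\circ g$ is $V_{ij}:=\tilde{g}_j^{-1}(V_i)\cap V'_j$, on which $f\circ g$ coincides with $\tilde{f}_i\circ\tilde{g}_j$, a $C^1$-diffeomorphism on the open neighborhood $\tilde{g}_j^{-1}(W_i)\cap W'_j$. Openness, disjointness, injectivity of the composition on $\bigsqcup_{ij}V_{ij}$, and the full-measure covering property all follow by composing the analogous properties of $f$ and $g$, the last of these using (\ref{finvzero}) applied to $g$ together with the full-measure condition on $U_f$. The main obstacle I expect is verifying the local-finiteness of this doubly-indexed partition for a compact $K$: only finitely many $V'_j$ meet $K$, and for each such $j$ one must bound the number of $V_i$ with $V_i\cap \tilde{g}_j(V'_j\cap K)\neq\emptyset$. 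This is precisely where assertion (\ref{compfinvbdd}) applied to $g$ (to produce a compact set covering $g(V'_j\cap K)$ inside $\tilde{g}_j(W'_j)$) combines with condition~4 for $f$ to finish the argument; making that covering step rigorous in the possibly-unbounded $V'_j$ setting is the delicate point.
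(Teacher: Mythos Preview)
Your plan is correct and largely parallels the paper's proof for parts (\ref{existence of inverse}), (\ref{change of var}), (\ref{compfinvbdd}), and (\ref{composition}): the same chart data $\{f(V_i)\}$ for $f^\dagger$, the same piece-by-piece change of variables, the same finiteness-via-condition-4 argument for boundedness of preimages, and the same doubly-indexed partition $V_{ij}$ for the composition.

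Where you diverge is in (\ref{finvzero}) and (\ref{finvfin}), and your routes there are genuinely more elementary. For (\ref{finvzero}) the paper invokes (\ref{change of var}) with $h=\mathbf{1}_F$ to get $\int_{f^{-1}(F)}|Df|=0$ and then peels off the set where $|Df|\ge 1/n$; you instead observe directly that each $(\tilde f_i)^{-1}$ is locally Lipschitz and so preserves null sets, which is cleaner. For (\ref{finvfin}) the paper again goes through (\ref{change of var}) and bounds $|Df|$ away from zero on $f^{-1}(K)\cap U_f$; your observation that $f^{-1}(E\cap K)\subset f^{-1}(K)=(f^{-1}(K)\cap U_f)\cup(f^{-1}(K)\setminus U_f)$ is a bounded set union a null set is shorter and avoids any Jacobian estimate. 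Both approaches are valid; yours buys simplicity, while the paper's keeps the change-of-variables formula as the single workhorse.

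One small slip in your plan for (\ref{composition}): you invoke assertion (\ref{compfinvbdd}) for $g$ to produce a compact set containing $g(V'_j\cap K)$, but (\ref{compfinvbdd}) controls \emph{preimages} under $g$, not images. What you actually need is that $\tilde g_j$ is continuous on $W'_j\supset\overline{V'_j}$, so $\tilde g_j(\overline{V'_j}\cap K)$ is compact and contains $g(V'_j\cap K)$; equivalently, piecewise $C^1$-maps are essentially locally bounded (the remark following Definition~\ref{def:piecewiseC1}). This is the fact the paper uses at the corresponding step, and with that correction your local-finiteness argument goes through. You should also not forget to check the image-side local-finiteness (condition~4 of the diffeomorphism definition) for $f\circ g$, which the paper handles by a similar contradiction argument using (\ref{compfinvbdd}) for $f$.
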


\begin{proof}
\textit{Proof of \ref{existence of inverse} }:
Fix $a\in\ReD$.
For $x\in \ReD\setminus f(U_f)$, define $f^\dagger(x)=a$, and for $x\in f(V_i)$, define $f^\dagger(x):=f|_{V_i}^{-1}(x)$. Then, $f^\dagger$ is a piecewise $C^1$-mapping with respect to the family of pairwise disjoint open subsets $\{f(V_i)\}_{i\in I}$, and satisfies the conditions for piecewise $C^1$-diffeomphism.

\textit{Proof of \ref{change of var} }: 
It follows by the following computation:
\begin{align*}
    \int h(x) dx&=\int_{f(U_f)}h(x)dx\\
    &=\sum_{i\in I}\int_{f(V_i)}h(x) dx\\
    &=\sum_{i\in I}\int_{V_i}h(f(x))|Df(x)|dx=\int h(f(x))|Df(x)|dx.
\end{align*}

\textit{Proof of \ref{compfinvbdd}}
It suffices to show that $f^{-1}(K)\cap U_f$ is covered by finitely many compact subsets. In fact, we remark that only finitely many $V_i$'s intersect with $f^{-1}(K)$. If not, infinitely many $f(V_i)$ intersects $f(f^{-1}(K))\subset K$, which contradicts the definition of piecewise $C^1$-diffeomorphisms. Let $I_0\subset I$ be a finite subset composed of $i\in I$ such that $V_i$ intersecting with $f^{-1}(K)$. For $i\in I_0$, we define a compact subset $F_i:=\tilde{f}_i^{-1}(\tilde{f}_i(\overline{V_i})\cap K)$.  Then we see that $f^{-1}(K)\cap U_f$ is contained in $\cup_{i\in I_0}F_i$. 

\textit{Proof of \ref{finvzero} }: 
It suffices to show that for any compact subset $K$, the volume of $f^{-1}(F)\cap K$ is zero.
By applying \ref{change of var} to the case $h=\Indicator{F}$, we see that
\[\int_{f^{-1}(F)}|Df(x)|dx=0.\]
For  $n>0$, let $E_n:=f^{-1}(F)\cap K \cap \{x\in\ReD:|Df(x)|\ge 1/n\}$. Then we have
\[\frac{{\rm vol}(E_n)}{n}\le \int_{E_n}|Df(x)|dx \le  \int_{f^{-1}(F)}|Df(x)|dx=0,\]
thus ${\rm vol}(K\cap f^{-1}(F))=\lim_{n\rightarrow\infty}{\rm vol}(E_n)=0$

\textit{Proof of \ref{finvfin} }:
By applying \ref{change of var} to the case $h=\Indicator{E\cap K}$, we see that
\[\int_{f^{-1}(E\cap K)}|Df(x)|dx= {\rm vol}(E\cap K).\]

Let $F$ be a closure of $f^{-1}(K)\cap U_f$.  By \ref{compfinvbdd}, $F$ is a compact subset.  Let $I_0:=\{i\in I : F\cap V_i\neq\emptyset\}$ be a finite subset. Then we have 
\begin{align*}
    C&:=\inf_{f^{-1}(K)\cap U_f}|Df|\\
    &\ge \inf_{i\in I_0}\inf_{F\cap\overline{V_i}}|D\tilde{f}_i| >0.
\end{align*}
Thus,
\begin{align*}
\int_{f^{-1}(E\cap K)\cap U_f}|Df(x)|dx 
\geq C{\rm vol}(f^{-1}(E\cap K)),
\end{align*}
where the last equality follows from ${\rm vol}(f^{-1}(E\cap K)\setminus U_f)=0$.
Thus we have ${\rm vol}(f^{-1}(E \cap K))<\infty$

\textit{Proof of \ref{composition} }: 
We denote by $\{V_i\}_{i\in I}$, $\{V'_j\}_{j\in J}$ the disjoint open families associated with $f$ and $g$, respectively. At first, we prove $f\circ g$ is a piecewise $C^1$-mapping.  Let $V_{ij}:=g^{-1}(V_i\cap g(V'_j))\cap U_g$ and define $\mathcal{U}_{f\circ g}:=\{V_{ij}\}_{(i,j)\in I\times J}$. Let $U_{f\circ g}:=\cup_{i,j}V_{ij}=g^{-1}(U_f\cap g(U_g))\cap U_g$. By \ref{finvzero}, the volume of $\ReD\setminus U_{f\circ g}$ is zero.  On each $V_{ij}$, $\tilde{f}_i\circ\tilde{g}_j$ is an extension of $f\circ g|_{V_{ij}}$.  For any compact subset $K$, $\#\{(i,j)\in I\times J : K\cap V_{ij}\neq \emptyset\}<\infty$. In fact, suppose the number is infinite.  Then $g(U_f\cap K)$ intersects with an infinite number of open subsets in the form of $g(U_f\cap K)\cap V_i\cap g(V'_j)$.  On the other hand $g(U_f\cap K)$ is a bounded subset, thus by definition, the number of $(i,j)\in I\times J$ satisfying $\overline{g(U_f\cap K)}\cap V_i\cap g(V'_j)\neq \emptyset$ is finite. It is a contradiction. Therefore, $g\circ f$ is a piecewise $C^1$-mapping.  

Next, we prove $f\circ g$ is a piecewise $C^1$-diffeomorphism.  
The first and second condition follows by definition.  For the third condition, since $\ReD\setminus f\circ g(U_{f\circ g})=\big(\ReD\setminus f(U_f)\big)\cup \big(\ReD\setminus f\big(g(U_g)\big)\subset \ReD\setminus f(g(U_g)\cap U_f)$, it suffices to show that the volue of $\ReD\setminus f(g(U_g)\cap U_f)$ is zero.  In fact, by the injectivity of $f$ on $U_f$, we have $f(g(U_g)\cap U_f)=f(U_f)\setminus f(U_f\setminus g(U_g))$. Thus $\ReD\setminus f(g(U_g)\cap U_f)=(\ReD\setminus f(U_f))\cup f(U_f\setminus g(U_g))$. By definition of $C^1$-diffeomorphism, we conclude $\ReD\setminus f(g(U_g)\cap U_f)$ is a nullset.  
For the fourth condition, let $K$ be a compact subset.  
Assume the $\{(i,j)\in I\times J : f\circ g(V_{ij})\cap K\neq\emptyset\}=\infty$.  
Since $f$ is a piecewise $C^1$-diffeomorphism, there exist infinitely many elements in $j\in J$ such that $f\circ g(V'_j)\cap f(U_f)\cap K\neq \emptyset$. 
On the other hand, $f^{-1}(K\cap f(U_f))\cap U_f$ is bounded, and its closure intersects with only finitely many $g(V'_j)$'s, thus $ K\cap f(U_f)$ intersects with only finitely many $f\circ g(V'_j)$, which is a contradiction.

\end{proof}
For a measurable mapping $f: \R^m \to \R^n$ and any $R>0$, we define a measurable set 
\[\mathcal{L}(R;f):=\{x\in \R^m : \Vert f(x)-f(y) \Vert > R\Vert x-y\Vert \text{ for some $y\in U_f$}\}.\]
Then we have the following proposition:
\begin{proposition}
\label{prop: weak lipschitz}
Let $f:\Re^m\rightarrow\Re^n$ be a piecewise $C^1$-mapping. Assume $f$ is linearly increasing, namely, there exists $a,b>0$ such that $\Vert f(x)\Vert <a\Vert x\Vert + b$ for any $x\in \R^m$.
Then for any compact subset $K'$, ${\rm vol}(\mathcal{L}(R;f)\cap K')\rightarrow 0$ as $R\rightarrow\infty$.
\end{proposition}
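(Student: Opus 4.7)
The plan is to identify $\mathcal{L}(R;f)$ as a super-level set of the pointwise Lipschitz-ratio function
\[L(x):=\sup_{y\in U_f,\ y\neq x}\frac{\Vert f(x)-f(y)\Vert}{\Vert x-y\Vert},\]
so that $\mathcal{L}(R;f)=\{x\in\R^m:L(x)>R\}$, and then to apply continuity of measure from above. Since $\R^m\setminus U_f$ is a null set, it suffices to show that $E_R:=\{x\in U_f\cap K':L(x)>R\}$ satisfies $\mathrm{vol}(E_R)\to 0$; this reduces, given $\mathrm{vol}(K')<\infty$, to verifying (i) that $L(x)<\infty$ pointwise on $U_f$ and (ii) that $L$ is measurable on $U_f$.

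For (i), fix $x\in V_i$ and set $\eta:=d(x,\R^m\setminus V_i)>0$. I would split the supremum into two regimes. When $\Vert y-x\Vert<\eta/2$, the segment $[x,y]$ lies in the compact set $\overline{B(x,\eta/2)}\subset V_i\subset W_i$ on which $\tilde{f}_i$ is $C^1$, so the mean value theorem yields $\Vert f(x)-f(y)\Vert\leq L_0\Vert x-y\Vert$ with $L_0:=\sup_{\overline{B(x,\eta/2)}}\Vert D\tilde{f}_i\Vert<\infty$. When $\Vert y-x\Vert\geq\eta/2$, the linear growth hypothesis gives $\Vert f(x)-f(y)\Vert\leq a(\Vert x\Vert+\Vert y\Vert)+2b$; a further split on whether $\Vert y\Vert\leq 2\Vert x\Vert$ (where $\Vert x-y\Vert\geq\eta/2$ bounds the ratio) or $\Vert y\Vert>2\Vert x\Vert$ (where $\Vert x-y\Vert\geq\Vert y\Vert/2$ gives a ratio of order $3a$ plus lower-order corrections) bounds the ratio by a constant depending only on $x$. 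Thus $L(x)<\infty$ for every $x\in U_f$.

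For (ii), note that $f|_{V_j}=\tilde{f}_j|_{V_j}$ is continuous on $V_j$. Choosing a countable dense subset $D_j\subset V_j$ for each $j$ and putting $D:=\bigcup_j D_j$, the map $y\mapsto\Vert f(x)-f(y)\Vert/\Vert x-y\Vert$ is continuous on each $V_j\setminus\{x\}$, so the supremum defining $L(x)$ is unchanged when restricted to $D\setminus\{x\}$; this exhibits $L$ as a countable supremum of measurable functions, hence measurable. The sets $E_R$ are therefore measurable, nested decreasing, and have empty intersection by (i); continuity of measure from above (valid because $\mathrm{vol}(K')<\infty$) yields $\mathrm{vol}(E_R)\to 0$, and the conclusion follows.

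The main obstacle is the pointwise finiteness of $L$: one must simultaneously tame the short-range behavior (where $y$ could jump from $V_i$ into a neighboring piece) using the piecewise $C^1$ structure, and the long-range behavior (where $\Vert y\Vert$ is unbounded) using the linear growth hypothesis, and neither ingredient alone suffices.
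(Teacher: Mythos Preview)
Your argument is correct and takes a genuinely different route from the paper. The paper argues constructively: given $\varepsilon>0$, it first traps the bad set inside a thin neighborhood $V_\delta$ of the piece-boundaries (and of $\partial B$ for a large ball $B\supset K'$) with $\mathrm{vol}(V_\delta)<\varepsilon$, then shows that for $x\in K'\setminus V_\delta$ the ratio $\Vert f(x)-f(y)\Vert/\Vert x-y\Vert$ is bounded uniformly in $y$ by splitting into $y\notin B$ (handled by linear growth, yielding a uniform bound $L$), $y\in B\setminus V_{\delta/2}$ (where $f$ is genuinely Lipschitz on a compact set meeting finitely many pieces, bound $L_\delta$), and $y\in V_{\delta/2}$ (where $\Vert x-y\Vert\geq\delta/2$ and $\Vert f\Vert\leq C$ on $\overline{B}$). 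This produces an explicit $R=R(\varepsilon)$ with $\mathcal{L}(R;f)\cap K'\subset V_\delta$.

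Your approach replaces this explicit geometry by the soft observation that $L(x)$ is an a.e.\ finite, measurable function, so its super-level sets shrink to a null set by continuity of measure. The gain is conceptual economy: you never need to isolate the boundary neighborhoods $V_\delta$ or the three-way case split on the location of $y$. The price is that you get no quantitative $R(\varepsilon)$, which is harmless here since the proposition is only used qualitatively downstream. One small point: in your sub-case $\Vert y\Vert>2\Vert x\Vert$, the term $4b/\Vert y\Vert$ is not literally ``lower order'' unless $\Vert y\Vert$ is bounded below; you should make explicit that you may also invoke $\Vert x-y\Vert\geq\eta/2$ there (taking the maximum of the two lower bounds on $\Vert x-y\Vert$), after which the ratio is bounded by $3a+4b/\eta$ uniformly.
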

\begin{proof}
Let $B$ be an open ball containing $K'$ of radius $r$. Fix an arbitrary $\varepsilon>0$.  We note that the linearly increasing condition implies the locally boundedness of $f$.
Let $C:=\sup_{\overline{B}}\Vert f\Vert$.
For $\delta>0$, we define 
\[V_\delta:=\{x\in \overline{B}: {\rm dist}\left(x,\partial{U_f}\cup\partial{B})\right)<\delta\},\] where ${\rm dist}(x,S):=\inf_{y\in S}\{\Vert x-y\Vert\}$.  
Set $\delta$ to be ${\rm vol}(V_\delta)<\varepsilon$. 
We claim that 
\[L:=\sup_{(x,y)\in K'\times \Re^m\setminus B}\frac{\|f(x)-f(y)\|}{\|x-y\|}\]
is finite. In fact, let $r':=\inf_{x\in K', y\notin B}\|x-y\|$. Then for $x\in K'$ and $y\notin B$, we have
\begin{align*}
    \frac{\|f(x)-f(y)\|}{\|x-y\|}
    &\le \frac{\|f(x)\|+ \|f(y)\|}{\|x-y\|} \\
    &\le \frac{a \|x\| + a\|y\| + 2b }{\|x-y\|} \\    
    &\le \frac{a \|x\| + a(\|x - y\| + \|x\|)+ 2b }{\|x-y\|} \\
    &\le a+\frac{2a\|x\|+2b}{\|x-y\|}\\
    &< a+\frac{2ar+2b}{r'}.
\end{align*}
Thus, $L$ is finite.
Since $\overline{B}$ intersects with finitely many $V_i$'s, $f|_{B\setminus V_{\delta/2}}$ is a Lipschitz function.  Put $L_\delta>0$ as the Lipschitz constant of $f|_{B\setminus V_{\delta/2}}$. 
Then for any $R>\max(L, L_\delta, 4C/\delta)$, we see that $\mathcal{L}(R;f)\cap K'$ is contained in $V_\delta$.
Actually, we should prove that $x\not \in \mathcal{L} (R;f)$ when $x \in K'\setminus V_{\delta}$.
Take arbitrary $y\in \mathbb{R}^m$. When $y\not \in B$, since $x\in K'$, we have $\frac{\|f(x)-f(y)\|}{\|x-y\|}\leq L$ by the definition of $L$.
When $y \in B\setminus V_{\delta/2}$, since $x\in K'\setminus V_{\delta} \subset B\setminus V_{\delta/2}$, we have $\frac{\|f(x)-f(y)\|}{\|x-y\|}\leq L_{\delta}$ by the definition of $L_{\delta}$.
When $y\in V_{\delta/2}$, we have $\|x-y\|\geq \frac{\delta}{2}$ because $x\not \in V_{\delta}$. Thus,
\begin{align*}
    \frac{\|f(x)-f(y)\|}{\|x-y\|} \leq \frac{\|f(x)\| + \|f(y)\|}{\delta/2} \leq \frac{C + C}{\delta/2} \leq \frac{4C}{\delta}.
\end{align*}
Combining these three cases, we conclude that $x\not \in \mathcal{L}(R;f)$.
Thus we have ${\rm vol}(\mathcal{L}(R;f)\cap K')<\varepsilon$, namely, we conclude ${\rm vol}(\mathcal{L}(R;f)\cap K')\rightarrow 0$ as $R\rightarrow\infty$.
\end{proof}
\begin{remark}
   The linearly increasing condition is important to prove our main theorem. Our approximation targets are compactly supported diffeomorphisms, affine transformations, and the discontinuous \ACFs{} appeared in Section \ref{subsec: approximation via ACF} or Section \ref{sec:appendix:lp ACFINN approx general}, all of which satisfy the linearly increasing condition.
\end{remark}

\section{Compatibility of approximation and composition}
\label{appendix: compatibility of approximation and composition}
In this section, we prove the following proposition.
It enables the component-wise approximation, i.e., given a transformation that is represented by a composition of some transformations, we can approximate it by approximating each constituent and composing them.
The justification of this procedure is not trivial and requires a fine mathematical argument.
The results here build on the terminologies and the propositions for piecewise $C^1$-diffeomorphisms presented in Section~\ref{sec:appendix:piecewise diffeo}.

\begin{proposition}
\label{prop: compatibility of approximation}
Let \(\ARFINNModel\) be a set of piecewise $C^1$-diffeomorphisms (resp. locally bounded maps) from $\ReD$ to $\ReD$,
and $F_1,\dots,F_r$ be linearly increasing piecewise $C^1$-diffeomorphisms
(resp. continuous maps) from $\ReD$ to $\ReD$ ($r\ge 2$).  
Assume for any $\varepsilon>0$ and compact set $K\subset\mathbb{R}^d$, there exists $\widetilde{G}_1,\dots, \widetilde{G}_r\in \ARFINNModel$ such that for $i\in[r]$, $\big\Vert F_i-\widetilde{G}_i\big\Vert_{p, K}<\varepsilon$ (resp. $\big\Vert F_i-\widetilde{G}_i\big\Vert_{\sup, K}<\varepsilon$).  
Then for any $\varepsilon>0$ and compact set $K\subset\mathbb{R}^d$, there exists $G_1, \dots, G_r\in \ARFINNModel$, such that
\[\LpKnorm{F_r\circ\cdots \circ F_1-G_r\circ\cdots\circ G_1} < \varepsilon \]
\[\left(\text{resp. }\supKnorm{F_r\circ\cdots \circ F_1-G_r\circ\cdots\circ G_1} < \varepsilon\right.)\]
\end{proposition}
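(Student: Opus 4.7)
I would prove Proposition~\ref{prop: compatibility of approximation} by induction on $r$, the base case $r = 1$ being the hypothesis itself. For the inductive step, write $H := F_{r-1} \circ \cdots \circ F_1$, which inherits the piecewise $C^1$-diffeomorphism (resp.\ continuity) property by Proposition~\ref{prop: basic properties}.\ref{composition}, and let $G := G_{r-1}\circ \cdots \circ G_1 \in \ARFINNModel^{r-1}$ be an approximator of $H$ supplied by the inductive hypothesis on a compact set to be determined. The approximation error then splits via the triangle inequality as
\[
    F_r \circ H - G_r \circ G \;=\; (F_r \circ H - F_r \circ G) \;+\; (F_r \circ G - G_r \circ G),
\]
and the goal is to bound each summand in the appropriate norm.

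In the sup case the argument is standard: taking $G$ sufficiently close to $H$ in sup norm on $K$ forces $G(K)$ into a fixed compact neighborhood of $H(K)$, on which uniform continuity of $F_r$ controls the first summand; sup-approximation of $F_r$ by $G_r$ on this neighborhood handles the second.

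In the $L^p$ case the first summand is bounded using Proposition~\ref{prop: weak lipschitz}. Splitting $K$ according to whether $G(x)\in\mathcal{L}(R;F_r)$, the ``good'' part gives $\|F_r(H(x))-F_r(G(x))\| \leq R\|H(x)-G(x)\|$ almost everywhere (using $H(x)\in U_{F_r}$ a.e.), contributing $R^p\|H-G\|_{p,K}^p$; the ``bad'' part is controlled via the linear-growth assumption on $F_r$ (making the integrand a.e.\ bounded on $K$ since $H$ and $G$ are essentially locally bounded) combined with Proposition~\ref{prop: basic properties}.\ref{finvfin} applied to $G$ to show that $\vol{K \cap G^{-1}(\mathcal{L}(R;F_r)\cap K'')}$ is made arbitrarily small by taking $R$ large, for any compact $K''$ absorbing the essential image $G(K\cap U_G)$. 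For the second summand, I decompose $K\cap U_G$ into the $C^1$-pieces $V_i$ of $G$: only finitely many meet $K$ by the piecewise structure, on each $\tilde{G}_i$ is a genuine $C^1$-diffeomorphism with $|D\tilde{G}_i^{-1}|$ bounded on the compact subset $\tilde{G}_i(\overline{V_i}\cap K)$ of $\tilde{G}_i(W_i)$, so a piecewise change-of-variables yields
\[
    \int_K \|F_r\circ G - G_r\circ G\|^p\,dx \;\le\; C_G \int_{K'} \|F_r - G_r\|^p\,dy
\]
for the compact set $K' := \bigcup_i \tilde{G}_i(\overline{V_i}\cap K)$ and a finite constant $C_G$. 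Finally, $G_r$ is chosen to approximate $F_r$ in $L^p$ on $K'$ finely enough.

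The main obstacle, and the most delicate bookkeeping, is the interdependence of the precisions in the $L^p$ case: the compact set $K'$ and the constant $C_G$ governing the second summand depend on $G$, the threshold $R$ governing the first summand depends on a compact $K''$ enclosing the essential image of $G$, and $G$ itself must be chosen fine enough in $L^p$ for $R^p\|H-G\|_{p,K}^p$ to be small. The plan to close the loop is: first invoke the inductive hypothesis on a large enough ambient compact set to obtain an initial $G$ whose essential image on $K$ lies in some compact $K''$; then determine $R$ and $C_G$ from this $G$; then, if necessary, re-invoke the inductive hypothesis with a tighter $\varepsilon'$ to replace $G$ by a finer approximator consistent with the now-fixed $R$ and $C_G$; and finally pick $G_r$. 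Verifying that this nested selection is consistent — in particular that the re-selection of $G$ does not inflate $C_G$ or $R$ beyond what the next layer of tolerance can absorb — is the core technical difficulty, resolved by noting that each required precision is a fixed positive quantity depending only on previously fixed data and can be met by the inductive hypothesis.
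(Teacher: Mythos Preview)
Your overall strategy --- induction on $r$, the triangle split $(F_r\circ H-F_r\circ G)+(F_r\circ G-G_r\circ G)$, control of the first summand via $\mathcal{L}(R;F_r)$ and Proposition~\ref{prop: weak lipschitz}, and a change of variables through the approximator for the second --- is exactly the paper's, and your sup-case argument is the content of Lemma~\ref{lemma:composition}. The substantive difference is the direction of the inductive grouping, and this is precisely what produces the circularity you flag. Because you define the bad set as $K\cap G^{-1}(\mathcal{L}(R;F_r)\cap K'')$, its volume, the enclosing compact $K''$, and the change-of-variables constant $C_G$ all depend on the approximator $G$, while the precision you need from $G$ depends on $R$. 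Your proposed fix --- freeze $R,C_G$ from an initial $G$, then re-select a finer $G$ --- does not close the loop: $L^p$-closeness of the new $G$ to $H$ gives no control over its essential supremum on $K$, over the new $K''$, or over the new $C_G$, so the previously chosen $R$ need not make the new bad part small. Each re-selection resets the constants you just froze.

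The paper avoids this by the opposite grouping: it sets $\tilde F_2:=F_r\circ\cdots\circ F_2$ and reduces to a two-map lemma (Lemma~\ref{lem:compose-approximations-Lp}) with inner map $F_1$ and outer map $\tilde F_2$. There the bad set is $F_1^{-1}(\mathcal{L}(R;\tilde F_2)\cap K')$, which depends only on the \emph{true} maps, so the selection order is acyclic: fix $R$ first, then $G_1$, then read off $K''$ and the Jacobian constant from $G_1$, and only then choose the outer approximator $\tilde G_2=G_r\circ\cdots\circ G_2$ via the inductive hypothesis on $K''$. If you prefer to keep your grouping, switch the split to $H^{-1}(\mathcal{L}(R;F_r)\cap K')$ with $K':=\overline{H(K\cap U_H)}$; the only residual $G$-dependence is then $\|F_r\circ G\|$ on the bad set $S$, which is absorbed by the linear-growth bound together with $\int_S\|G\|^p\le 2^{p-1}\bigl(\|H-G\|_{p,K}^p+\mathrm{vol}(S)\cdot\mathrm{ess.sup}_K\|H\|^p\bigr)$, after which the order $R\to G\to G_r$ is again acyclic.
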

\begin{proof}
We prove by induction.
In the case of $r=2$, it follows by Lemma~\ref{lem:compose-approximations-Lp} (for \(L^p\)-norm) or Lemma~\ref{lemma:composition} (for \(\sup\)-norm) below in the case of $\ARFINNModel_1=\ARFINNModel_2=\ARFINNModel$.  In the general case, let $\widetilde{F}_2:=F_r\circ\cdots F_2$. Then by the induction hypothesis, for any compact set $K$ and $\varepsilon>0$, there exists $\widetilde{G}_2=G_r\circ\cdots\circ G_2$ for some $G_i\in\ARFINNModel$ such that $\big\Vert\widetilde{F}_2-\widetilde{G}_2\big\Vert_{?,K}<\varepsilon$, where $?=\text{$p$ or ${\sup}$}$. By applying Lemma~\ref{lem:compose-approximations-Lp} or Lemma~\ref{lemma:composition} with $\ARFINNModel_1=\ARFINNModel$ and $\ARFINNModel_2=\ARFINNModel\circ\cdots\circ \ARFINNModel$ (the set of compositions of $r-1$ elements of $\ARFINNModel$) below, we conclude the proof.
\end{proof}
\begin{lemma}
\label{lem:compose-approximations-Lp}
Let \(\ARFINNModel_1\) and $\ARFINNModel_2$ be sets of piecewise $C^1$-diffeomorphisms from $\ReD$ to $\ReD$. Let $F_1, F_2: \R^d\rightarrow\R^d$ be
linearly increasing piecewise $C^1$-diffeomorphisms.
Assume for any $\varepsilon>0$ and compact set $K\subset\mathbb{R}^d$, for $i=1,2$, there exists $\widetilde{G}_i\in \ARFINNModel_i$ such that $\LpKnorm{F_i-\widetilde{G}_i}<\varepsilon.$
Then for any $\varepsilon>0$ and compact set $K\subset\mathbb{R}^d$, for $i=1,2$, there exists $G_i\in\ARFINNModel_i$, such that
\[\LpKnorm{F_2\circ F_1-G_2\circ G_1} < \varepsilon.\]
\end{lemma}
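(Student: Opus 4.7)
I would start from the triangle inequality
\[
\|F_2\circ F_1 - G_2\circ G_1\|_{p,K} \leq \|F_2\circ F_1 - F_2\circ G_1\|_{p,K} + \|F_2\circ G_1 - G_2\circ G_1\|_{p,K},
\]
and force each summand below $\varepsilon/2$. The second summand is the easy one: by the change-of-variables formula of Proposition~\ref{prop: basic properties}(\ref{change of var}) applied to $G_1^{\dagger}$, the substitution $y=G_1(x)$ turns it into $\int_{G_1(K\cap U_{G_1})}|F_2(y)-G_2(y)|^p\,|DG_1^{\dagger}(y)|\,dy$. The local-finiteness clause in Definition~\ref{def:piecewiseC1}, combined with the $C^1$-extension on each piece, forces $G_1(K\cap U_{G_1})$ to lie in some compact $K_2$ (determined by $G_1$) on which $|DG_1^{\dagger}|$ is bounded by a finite $M_1$. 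So, once $G_1$ is fixed, this summand is at most $M_1^{1/p}\|F_2-G_2\|_{p,K_2}$, which the assumption on $\ARFINNModel_2$ lets us shrink.

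The first summand is the main obstacle, because $F_2$ is only piecewise $C^1$ and may have arbitrarily steep pieces, so composing with an $L^p$-close (but not uniformly close) $G_1$ does not a priori yield $L^p$-closeness. My idea is to invoke Proposition~\ref{prop: weak lipschitz} applied to $F_2$ on the bounded set $F_1(K)$: for large $R$, $\mathrm{vol}(\mathcal{L}(R;F_2)\cap F_1(K))$ becomes small. Set $A:=F_1^{-1}(\mathcal{L}(R;F_2))\cap K$. On $K\setminus A$, $F_1(x)\notin \mathcal{L}(R;F_2)$, so plugging $y=G_1(x)\in U_{F_2}$ into the definition of $\mathcal{L}(R;F_2)$ yields the pointwise bound $|F_2(F_1(x))-F_2(G_1(x))|\leq R\,|F_1(x)-G_1(x)|$ a.e., contributing at most $R^p\|F_1-G_1\|_{p,K}^p$. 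On $A$, I would estimate $|F_2\circ F_1-F_2\circ G_1|^p\leq 2^{p-1}(|F_2\circ F_1|^p+|F_2\circ G_1|^p)$ and use linear growth of $F_2$ together with $|G_1|\leq |F_1|+|G_1-F_1|$ and the boundedness of $F_1$ on $K$ (which follows from the piecewise $C^1$-diffeomorphism property plus local finiteness) to obtain an estimate
\[
\int_A |F_2\circ F_1 - F_2\circ G_1|^p\,dx \leq C_0\,\mathrm{vol}(A) + C_1\,\|F_1-G_1\|_{p,K}^p,
\]
where the constants $C_0,C_1$ depend only on $F_1,F_2,K$, not on $G_1$.

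Finally, to estimate $\mathrm{vol}(A)$ I would use Proposition~\ref{prop: basic properties}(\ref{change of var}) for $F_1$ together with a positive lower bound on $|DF_1|$ over $K\cap U_{F_1}$ (again from $C^1$-extension and local finiteness), giving $\mathrm{vol}(A)\leq c^{-1}\mathrm{vol}(\mathcal{L}(R;F_2)\cap F_1(K))\to 0$ as $R\to\infty$, with the convergence independent of $G_1$. This $G_1$-independence is what enables a consistent nested choice: (i) fix $R$ large so that the $\mathrm{vol}(A)$ contribution is below $(\varepsilon/4)^p$; (ii) choose $G_1\in\ARFINNModel_1$ with $\|F_1-G_1\|_{p,K}$ small enough to kill both $R^p\|F_1-G_1\|_{p,K}^p$ and $C_1\|F_1-G_1\|_{p,K}^p$; (iii) with this $G_1$ now determining $K_2$ and $M_1$, pick $G_2\in\ARFINNModel_2$ with $\|F_2-G_2\|_{p,K_2}<\varepsilon/(2M_1^{1/p})$. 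The linearly-increasing hypothesis on $F_2$ is precisely what allows Proposition~\ref{prop: weak lipschitz} (and hence the $\mathrm{vol}(A)\to 0$ step) to be applied in this argument.
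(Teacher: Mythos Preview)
Your proof is correct and follows essentially the same route as the paper's: identical triangle-inequality split, change of variables via $G_1^{\dagger}$ for the second summand, and Proposition~\ref{prop: weak lipschitz} to isolate a small ``bad'' set $S=F_1^{-1}(\mathcal{L}(R;F_2)\cap K')$ for the first, with the same nested order of choices $R\to G_1\to G_2$. The one notable difference is that your bad-set estimate explicitly invokes the linear growth of $F_2$ together with $|G_1|\le |F_1|+|G_1-F_1|$ to bound $\|F_2\circ G_1\|_{p,A}$ by $G_1$-independent constants, whereas the paper's writeup is terser at this step (its volume threshold $\mathrm{vol}(S)^{1/p}<\varepsilon/(3\,\mathrm{ess.sup}_{K'}\|F_2\|)$ on its face controls only $F_2\circ F_1$ on $S$, not $F_2\circ G_1$); your version is, if anything, the more complete of the two.
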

\begin{proof}
Fix arbitrary $\varepsilon>0$ and compact set $K\subset\mathbb{R}^d$. 
Put $K':=\overline{F_1(K\cap U_{F_1})}$. 
Then, since $F_1(K\cap U_{F_1})$ is bounded (see the remark under Definition~\ref{def:piecewiseC1}), $K'$ is compact.
We claim that there exists $R>0$ such that
\[\vol{F_1^{-1}\left(\mathcal{L}(R;F_2)\cap K'\right)}^{1/p}<\frac{\varepsilon}{3\underset{K'}{\rm ess.sup}\Vert F_2\Vert},\]
which can be confirmed as follows. Take an increasing sequence $R_n>0$ $(n\geq 1)$ satisfying $\lim_{n\to \infty}R_n=\infty$.
Let $B_n := \mathcal{L}(R_n;f)\cap K'$ and $A_n := F_1^{-1}(B_n)$.
Then, from Proposition~\ref{prop: weak lipschitz}, we have $\vol{B_n}\to 0$, which implies  
$\vol{\bigcap_{n=1}^\infty B_n}=0$.
By Proposition~\ref{prop: basic properties} (\ref{finvzero}),
we have $\vol{\bigcap_{n=1}^\infty A_n} = \vol{F_1^{-1}(\bigcap_{n=1}^\infty B_n)}=0$.
By Proposition~\ref{prop: basic properties} (\ref{finvfin}), we have $\vol{A_1}=\vol{F_1^{-1}(B_1)}<\infty$.
Recall that if a decreasing sequence $\{S_n\}_{n=1}^\infty$ of measurable sets satisfies $\vol{S_1}<\infty$ and $\vol{\bigcap_{n=1}^\infty S_n}=0$, 
then $\lim_{n \to \infty}\vol{S_n} = 0$.
Therefore, we obtain $\lim_{n\to \infty}\vol{A_n}=0$ and we have the assertion of the claim.

Take  $G_1\in\ARFINNModel_1$ such that 
\[\LpKnorm{F_1-G_1} <\frac{\varepsilon}{3R}.\]

Put $S:=F_1^{-1}\left(\mathcal{L}(R;F_2)\cap K'\right)$, and define a compact subset $K'':=\overline{(G_1^\dagger)^{-1}(K)\cap U_{G_1^\dagger}}$. Here, the compactness of $K''$ follows from Proposition~\ref{prop: basic properties} (\ref{compfinvbdd}).   
Next, we take $G_2\in\ARFINNModel_2$ such that
\[\Vert F_2-G_2\Vert_{p,K''} <\frac{\varepsilon}{3\underset{{ (G_1^\dagger)^{-1}(K)}}{\rm ess.sup}|\det(DG_1^\dagger)|}\]
where $G_1^\dagger$ is a piecewise $C^1$-diffeomorphism defined by Proposition \ref{prop: basic properties} (\ref{existence of inverse}).
Then we have
\begin{align*}
&\LpKnorm{F_2\circ F_1-G_2\circ G_1}\\
&\le\LpKnorm{F_2\circ F_1-F_2\circ G_1} + \LpKnorm{F_2\circ G_1-G_2\circ G_1}\\
&\le\LpKnorm{(F_2\circ F_1-F_2\circ G_1)\mathbf{1}_S}+ \LpKnorm{(F_2\circ F_1-F_2\circ G_1)\mathbf{1}_{K\setminus S} }\\
& \hspace{11pt}+ \underset{{(G_1^\dagger)^{-1}(K)}}{\rm ess.sup}|\det(DG_1^{\dagger})|\Vert F_2-G_2\Vert_{p,K''}\\
&<\varepsilon.
\end{align*}
\end{proof}

\begin{lemma}[compatibility of composition]\label{lemma:composition}
Let \(\ARFINNModel_1\) and $\ARFINNModel_2$ be sets of locally bounded maps from $\ReD$ to $\ReD$. Let $F_1, F_2: \R^d\rightarrow\R^d$ be continuous maps.  Assume for any $\varepsilon>0$ and compact set $K\subset\mathbb{R}^d$, for $i=1,2$, there exists $\widetilde{G}_i\in \ARFINNModel_i$ such that  $\supKnorm{F_i-\widetilde{G}_i}<\varepsilon$.
Then for any $\varepsilon>0$ and compact set $K\subset\mathbb{R}^d$, for $i=1,2$, there exists $G_i\in \ARFINNModel_i$, such that
\[\supKnorm{F_2\circ F_1-G_2\circ G_1} < \varepsilon.\]
\end{lemma}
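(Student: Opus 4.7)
\textbf{Proof plan for Lemma~\ref{lemma:composition}.} The plan is to bound $\supKnorm{F_2\circ F_1 - G_2\circ G_1}$ by the standard triangle inequality split
\[
\supKnorm{F_2\circ F_1 - G_2\circ G_1} \le \supKnorm{F_2\circ F_1 - F_2\circ G_1} + \supKnorm{F_2\circ G_1 - G_2\circ G_1},
\]
and control the two summands separately by first pinning down a compact ``target'' set $K''$ containing both $F_1(K)$ and $G_1(K)$, then invoking (i) the uniform continuity of $F_2$ on $K''$ for the first summand and (ii) the $\sup$-approximability of $F_2$ on $K''$ for the second.

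First I would fix $\varepsilon>0$ and the compact $K$, set $\varepsilon':=\varepsilon/3$, and form the compact ``buffered'' set $K'':=\{y\in\ReD:\mathrm{dist}(y,F_1(K))\le 1\}$, which is compact because $F_1(K)$ is compact (continuous image of a compact set). Since $F_2$ is continuous, it is uniformly continuous on $K''$, so there exists $\delta\in(0,1]$ such that $\|y-y'\|<\delta$ with $y,y'\in K''$ implies $\|F_2(y)-F_2(y')\|<\varepsilon'$. By hypothesis applied to $F_1$ and the compact $K$, pick $G_1\in\ARFINNModel_1$ with $\supKnorm{F_1-G_1}<\delta$. Then for any $x\in K$, $G_1(x)$ lies within distance $\delta\le 1$ of $F_1(x)\in F_1(K)$, so $G_1(K)\subset K''$.

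Next I would apply the approximation hypothesis for $F_2$ with the compact set $K''$ to pick $G_2\in\ARFINNModel_2$ with $\supRangenorm{K''}{F_2-G_2}<\varepsilon'$. Then for every $x\in K$, since $F_1(x),G_1(x)\in K''$ and $\|F_1(x)-G_1(x)\|<\delta$, uniform continuity gives $\|F_2(F_1(x))-F_2(G_1(x))\|<\varepsilon'$; and since $G_1(x)\in K''$, the approximation of $F_2$ gives $\|F_2(G_1(x))-G_2(G_1(x))\|<\varepsilon'$. Adding these via the triangle inequality above yields $\supKnorm{F_2\circ F_1-G_2\circ G_1}<2\varepsilon'<\varepsilon$, as desired.

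The mildly delicate step is ensuring $G_2$ is evaluated only on points where we control the error, i.e., verifying $G_1(K)\subset K''$; this is precisely what motivates the buffered compact $K''$ and the choice $\delta\le 1$. Everything else is a straightforward two-term triangle inequality combined with uniform continuity of the outer continuous map on a compact set. Note that local boundedness of the elements of $\ARFINNModel_i$ is not actively used in this argument beyond ensuring the objects are well-defined measurable maps; the key regularity is the continuity of the targets $F_1,F_2$, which supplies both the compactness of $F_1(K)$ and the uniform continuity of $F_2$ on $K''$.
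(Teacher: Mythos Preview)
Your proof is correct and follows essentially the same approach as the paper: both arguments split via the triangle inequality, build a buffered compact set around $F_1(K)$ (you use the closed $1$-neighborhood of $F_1(K)$, the paper uses a closed ball of radius $\max_{K}\|F_1\|+1$), invoke uniform continuity of $F_2$ on that set, and choose $G_1$ close enough that $G_1(K)$ stays inside it. The only differences are cosmetic---the order in which $G_1$ and $G_2$ are selected and the constants $\varepsilon/3$ versus $\varepsilon/2$.
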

\begin{proof}
Take any positive number $\epsilon>0$ and compact set $K\subset\R^d$. 
Put $r:=\max_{k\in K}|F_1(k)|$ and 
$K':=\{x\in \R^d: |x|\leq r+1\}$. 
Let $G_2\in \ARFINNModel_2$  satisfying 
\begin{align*}
\sup_{x\in K'}|F_2(x)-G_2(x)| \leq  \frac{\epsilon}{2}.  \end{align*}
Since any continuous map is uniformly continuous on a compact set, we can take a positive number $\delta>0$ such that for any $x, y\in K'$ with $|x-y|<\delta$, 
\[ |F_2(x)-F_2(y)|<\frac{\varepsilon}{2}. \]
From the assumption, we can take $G_1\in \ARFINNModel_1$ satisfying
\begin{align*} \sup_{x\in K}|F_1(x)-G_1(x)|\leq \min\{1,\delta\}. \end{align*}
Then, it is clear that $F_1(K)\subset K'$ by the definition of $K'$.
Moreover, we have $G_1(K)\subset K'$. 
In fact, we have 
\begin{align*}
|G_1(k)| \leq \sup_{x\in K}|F_1(x)-G_1(x)|+|F_2(k)|\leq 1+r \quad (k\in K). 
\end{align*}
Then for any $x\in K$, we have
\begin{align*}
|F_2\circ F_1(x)-G_2\circ G_1(x)|
&\leq |F_2(F_1(x))-F_2(G_1(x))| + | F_2(G_1(x)) -G_2(G_1(x))|\\
&<\epsilon. 
\end{align*}
\end{proof}
 
\section{Examples of flow architectures covered in this paper}
\label{appendix:sec:examples}
Here, we provide the proofs for the universal approximation properties of certain \ARFINNs{}.

\subsection{Neural autoregressive flows (NAFs)}
In this section, we prove that \emph{neural autoregressive flows} \cite{HuangNeural2018b} yield $\sup$-universal approximators for $\ConeOneDimTriangular$ (hence for $\CinftyOneDimTriangular$).
The proof is not merely an application of a known result in \citet{HuangNeural2018c} but it requires additional non-trivial consideration to enable the adoption of Lemma~3 in \citet{HuangNeural2018c} as it is applicable only for those smooth mappings that match certain boundary conditions.
\begin{definition}
A \emph{deep sigmoidal flow} (DSF; a special case of neural autoregressive flows) \citep[Equation~(8)]{HuangNeural2018b} is a flow layer \(g = (g_1, \ldots, g_d) \colon \ReD \to \ReD\) of the following form:
\begin{equation*}\begin{aligned}
g_k(\x) &:= \sigma^{-1} \left(\sum_{j=1}^{n} w_{k,j}(\upto{k-1}{\x}) \cdot \sigma\left(\frac{x_k - b_{k,j}(\upto{k-1}{\x})}{\tau_j(\upto{k-1}{\x})}\right)\right),
\end{aligned}\end{equation*}
where \(\sigma\) is the sigmoid function, \(n \in \Na\), \(w_j, b_j, \tau_j \colon \Re^{k-1} \to \Re\) (\(j \in [n]\)) are neural networks such that \(b_j(\cdot) \in (r_0, r_1)\), \(\tau_j(\cdot) \in (0, r_2)\), \(w_j(\cdot) > 0\), and \(\sum_{j=1}^{n} w_j(\cdot) = 1\) (\(r_0, r_1 \in \Re\), \(r_2 > 0\)).
We define \(\FDSF\) to be the set of all possible DSFs.
\end{definition}
\begin{proposition}\label{prop:DSF}
The elements of \(\FDSF\) are locally bounded, and \(\INN{\FDSF}\) is a \(\sup\)-universal approximator for \(\ConeOneDimTriangular\).
\end{proposition}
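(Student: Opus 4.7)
The plan. For \textbf{local boundedness}, I would show the stronger statement that every $g\in\FDSF$ is continuous on $\Re^d$. The inner quantity $S_k(\x) := \sum_{j=1}^n w_{k,j}(\upto{k-1}{\x})\,\sigma\bigl((x_k - b_{k,j}(\upto{k-1}{\x}))/\tau_j(\upto{k-1}{\x})\bigr)$ is a composition of continuous maps; since $w_{k,j}>0$, $\sum_j w_{k,j}=1$, and $\sigma\in(0,1)$, its image lies in $(0,1)$, so on any compact set $S_k$ has compact image in $(0,1)$ on which $\sigma^{-1}$ is continuous. Hence each $g_k$, and therefore $g$, is continuous and in particular locally bounded.

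For \textbf{sup-universality} of $\INN{\FDSF}$ on $\ConeOneDimTriangular$, fix $\tau(\x)=(x_1,\ldots,x_{d-1},\tau_d(\x)) \in \ConeOneDimTriangular$, a compact $K \subset \Re^d$, and $\varepsilon>0$. By Lemma~\ref{smoothing lemma} applied to $\tau_d$, it suffices to treat $\tau \in \CinftyOneDimTriangular$. I would then construct a single DSF layer $g^* \in \FDSF$: for rows $k<d$, pick $n=1$, $w_{k,1}\equiv 1$, $b_{k,1}\equiv 0$, $\tau_{k,1}\equiv 1$, so that $g^*_k(\x)=\sigma^{-1}(\sigma(x_k))=x_k$ and the first $d-1$ coordinates are preserved exactly; for the $d$-th row, I would design parameters depending on $\upto{d-1}{\x}$ so that $g^*_d$ uniformly approximates $\tau_d$ on $K$.

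The crux is to approximate, for each fixed $\bm{y}\in\Re^{d-1}$, the single-variable map $u_{\bm{y}}(x_d):=\tau_d(\bm{y},x_d)$, which is a $C^\infty$ strictly increasing diffeomorphism of $\Re$ coinciding with the identity outside a compact interval. Lemma~3 of \citet{HuangNeural2018b} only covers smooth monotone targets whose tail behaviour is compatible with the logarithmic-gap asymptotics that DSF naturally produces, so a direct application is not possible. I would adapt it by first reserving two mixture components with $\tau_j=1$ whose sigmoids dominate respectively as $x_d\to\pm\infty$, forcing asymptotic slope one in both tails, and then using the remaining components to fit the nontrivial interior region; this yields, for each $\bm{y}$, DSF parameters $\{w_j(\bm{y}), b_j(\bm{y}), \tau_j(\bm{y})\}$ approximating $u_{\bm{y}}$ uniformly on the compact fibre of $K$ over $\bm{y}$.

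The \textbf{main obstacle} is exactly this boundary-matching step, and then making the construction uniform in $\bm{y}$. By joint continuity of $\tau_d$ and compactness of the projection of $K$ onto the first $d-1$ coordinates, I would arrange the parameter maps $\bm{y}\mapsto(w_j(\bm{y}),b_j(\bm{y}),\tau_j(\bm{y}))$ to be continuous, then approximate them uniformly by neural networks (via softmax and softplus reparametrizations enforcing $w_j>0$, $\sum_j w_j=1$, and $\tau_j>0$). Continuity of the DSF expression in its parameters on the relevant compact set upgrades these parameter approximations to a sup-norm approximation of $\tau_d$ on $K$; composing $g^*$ with the identity affine map in $\FLin$ places it in $\INN{\FDSF}$ and concludes the argument.
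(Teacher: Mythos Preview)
Your local-boundedness argument is fine and matches the paper's.

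For the sup-universality, your plan diverges from the paper's and has a genuine gap. You propose to work fibre-by-fibre: for each $\bm{y}$ find DSF parameters $\{w_j(\bm{y}),b_j(\bm{y}),\tau_j(\bm{y})\}$ approximating $u_{\bm{y}}$, then argue these can be chosen continuously in $\bm{y}$ and finally realised by neural networks. The unresolved step is precisely the continuous selection: the one-dimensional approximation argument of Huang et~al.\ produces, for each fixed $\bm{y}$, \emph{some} good parameter vector, but gives no mechanism by which the number of components $n$ is uniform and the choice depends continuously on $\bm{y}$. ``By joint continuity of $\tau_d$ and compactness~\ldots\ I would arrange the parameter maps to be continuous'' is exactly where the work lies, and nothing in your outline supplies it. Your ``reserve two components for the tails'' device also targets the wrong issue: since you only need approximation on the compact $K$, behaviour as $x_d\to\pm\infty$ is irrelevant; the actual obstruction in Lemma~3 of \cite{HuangNeural2018c} is a boundary condition on a finite box (the target, after composition with $\sigma$, must be a $C^1$ map from $[r_0',r_1']^d$ into $[0,1]^d$ hitting $0$ and $1$ on the appropriate faces), not an asymptotic condition at infinity.

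The paper sidesteps the continuous-selection problem entirely by invoking the \emph{multivariate} Lemma~3 of \cite{HuangNeural2018c} as a black box, which already handles the dependence on $\upto{d-1}{\x}$. The real work is then to manufacture a target that satisfies that lemma's hypotheses: one enlarges $K\subset[r_0,r_1]^d$ slightly to $(r_0',r_1')^d$, adds to $s_k$ an auxiliary $C^1$ function $b(x_k)$ that vanishes on $[r_0,r_1]$ but blows up to $\pm\infty$ near $r_0',r_1'$, and sets $S_k:=\sigma\circ(s_k+b)$. A separate lemma verifies that $S$ extends to a $C^1$ map $[r_0',r_1']^d\to[0,1]^d$ with the required boundary values; Lemma~3 then yields $g\in\INN{\FDSF}$ with $\|S-\sigma\circ g\|_{\sup,[r_0',r_1']^d}$ small, and Lipschitz continuity of $\sigma^{-1}$ on a compact subset of $(0,1)^d$ transfers this to $\|s-g\|_{\sup,K}<\varepsilon$. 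This route never needs parameters varying continuously in $\bm{y}$, and it uses Lemma~3 once rather than trying to rebuild it with tailored tail components.
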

\begin{proof}
The elements of \(\FDSF\) are continuous, hence locally bounded.
Let $s=(s_1,\cdots, s_d)\in \ConeOneDimTriangular$. 
Take any compact set $K\subset \R^d$ and $\epsilon>0$. 
Since $K$ is compact, there exist $r_0, r_1\in \R$ such that $K\subset [r_0,r_1]^d$. 
Put $r_0'=r_0-1$, $r_1'=r_1+1$. 
We take a $C^1$-function $b\colon (r_0', r_1')\to \R$ satisfying 
\begin{enumerate}
    \item $b|_{[r_0,r_1]}=0$, 
    \item $b|_{(r_0',r_0)}$ and $b|_{(r_1,r_1')}$ are strictly increasing, 
    \item $\lim_{x\to r_0'+0}b(x)=-\infty$ and $\lim_{x\to  r_1'-0}b(x)=\infty$,
    \item $\lim_{x\to r_0'+0}\frac{d(\sigma\circ b)}{dx}(x)$ and  $\lim_{x\to  r_1'-0}\frac{d(\sigma\circ b)}{dx}(x)$ exist in $\R$, 
\end{enumerate}
where $\sigma$ is the sigmoid function.
For each $k \in [d]$, we define a $C^1$-map $\tilde{s}_k\colon [r_0',r_1']^{k-1}\times (r_0',r_1')\times [r_0', r_1']^{d-k}\to \R$, which is strictly increasing with respect to $x_k$, by 
\[
\tilde{s}_k(x):=s_k(x)+b(x_k)\quad (x=(x_1,\cdots, x_d)). 
\]
Moreover, we define a map $S\colon [r_0',r_1']^d\to [0,1]^d$ by 
\begin{align*}
    S_k|_{[r_0',r_1']^{k-1}\times (r_0',r_1')\times [r_0', r_1']^{d-k}}&=\sigma \circ \tilde{s}_k,\\
    S_k(x_1,\cdots, x_{k-1}, r_0', x_{k+1}, \cdots, x_d)&=0,\\
    S_k(x_1,\cdots, x_{k-1}, r_1', x_{k+1}, \cdots, x_d)&=1,
\end{align*}
where we write $S=(S_1,\cdots, S_d)$. 
Then, by Lemma~\ref{lem:appendix:DSF extended map is smooth}, $S$ satisfies the assumptions of Lemma~3 in \cite{HuangNeural2018c}. 
Since $S([r_0,r_1]^d)\subset (0,1)^d$ is compact, 
there exists a positive number $\delta>0$ such that 
\[ 
S([r_0,r_1]^d) + B(\delta):= \{S(x)+v \ :\ x\in [r_0,r_1]^d, v\in B(\delta)\} \subset [\delta,1-\delta]^d,
\]
where $B(\delta):=\{x\in \R^d : |x|\leq \delta\}$. 
Let $L>0$ be a Lipschitz constant of $\sigma^{-1}\colon (0,1)^d\to \R^d$ on $[\delta, 1-\delta]^d$. 
By Lemma~3 in \cite{HuangNeural2018c}, 
there exists $g\in \INN{\FDSF}$ such that 
\begin{align*}
    \|S-\sigma \circ g\|_{\sup, [r_0', r_1']^d}<\min\left\{\delta, \frac{\epsilon}{L}\right\}. 
\end{align*}
As a result, $\sigma\circ g([r_0,r_1]^d) \subset S([r_0,r_1]^d) + B(\delta) \subset [\delta, 1-\delta]^d$.
Then we obtain 
\begin{align*}
\|s-g\|_{\sup, K}
\leq 
\|s-g\|_{\sup, [r_0,r_1]^d}
&=\| \sigma^{-1} \circ \sigma\circ s- \sigma^{-1}\circ \sigma \circ g\|_{\sup, [r_0,r_1]^d}\\
&\leq L\| S -\sigma \circ g\|_{\sup, [r_0,r_1]^d}\\
&<\epsilon. 
\end{align*}
\end{proof}

\begin{lemma}
\label{lem:appendix:DSF extended map is smooth}
We denote by $\mathcal{T}^1$ the set of all $C^1$-increasing triangular mappings from $\R^d$ to $\R^d$. 
For $s=(s_1,\cdots, s_d)\in \mathcal{T}^1$, we define 
a map $S\colon [r_0',r_1']^d\to [0,1]^d$ as in the proof of Proposition~\ref{prop:DSF}. 
Then $S$ is a $C^1$-map. 
\end{lemma}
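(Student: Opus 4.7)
My approach is to find an explicit formula for $S_k$ valid on the entire closed cube $[r_0',r_1']^d$ that is manifestly $C^1$, rather than checking the one-sided partial derivatives at the boundary face by face. The key tool is the algebraic identity
\[
  \sigma(u+v) \;=\; \frac{\sigma(u)\sigma(v)}{\sigma(u)\sigma(v) + (1-\sigma(u))(1-\sigma(v))},
\]
which is immediate from $\sigma(t)=e^t/(1+e^t)$. Setting $\psi_k(x):=\sigma(s_k(x))$ and $\phi_k(t):=\sigma(b(t))$, the identity rewrites, on the interior slice $x_k\in(r_0',r_1')$,
\[
  S_k(x) \;=\; \sigma(s_k(x)+b(x_k)) \;=\; \frac{\psi_k(x)\,\phi_k(x_k)}{\psi_k(x)\,\phi_k(x_k) + (1-\psi_k(x))(1-\phi_k(x_k))},
\]
and my plan is to show that this formula, extended by continuity in $\phi_k$ to the endpoints, is a $C^1$ realization of $S_k$ on the whole cube.

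The required checks are: (a) $\psi_k\in C^1(\R^d)$ with values in $(0,1)$, which is immediate from $s_k\in C^1(\R^d)$ and the smoothness of $\sigma$; (b) $\phi_k$ extends to a $C^1$ function on $[r_0',r_1']$ with $\phi_k(r_0')=0$ and $\phi_k(r_1')=1$; (c) the denominator is strictly positive on $[r_0',r_1']^d$; and (d) the extension agrees with the prescribed boundary values of $S_k$. Step (c) holds because $\psi_k\in(0,1)$ makes both summands nonnegative, and they cannot vanish simultaneously since $\phi_k\in[0,1]$ cannot equal both $0$ and $1$. Step (d) is the substitution $\phi_k\in\{0,1\}$. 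Given (a)--(c), the quotient rule then yields $C^1$-regularity of the formula on the closed cube.

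The only substantive difficulty is (b). Continuity is clear from $b(r_0'{+})=-\infty$, $b(r_1'{-})=+\infty$, and $\sigma(\pm\infty)\in\{0,1\}$. For $C^1$-regularity at the endpoints, I would invoke assumption~(4) on $b$, which is precisely that $L_0:=\lim_{t\to r_0'+}(\sigma\circ b)'(t)$ and $L_1:=\lim_{t\to r_1'-}(\sigma\circ b)'(t)$ exist in $\R$, and then apply the fundamental theorem of calculus on $(r_0'+\epsilon,\,t)$ followed by $\epsilon\to 0$ to identify these limits with the one-sided derivatives of $\phi_k$ at $r_0'$ and $r_1'$. The subtle point is that $b'$ itself diverges at the endpoints while $\sigma'(b)$ decays to $0$, so it is the balance between these two behaviors---built into assumption~(4)---that guarantees the one-sided derivatives of $\phi_k$ exist and coincide with the limits of $\phi_k'$. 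Once this balance is established, assembling (a)--(d) completes the argument, and $S=(S_1,\ldots,S_d)\in C^1([r_0',r_1']^d;[0,1]^d)$.
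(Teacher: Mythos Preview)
Your proposal is correct and takes a genuinely different route from the paper. The paper establishes $C^1$-regularity of $S_d$ by direct computation: it writes out the partial derivatives $\partial_i S_d$ for $i<d$ and $\partial_d S_d$ separately on the interior and on the boundary faces $x_d\in\{r_0',r_1'\}$, then checks continuity of each at the boundary through a ``Subclaim'' computing limits such as $\lim \sigma(s_d+b)/\sigma(b)=e^{s_d}$ together with L'H\^opital's rule. This involves several case distinctions and explicit asymptotic calculations.

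Your approach sidesteps all of that. The sigmoid addition formula
\[
  \sigma(u+v)=\frac{\sigma(u)\sigma(v)}{\sigma(u)\sigma(v)+(1-\sigma(u))(1-\sigma(v))}
\]
neatly separates the two sources of behavior in $S_k=\sigma(s_k+b(x_k))$: the globally $C^1$ factor $\psi_k=\sigma\circ s_k$ and the one-variable factor $\phi_k=\sigma\circ b$ that carries the boundary singularity. Once you show $\phi_k$ extends to $C^1([r_0',r_1'])$ (which is exactly what assumption~(4) is designed to give, and your argument via integrability/MVT is sound), the denominator is bounded away from zero on the closed cube because $\psi_k\in(0,1)$ forces $\phi_k$ to be simultaneously $0$ and $1$ for it to vanish. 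The quotient rule then hands you $C^1$-regularity for free, with the prescribed boundary values dropping out of $\phi_k\in\{0,1\}$.

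What each buys: the paper's hands-on computation produces explicit formulas for the partial derivatives at the boundary (e.g., $\partial_d S_d = e^{\pm s_d}\lim(\sigma\circ b)'$), which could be informative if one cared about those values; your rewriting is shorter, treats all $k$ uniformly, avoids L'H\^opital and the tangential/normal case split, and makes transparent why assumption~(4) on $b$ is precisely the right hypothesis. One minor remark: your step~(b) is cleaner via the mean value theorem (continuity on $[r_0',t]$, differentiability on $(r_0',t)$, and $\phi_k'\to L_0$ force the difference quotient to converge to $L_0$) than via the FTC route you sketch, though the latter also works once you note the limit of $\phi_k'$ implies local boundedness and hence integrability.
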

\begin{proof}
It is enough to show that 
$S_d\colon [r_0', r_1']^d\to [0,1]$ is a $C^1$-function. 
We prove that for any $i\in [d]$, the $i$-th partial derivative of $S_d$ exists and that it is continuous on $[r_0', r_1']^d$. 
First, for $i\in [d-1]$, we consider the $i$-th partial derivative. \\
\textbf{Claim 1}. 
\begin{align*}
    \frac{\partial S_d}{\partial x_i}(x)=
    \begin{cases}
    \frac{d\sigma}{d x}(s_i(x)+b(x_d))\frac{\partial s_d}{\partial x_i}(x) & (x\in [r_0',r_1']^{d-1}\times (r_0', r_1'))\\
    0 & (x_d= r_0', r_1')
    \end{cases}
\end{align*}
In fact, 
for $x\in [r_0', r_1']^{d-1}\times (r_0',r_1')$, we have  
\begin{align*}
    \frac{\partial S_d}{\partial x_i}(x)
    =\frac{\partial (\sigma \circ \tilde{s_d})}{\partial x_i}(x)
    =\frac{d\sigma }{dx} (s_d(x)+b(x_d)) \left(\frac{\partial s_d}{\partial x_i}(x)+0\right). 
\end{align*}
For $x=(x_{\leq {d-1}}, r_0')$, we have 
\begin{align*}
    \frac{\partial S_d}{\partial x_i}(x)
    &=\lim_{h\to 0}\frac{S_d(x_{\leq i-1}, x_i+h, x_{i+1},\cdots, x_{d-1}, r_0')-S_d(x_{\leq d-1}, r_0')}{h}\\
    &=\lim_{h\to 0}\frac{0-0}{h}=0
\end{align*}
Here, note that by the definition of $S_d$, the notation $S_d(x_{\leq i-1}, x_i+h, x_{i+1},\cdots, x_{d-1}, r_0')$ makes sense even if $x_i=r_0'$ or $x_i=r_1'$. 
We can verify the case $x=(x_{\leq d-1}, r_1')$ similarly.

Next, we show that $\frac{\partial S_d}{\partial x_i}$ is continuous. 
We take any $x_{\leq d-1}\in [r_0',r_1']^{d-1}$.
Since we have $\lim_{x\to r_0'}b(x)=-\infty$, $\lim_{x\to r_1'}b(x)$,  $\lim_{x\to \pm \infty} \frac{d\sigma }{dx}(x)=0$, and 
$|\frac{\partial s_d}{\partial x_I}(x)|<\infty$ $(x\in [r_0', r_1']^d)$, 
we obtain 
\begin{align*}
    \lim_{x\to (x_{d-1}, r_0')}\frac{d\sigma}{d x}(s_i(x)+b(x_d))\frac{\partial s_d}{\partial x_i}(x)=0,\\
    \lim_{x\to (x_{d-1}, r_1')}\frac{d\sigma}{d x}(s_i(x)+b(x_d))\frac{\partial s_d}{\partial x_i}(x)=0.
\end{align*}
Therefore, the partial derivative $\frac{\partial S_d}{\partial x_i}(x)$ is continuous on $[r_0', r_1']^d$ for $i\in [d-1]$. 

Next, we consider the $d$-th derivative of $S_d$. \\
\textbf{Claim 2.}
\begin{align*}
    \frac{\partial S_d}{\partial x_d}(x)=
    \begin{cases}
    \frac{d\sigma}{d x}(s_d(x)+b(x_d)) \left( \frac{\partial s_d}{\partial x_d }(x)+\frac{d b}{d x}(x_d)\right) & (x\in [r_0', r_1']^{d-1}\times (r_0', r_1'))\\
    e^{s_d(x_{\leq d-1},r_0')}\lim_{x\to r_0'+0}\frac{d (\sigma\circ b)}{dx}(x) & (x_d=r_0')\\
    e^{-s_d(x_{\leq d-1}, r_1')}\lim_{x\to r_1'-0}\frac{d (\sigma\circ b)}{dx}(x) & (x_d=r_1')
    \end{cases}
\end{align*}
We verify Claim 2. 
Since it is clear for the case $x\in [r_0', r_1']^{d-1}\times (r_0',r_1')$ by the definition of $S_k$, we consider the case $x_d=r_0', r_1'$. \\
\textbf{Subclaim.}
For $x_{\leq d-1}'\in [r_0',r_1']^{d-1}$, 
\begin{align*}
    \lim_{x\to (x_{\leq d-1}', r_0')}&\frac{\sigma (s_d(x)+b(x_d))}{\sigma(b(x_d))}=e^{s_d(x_{\leq d-1}',r_0')}\\
    \lim_{x\to (x_{\leq d-1}', r_1')}&\frac{\sigma(s_d(x)+b(x_d))-1}{\sigma(b(x_d))-1}=e^{-s_d(x_{\leq d-1}', r_1')}
\end{align*}
We verify this subclaim. From $\lim_{x\to r_0'}b(x)=-\infty$, we have
\begin{align*}
    \frac{\sigma (s_d(x)+b(x_d))}{\sigma(b(x_d))}
    &=\frac{1+e^{-b(x_d)}}{1+e^{-s_d(x)-b(x_d)}}
    =\frac{e^{b(x_d)}+1}{e^{b(x_d)}+e^{-s_d(x)}}\\
    &\to \frac{1}{e^{-s_d(x_{\leq d-1}',r_0')}}=e^{s_d(x_{\leq d-1}',r_0')} \quad (x \to (x_{\leq {d-1}}', r_0'))
\end{align*}
Similarly, from $\lim_{x\to r_1'}b(x)=\infty$, we have 
\begin{align*}
    \frac{\sigma(s_d(x)+b(x_d))-1}{\sigma(b(x_d))-1}
    &=e^{-s_d(x)}\frac{1+e^{-b(x_d)}}{1+e^{-s_d(x)-b(x_d)}}\\
    &\to e^{-s_d(x_{\leq d-1}, r_1')} \quad (x\to (x_{\leq d-1}',r_1')).
\end{align*}
Therefore, our subclaim has been proved. 
By using L'H\^opital's rule, we have 
\begin{align*}
\lim_{h\to +0}\frac{\sigma(b(r_0'+h))}{h}=\lim_{x\to r_0'}\frac{d(\sigma\circ b)}{dx} (x),\quad 
\lim_{x\to r_1'}\frac{\sigma(b(r_1'+h))-1}{h}=\lim_{x\to r_1'}\frac{d(\sigma\circ b)}{dx}(x). 
\end{align*}
Then, from Subclaim, we obtain 
\begin{align*}
   \frac{\partial S_d}{\partial x_d}(x_{\leq d-1}, r_0')
   &=\lim_{h\to +0} \frac{\sigma(s_d(x_{\leq d-1},r_0'+h)+b(r_0'+h))-0}{h}\\
   &=\lim_{h\to +0} \frac{\sigma(s_d(x_{\leq d-1}, r_0'+h)+b(r_0'+h))}{\sigma(b(r_0+h))}\cdot \frac{\sigma(b(r_0'+h))}{h}\\
   &=e^{s_d(x_{\leq d-1}, r_0')}\lim_{x\to r_0'+0}\frac{d(\sigma\circ b)}{dx}(x), \\
   \frac{\partial S_d}{\partial x_d}(x_{\leq d-1}, r_1')
&=\lim_{h\to -0}\frac{\sigma(s_d(x_{\leq d-1},r_1'+h)+b(r_1'+h))-1}{h}\\
&=\lim_{h\to -0}\frac{\sigma(s_d(x_{\leq d-1},r_1'+h)+b(r_1'+h))-1}{\sigma(b(r_1'+h))-1}\cdot \frac{\sigma(b(r_1'+h))-1}{h}\\
&=e^{s_d(x_{\leq d-1},r_1')}\lim_{x\to r_1'}\frac{d(\sigma \circ b)}{dx}(x).  
\end{align*}
Therefore, Claim 2 was proved. 

Finally, we verify $\frac{\partial S_d}{\partial x_d}(x)$ is continuous on $[r_0',r_1']^d$. 
Fix $x'_{\leq d-1}\in [r_0', r_1']^{d-1}$. 
Since we have 
$\lim_{x\to (x'_{\leq d-1}, r_0')}\frac{d\sigma}{dx}(\sigma_d(x)+b(x_d))\frac{\partial s_d}{\partial x_d}(x)=0$, 
from Claim 2, it is enough to show the following:\\
\textbf{Claim 3.}
\begin{align*}
    \lim_{x\to (x_{\leq d-1}', r_0')}\frac{d \sigma}{d x}(s_d(x)+b(x_d))\frac{db}{d x}(x_d)
    &=e^{s_d(x_{\leq d-1},r_0')}\lim_{x\to r_0'+0}\frac{d (\sigma\circ b)}{dx}(x), \\
    \lim_{x\to (x_{\leq d-1}', r_1')}\frac{d \sigma}{d  x}(s_d(x)+b(x_d))\frac{db}{dx}(x_d)
    &=e^{-s_d(x_{\leq d-1}, r_1')}\lim_{x\to r_1'-0}\frac{d (\sigma\circ b)}{dx}(x). 
\end{align*}
We verify Claim~3. 
We have
\begin{align*}
\frac{d \sigma}{d x}(s_d(x)+b(x_d))\frac{db}{d x}(x_d)
&=\frac{\frac{d\sigma}{dx}(s_d(x)+b(x_d))}{\frac{d\sigma}{dx}(b(x_d))}\frac{d\sigma}{dx}(b(x_d))\frac{db}{dx}(x_d)\\
&=\frac{\frac{d\sigma}{dx}(s_d(x)+b(x_d))}{\frac{d\sigma}{dx}(b(x_d))}\frac{d(\sigma \circ b)}{dx}(x_d). 
\end{align*}
Since we have $\frac{d\sigma}{dx}(x)= \sigma(x)(1-\sigma(x))$, from Subclaim above, Claim 3 follows from 
\begin{align*}
\frac{\frac{d\sigma}{dx}(s_d(x)+b(x_d))}{\frac{d\sigma}{dx}(b(x_d))}
&=\frac{\sigma(s_d(x)+b(x_d))}{\sigma (b(x_d))}\cdot \frac{1-\sigma(s_d(x)+b(x_d))}{1-\sigma(b(x_d))}\\
&\to \begin{cases} 
e^{s_d(x_{\leq d-1}', r_0')} & (x\to (x_{\leq d-1}', r_0'))\\
e^{-s_d(x_{\leq d-1}', r_1')} & (x\to (x_{\leq d-1}', r_1'))
\end{cases}. 
\end{align*}
Therefore, we proved the continuity of $\frac{\partial S_d}{\partial x_d}(x)$. 
\end{proof}
 
\newcommand{\SoSTransformer}[2]{\mathfrak{B}_{#1}(#2)}
\subsection{Sum-of-squares polynomial flows (SoS flows)}
In this section, we prove that \emph{sum-of-squares polynomial flows} \cite{DBLP:conf/icml/JainiSY19} yield \ARFINNs{} with the $\sup$-universal approximation property for $\ConeOneDimTriangular$ (hence for $\CinftyOneDimTriangular$).
Even though \citet{DBLP:conf/icml/JainiSY19} claimed the distributional universality of the SoS flows by providing a proof sketch based on the univariate Stone-Weierstrass approximation theorem, we regard the sketch to be invalid or at least incomplete as it does not discuss the smoothness of the coefficients, i.e., whether the polynomial coefficients can be realized by continuous functions. Here, we provide complete proof that takes an alternative route to prove the $\sup$-universality of the SoS flows via the multivariate Stone-Weierstrass approximation theorem.

\begin{definition}
A \emph{sum-of-squares polynomial flow} (SoS flow) \citep[Equation~(9)]{DBLP:conf/icml/JainiSY19} is a flow layer \(g = (g_1, \ldots, g_d) \colon \ReD \to \ReD\) of the following form:
\begin{equation*}\begin{aligned}
g_k(\x) &:= \SoSTransformer{2r+1}{x_k; C_k(\upto{k-1}{\x})}, \\
\SoSTransformer{2r+1}{z; (c, \ba)} &:= c + \int_0^z \sum_{b=1}^B\left(\sum_{l=0}^r a_{l, b} u^l\right)^2 du,
\end{aligned}\end{equation*}
where \(C_k \colon \Re^{k-1} \to \Re^{B (r+1) + 1}\) is a neural network, \(r \in \Na \cup \{0\}\), and \(B \in \Na\).
We define \(\FSoS\) to be the set of all possible SoS flows.
\end{definition}
\begin{proposition}
The elements of \(\FSoS\) are locally bounded, and \(\INN{\FSoS}\) is a \(\sup\)-universal approximator for \(\ConeOneDimTriangular\).
\end{proposition}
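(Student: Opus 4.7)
Every element of $\FSoS$ is continuous (being a composition of polynomial evaluation, squaring, integration, and a neural network) and hence locally bounded, so only the $\sup$-universality for $\ConeOneDimTriangular$ requires work. Let $\tau\in\ConeOneDimTriangular$ be written as $\tau(\x)=(\upto{d-1}{\x},\tau_d(\x))$ with $\tau_d$ compactly supported, $C^1$, and strictly increasing in $x_d$; fix a compact $K\subset\R^d$ and $\varepsilon>0$, and without loss of generality assume $K=[-R,R]^d$ for some $R>0$. The plan is to build a single SoS flow $g=(g_1,\dots,g_d)\in\FSoS$ whose first $d-1$ components realize the identity (achieved by choosing the coefficient vector emitted by $C_k$ to be the constant $(0,1,0,\dots,0)$, so that $\sum_b(\sum_l a_{l,b}u^l)^2\equiv 1$ and $\int_0^{x_k}1\,du=x_k$) and whose last component $g_d$ approximates $\tau_d$ to within $\varepsilon$ in $\supKnorm{\cdot}$.

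For the last coordinate we use the integral representation
$$\tau_d(\x)=c(\upto{d-1}{\x})+\int_0^{x_d}\psi(\upto{d-1}{\x},u)\,du,$$
where $c(\bm{y}):=\tau_d(\bm{y},0)$ and $\psi(\bm{y},u):=\partial_u\tau_d(\bm{y},u)\ge 0$, the inequality following from $\tau_d$ being $C^1$ and strictly increasing in its last argument. Pick $\delta>0$ with $\delta R<\varepsilon/3$. Since $\sqrt{\psi+\delta}$ is strictly positive and continuous on the compact set $[-R,R]^d$, the multivariate Stone--Weierstrass theorem produces a polynomial $P(\bm{y},u)=\sum_{l=0}^r p_l(\bm{y})u^l$, with each $p_l$ a polynomial in $\bm{y}$, approximating $\sqrt{\psi+\delta}$ in $\sup$-norm to any prescribed precision. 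Because squaring is Lipschitz on the bounded range of $P$, the quantity $\supRangenorm{[-R,R]^d}{P^2-(\psi+\delta)}$ is correspondingly small; integrating in $u$ over $[0,x_d]\subset[-R,R]$ and adding $c(\bm{y})$ gives a function $\tilde g_d(\x):=c(\bm{y})+\int_0^{x_d}P(\bm{y},u)^2\,du$ with $\supKnorm{\tilde g_d-\tau_d}<\varepsilon/2$.

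The function $\tilde g_d$ is precisely of SoS form with $B=1$ and coefficient vector $(c,p_0,\dots,p_r)$, except that this vector is made of explicit polynomials in $\bm{y}$ rather than outputs of a neural network. Invoking the $\sup$-universal approximation property of the underlying neural network class (e.g.\ MLPs with continuous activations) on the compact $[-R,R]^{d-1}$, we replace $(c,p_0,\dots,p_r)$ by neural network outputs $(\tilde c,\tilde p_0,\dots,\tilde p_r)$ approximating them uniformly to arbitrary precision. The continuity of squaring on the bounded range of these coefficients combined with the boundedness of the integration interval propagates this perturbation to an arbitrarily small change in $g_d$, delivering the desired $g\in\INN{\FSoS}$ with $\supKnorm{g-\tau}<\varepsilon$.

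The main obstacle is the sum-of-squares representation of a non-negative continuous function with continuous dependence on the auxiliary variables $\bm{y}$; the sketch in \cite{DBLP:conf/icml/JainiSY19} invokes only univariate Stone--Weierstrass and leaves open whether the resulting polynomial coefficients in $u$ depend continuously on $\bm{y}$. Approximating $\sqrt{\psi+\delta}$ directly as a polynomial in all $d$ variables via multivariate Stone--Weierstrass bypasses this cleanly, since rearranging such a polynomial as a polynomial in $u$ automatically yields polynomial (hence continuous) coefficient functions of $\bm{y}$ that can subsequently be absorbed into the neural network $C_d$.
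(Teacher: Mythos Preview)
Your proof is correct and follows essentially the same route as the paper: both approximate the square root of $\partial_{x_d}\tau_d$ by a polynomial in all $d$ variables via the multivariate Stone--Weierstrass theorem, collect the result as a polynomial in $u=x_d$ with polynomial coefficients in $\upto{d-1}{\x}$, square and integrate, and finally replace the polynomial coefficients and the constant term by neural-network outputs. The only cosmetic difference is your $+\delta$ regularization of $\psi$: since $\tau$ is a $C^1$-diffeomorphism (not merely $C^1$ and strictly increasing), the Jacobian determinant $\psi=\partial_{x_d}\tau_d$ is in fact strictly positive on $[-R,R]^d$, so $\sqrt{\psi}$ is already continuous and the paper applies Stone--Weierstrass to it directly; your extra $\delta$ is harmless but unnecessary.
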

\begin{proof}
The elements of \(\FSoS\) are continuous, hence locally bounded.
The \(\sup\)-universality follows from the Stone-Weierstrass approximation theorem as in the below.
Let \(s = (s_1, \ldots, s_d) \in \ConeOneDimTriangular\), a compact subset \(K \subset \ReD\), and \(\epsilon > 0\) be given.
Then, there exists \(R > 0\) such that \(K \subset [-R, R]^d\).
Since \(s_d(\x)\) is strictly increasing with respect to \(x_d\) and \(s\) is \(C^1\), we have \(\eta(\x) := \frac{\partial s_d}{\partial x_d}(\x) > 0\) and \(\eta\) is continuous.
Therefore, we can apply the Stone-Weierstrass approximation theorem \citep[Corollary~4.50]{FollandReal1999} to \(\sqrt{\eta(\x)}\):
for any \(\delta > 0\), there exists a polynomial \(\pi(x_1, \ldots, x_d)\) such that \(\supRangenorm{[-R, R]^d}{\sqrt{\eta}- \pi} < \delta\).
Then, by rearranging the terms, there exist \(r \in \Na\) and polynomials \(\xi_{l}(x_1, \ldots, x_{d-1})\) such that \(\pi(x_1, \ldots, x_d) = \sum_{l=0}^r \xi_l(x_1, \ldots, x_{d-1})x_d^l\).
Now, define
\begin{equation*}\begin{aligned}
\tilde g_d(\x) &:= s_d(\upto{d-1}{\x}, 0) + \int_0^{x_d} (\pi(\upto{d-1}{\x}, u))^2 du \\
&= s_d(\upto{d-1}{\x}, 0) + \int_0^{x_d} \left(\sum_{l=0}^r \xi_l(x_1, \ldots, x_{d-1})u^l\right)^2 du
\end{aligned}\end{equation*}
and \(\tilde g(\x) := (x_1, \ldots, x_{d-1}, \tilde g_d(\x))\).
Then,
\begin{equation*}\begin{aligned}
\supKnorm{s - \tilde g} &= \sup_{\x \in K} \left|s_d(\x) - \tilde g_d(\x)\right| \\
&= \sup_{\x \in K} \left|s_d(\upto{d-1}{\x}, 0) + \int_0^{x_d} \eta(\upto{d-1}{\x}, u) du - \tilde g_d(\x) \right| \\
&= \sup_{\x \in K} \left|\int_0^{x_d} (\sqrt{\eta(\upto{d-1}{\x}, u)}^2 - \pi(\upto{d-1}{\x}, u)^2) du \right| \\
&\leq R \cdot \sup_{\x \in [-R, R]^d} \left|\sqrt{\eta(\x)}^2 - \pi(\x)^2\right| \\
&= R \cdot \sup_{\x \in [-R, R]^d} |\sqrt{\eta(\x)} + \pi(\x)| \cdot |\sqrt{\eta(\x)} - \pi(\x)| \\
&\leq R \left(\sup_{\x \in [-R, R]^d} 2\sqrt{\eta(\x)} + \delta\right) \delta,
\end{aligned}\end{equation*}
where we used
\begin{equation*}\begin{aligned}
\sup_{\x \in [-R, R]^d}|\sqrt{\eta(\x)} + \pi(\x)| &\leq \sup_{\x \in [-R, R]^d}|2\sqrt{\eta(\x)}| + |\sqrt{\eta(\x)} - \pi(\x)| \\
&\leq \sup_{\x \in [-R, R]^d} 2\sqrt{\eta(\x)} + \delta.
\end{aligned}\end{equation*}
It is straightforward to show that there exists \(g \in \FSoS\) such that \(\supKnorm{\tilde g - g} < \frac{\epsilon}{2}\) by approximating each of \(s_d(\upto{d-1}{\x})\) and \(\xi_l\) on \(K\) using neural networks.
Finally, taking \(\delta\) to be small enough so that \(\supKnorm{s - \tilde g} < \frac{\epsilon}{2}\) holds, the assertion is proved.
\end{proof}

\section{Using permutation matrices instead of \(\FLin\) in the definition of \(\INN{\ARFINNFlow}\)}
\label{sec:appendix:elementary matrix}
In terms of representation power, there is no essential difference between using the permutation group and using the general linear group in Definition~\ref{def: INNM}.
In fact, one can express the elementary operation matrices (hence the regular matrices) by combining affine coupling flows, permutations.

From this result, we can see that employing \(\FLin\) in Definition~\ref{def: INNM} instead of the permutation matrices is not an essential requirement for the universal approximation properties to hold.
For this reason, we believe that the empirically reported difference in the performances of Glow \cite{KingmaGlow2018} and RealNVP \cite{DinhDensity2016a} is mainly in the efficiency of approximation rather than the capability of approximation. 
\begin{lemma}
\label{lem:appendix: sign flip and permutations}
We have 
\begin{align}
    \INN{\FSACFH}=\{ W_1\circ g_1 \circ \cdots \circ W_n \circ g_n ~:~ g_i\in\FSACFH, W_i\in \mathfrak{S}_d\},\label{INNHACF=HACF + permutation}
\end{align}
where $\mathfrak{S}_d$ is the permutation group of degree $d$.
\end{lemma}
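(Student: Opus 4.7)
The plan is as follows. The inclusion $(\supseteq)$ is immediate since $\mathfrak{S}_d \subset \FLin$. For $(\subseteq)$, it suffices to show that every $W \in \FLin$ admits a finite factorization into elements of $\FSACFH \cup \mathfrak{S}_d$: once this is established, each $W_i$ appearing in the canonical expression $W_1\circ g_1\circ\cdots\circ W_n\circ g_n$ of an element of $\INN{\FSACFH}$ can be replaced by its factorization, and the resulting composition can be rewritten in the required alternating $\mathfrak{S}_d$--$\FSACFH$ pattern by inserting identity elements where necessary (both the identity permutation and the identity \ACF{}, corresponding to $s=0$, $t=0$, belong to their respective classes). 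Throughout, I tacitly assume that $\ACFINNUniversalClass$ contains the constant functions on $\ReDminus$ and the coordinate projections $\upto{d-1}{\x} \mapsto c\, x_j$ for $c \in \Re$ and $j \in [d-1]$, which holds for all the natural choices of $\ACFINNUniversalClass$ considered in this paper.

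To factorize $W(\x) = A\x + b$ with $A \in \FGL$ and $b \in \ReD$, I handle the translation and linear parts separately. The translation $\x \mapsto \x + b$ is realized coordinate by coordinate: for each $i$, conjugate the constant-shift \ACF{} $\Aff{d-1}{0}{b_i}$ by a permutation that sends coordinate $i$ to the last position. For the linear part, I apply an LU-type factorization with row permutation to write $A = PLDU$, where $P \in \mathfrak{S}_d$, $L$ is lower-unipotent, $U$ is upper-unipotent, and $D$ is an invertible diagonal matrix. The unipotent factors $L$ and $U$ further decompose into elementary transvections $I + c\, e_i e_j^\top$ ($i\neq j$); the transvection $I + c\, e_d e_j^\top$ is precisely $\Aff{d-1}{0}{t}$ with $t(\upto{d-1}{\x}) = c\, x_j \in \ACFINNUniversalClass$, and a general transvection follows by conjugation with a permutation that moves the target row to index $d$.

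It remains to handle the diagonal factor $D = \mathrm{diag}(\lambda_1, \ldots, \lambda_d)$. A single positive rescaling is realized by permuting the target coordinate to position $d$ and applying $\Aff{d-1}{s_0}{0}$ with constant $s_0 = \log \lambda$. The only delicate point, which I view as the main obstacle, is realizing a single sign flip $\mathrm{diag}(1,\ldots,1,-1,1,\ldots,1)$: since $\exp\circ s > 0$, negative scalings cannot come directly from an \ACF{}. I resolve this by restricting to a pair of coordinates and using the classical three-shear decomposition of a quarter-turn rotation,
\begin{equation*}
\begin{pmatrix} 0 & 1 \\ -1 & 0 \end{pmatrix}
= \begin{pmatrix} 1 & 1 \\ 0 & 1 \end{pmatrix}
  \begin{pmatrix} 1 & 0 \\ -1 & 1 \end{pmatrix}
  \begin{pmatrix} 1 & 1 \\ 0 & 1 \end{pmatrix},
\end{equation*}
combined with the transposition of those two coordinates; a direct computation shows that this rotation composed with the transposition equals the single sign flip on the two-coordinate subspace. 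Each lower shear in the decomposition is directly an \ACF{}, while each upper shear is the conjugate of a lower shear by the transposition of the two coordinates (so it belongs to $\langle \FSACFH \cup \mathfrak{S}_d\rangle$). Assembling the translation, the transvections from $L$ and $U$, the positive coordinate rescalings, and the sign flips produces the required decomposition of $W$, completing the proof.
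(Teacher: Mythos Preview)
Your proposal is correct and follows essentially the same route as the paper: reduce to factorizing $\FLin$ into $\FSACFH$ and permutations, handle translations and shears (row-addition elementary matrices) directly as \ACFs{} conjugated by permutations, positive scalings via the $\exp(s)$ factor, and resolve the single non-obvious case---the sign flip---by a three-shear decomposition of a $2\times 2$ rotation combined with a transposition. The paper presents the same sign-flip identity with the transpositions already interleaved between lower shears, whereas you phrase it via the classical rotation-as-three-shears identity and then conjugate the upper shears; the two are equivalent. Your explicit mention of the assumption that $\ACFINNUniversalClass$ contains constants and coordinate projections is a useful clarification the paper leaves implicit.
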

\begin{proof}
Since any translation operator (i.e., addition of a constant vector) can be easily represented by the elements of $\FSACFH$ and permutations, it is enough to show that any element of ${\rm GL}(n,\R)$ can be realized by a finite composition of elements of $\FSACFH$ and $\mathfrak{S}_d$.
To show that, it is sufficient to consider only the elementary matrices. 
Row switching comes from $\mathfrak{S}_d$.
Moreover, element-wise sign flipping can be described by a composition of finite elements of $\FSACFH$. To see this, first observe that
\begin{align}
\left(
\begin{array}{cc}
-1&0\\
0&1
\end{array}
\right)
=
\left(
\begin{array}{cc}
1&0\\
1&1
\end{array}
\right)
\left(
\begin{array}{cc}
0&1\\
1&0
\end{array}
\right)
\left(
\begin{array}{cc}
1&0\\
-1&1
\end{array}
\right)
\left(
\begin{array}{cc}
0&1\\
1&0
\end{array}
\right)
\left(
\begin{array}{cc}
1&0\\
1&1
\end{array}
\right)
\left(
\begin{array}{cc}
0&1\\
1&0
\end{array}
\right)
\nonumber
\end{align}
holds.
Now, any lower triangular matrix with positive diagonals can be described by a composition of finite elements of $\FSACFH$.
Therefore, any diagonal matrix whose components are $\pm1$ can be described by a composition of elements in $\FSACFH$ and $\mathfrak{S}_d$.
Therefore, any affine transform is an element of the right hand side of \eqref{INNHACF=HACF + permutation}.
\end{proof}

\section{Other related work}
\label{sec:appendix:other related work}

In this section, we elaborate on the relation of the present paper and the existing literature.

\paragraph{Approach to make universal approximators by augmenting the dimensionality.}
\citet{ZhangApproximation2019a} showed that invertible residual networks (i-ResNets) \citep{pmlr-v97-behrmann19a} and neural ordinary differential equations (NODEs) \citep{ChenNeural2018a,DupontAugmented2019a} can be turned into universal approximators of homeomorphisms by increasing the dimensionality and padding zeros.

Given that, one may wonder if we can apply a similar technique to augment \ARFINN{} to have the universality, which can bypass the proof techniques developed in this study.
However, there is a problem that the approach can undermine the exact invertibility of the model: unless the model is ideally trained so that it always outputs zeros in the zero-padded dimensions, the model can no longer represent an invertible map operating on the original dimensionality.
On the other hand, we showed the universality properties of certain \ARFINNs{} without introducing the complication arising from the dimensionality augmentation.

\end{appendices}
\end{document}